\newcommand{\lr}[1]{\left(#1\right)}
\newcommand{\lrs}[1]{\left[#1\right]}
\newcommand{\lrc}[1]{\left\{#1\right\}}
\newcommand{\Dmin}{\Delta_{\text{min}}}
\newcommand{\argmin}{\text{argmin}}
\newcommand{\UCB}{\text{UCB}}
\newcommand{\LCB}{\text{LCB}}
\newcommand{\Nin}{N^{\mathrm{in}}}
\newcommand{\Nout}{N^{\mathrm{out}}}
\newcommand\numberthis{\addtocounter{equation}{1}\tag{\theequation}}
\newcommand{\prob}[1][]{\ifthenelse{\isempty{#1}{}}{\mathbb P}{\mathbb P \left[#1\right]}}
\newcommand{\E}[1][]{\ifthenelse{\isempty{#1}{}}{\mathbb E}{\mathbb E \left[#1\right]}}
\newcommand{\1}[1][]{\ifthenelse{\isempty{#1}{}}{\mathds 1}{\mathds 1 \left[#1\right]}}
\newcommand{\tmin}{t_{\mathrm{min}}}
\newcommand{\mas}{\mathrm{mas}}
\DeclareRobustCommand{\VAN}[3]{#2} 
\newtheorem{prop}{Proposition}
\newtheorem*{prop*}{Proposition}
\newtheorem{theorem}{Theorem}
\newtheorem{lemma}{Lemma}
\newtheorem*{lemma*}{Lemma}
\newtheorem{definition}{Definition}
\title{A Near-Optimal Best-of-Both-Worlds Algorithm\\
for Online Learning with Feedback Graphs}
\author{
 Chloé Rouyer \\
 Dept. of Computer Science\\
 University of Copenhagen, Denmark \\
  \texttt{chloe@di.ku.dk} \\
   \And
   Dirk van der Hoeven \\
   Dept. of Computer Science \\
   Università degli Studi di Milano, Italy \\
   \texttt{dirk@dirkvanderhoeven.com} \\
   \AND
  Nicolò Cesa-Bianchi  \\
  DSRC \& Dept. of Computer Science \\
   Università degli Studi di Milano, Italy \\
  \texttt{nicolo.cesa-bianchi@unimi.it}
   \And
    Yevgeny Seldin\\
  Dept. of Computer Science\\
University of Copenhagen, Denmark \\
   \texttt{seldin@di.ku.dk} \\
}
\begin{document}

\maketitle

\begin{abstract}
We consider online learning with feedback graphs, a sequential decision-making framework where the learner's feedback is determined by a directed graph over the action set. We present a computationally efficient algorithm for learning in this framework that simultaneously achieves near-optimal regret bounds in both stochastic and adversarial environments. The bound against oblivious adversaries is $\tilde{O} (\sqrt{\alpha T})$,
where $T$ is the time horizon and $\alpha$ is the independence number of the feedback graph. The bound against stochastic environments is $O\big(\lr{ \ln T}^2 \max_{S\in \mathcal I\lr{G}} \sum_{i \in S} \Delta_i^{-1}\big)$ 
where $\mathcal I\lr{G}$ is the family of all independent sets in
a suitably defined undirected version of the graph
and $\Delta_i$ are the suboptimality gaps.
The algorithm combines ideas from the EXP3++ algorithm for stochastic and adversarial bandits and the EXP3.G algorithm for feedback graphs with a novel exploration scheme. The scheme, which exploits the structure of the graph to reduce exploration, is key to obtain best-of-both-worlds guarantees with feedback graphs. 
We also extend our algorithm and results to a setting where the feedback graphs are allowed to change over time.
\end{abstract}

\section{Introduction}
\label{sec:intro}

Online learning is a general framework for studying sequential decision-making in unknown environments (see, for example, \citep{CBL06,BCB12, orabona2019modern}). 
We consider a setting where, at each round, the player chooses an action (a.k.a.\ arm) from a fixed set of $K$ actions and incurs the loss associated with the chosen action. The performance of the learner is quantified in terms of regret, which is the difference between the total loss incurred by the learner over the duration of the game, and the smallest cumulative loss obtained by a player that would only ever play the same action throughout the game.

The smallest achievable regret is determined by a number of parameters.
One of these parameters is the amount of feedback that the learner receives at each round. There is a whole spectrum of problems, characterized by the amount of feedback received by the learner. At the one extreme of this spectrum is the bandit setting, where the learner only observes the loss of the action taken. 
At the other extreme is the full information setting, where the learner observes the full loss vector at the end of each round, irrespective of the action played. 

There are two common ways to interpolate between full information and bandit feedback. One is to allow the learner to make a limited number of additional observations without restricting how the additional observations are selected. Then no additional observations correspond to the bandit setting and $K-1$ additional observations correspond to the full information setting. This way of interpolation was proposed by \citet{SBCA14} in two variants, "prediction with limited advice" and "multiarmed bandits with paid observations". It was also studied by \citet{TS18}.

The second way of interpolation, which we focus on in this paper, is via feedback graphs \citep{ACBGMMS14}. In this setting observations of the learner are governed by a feedback graph on the actions. When an action is played, the learner observes the losses of all of its neighbors in the feedback graph. A complete graph corresponds to the full information setting, whereas a graph containing only self-loops corresponds to the bandit setting. This setting has multiple variants, depending on whether the graph is directed or undirected, observed or unobserved, static or dynamic.

Another important parameter characterizing online learning problems is the type of environment. The two primary types that we focus on are stochastic and adversarial environments. In stochastic environments each action is associated with a fixed, but unknown distribution, and in each round the loss of each action is sampled independently from the corresponding distribution.
In adversarial environments the loss sequence is chosen arbitrarily. We consider oblivious adversarial environments, where the loss sequences are chosen independently of the actions taken by the learner. 

For a long time stochastic and adversarial environments where studied separately, but in practice the exact nature of environment is rarely known. In recent years this has led to a growing interest in ``best-of-both-worlds'' algorithms that are robust against adversarial loss sequences and, at the same time, provide tighter regret guarantees in the stochastic regime. Most work has focused on the bandit setting \citep{BS12,SS14,AC16,SL17,WL18}, where the Tsallis-INF algorithm proposed by \citet{ZS19,ZS21} was shown to achieve the optimal regret rates in both stochastic and adversarial regimes, as well as a number of intermediate regimes. The analysis was further improved by \citet{MS21} and \citet{ito2021parameter}. In the full information setting \citet{MG19} have shown that the well-known Hedge algorithm originally designed for the adversarial setting \citep{LW94} also achieves the optimal stochastic regret. Best-of-both-world results also spilled over to other domains, including additional approaches to full information games and online convex optimization \citep{koolen2016combining,VanErven2021metagrad,negrea2021minimax}, decoupled exploration and exploitation \citep{RS20}, combinatorial bandits \citep{zimmert2019beating}, bandits with switching costs \citep{rouyer2021algorithm}, MDPs \citep{JL20,JHL21}, and linear bandits \citep{LLW+21}.

In the context of online learning with feedback graphs the only best-of-both-worlds result known to us is by \citet{EK21} for undirected graphs. They present an intricate FTRL-based algorithm with a regularization function that is a product of the  Tsallis and Shannon entropies.
The algorithm simultaneously enjoys an $O\big(\sqrt{\chi T} \big(\ln(KT)\big)^2\big)$ pseudo-regret bound in the adversarial regime and an $O\big(\big(\ln(KT)\big)^4 \sum_{k}\frac{\ln T}{\Delta_k})$ pseudo-regret bound in the stochastic regime, where $T$ is the number of prediction rounds, $\chi$ is the clique covering number of the undirected feedback graph, and the summation in the second bound is on the smallest non-zero gap within each clique.

It is tempting to apply an FTRL-based algorithm with Tsallis entropy regularization to online learning with feedback graphs, since Tsallis entropy with power $a = 1/2$ leads to the optimal Tsallis-INF algorithm for the bandit setting \citep{ZS21} and Tsallis entropy with power $a=1$ leads to the Hedge algorithm, which is optimal in the full information setting. However, as also noted by \citet{EK21}, extension of the analysis to online learning with feedback graphs when the power $a \in (1/2, 1)$ is not straightforward and, so far, there was no success in this direction. 
Furthermore, at the moment it is unclear whether it is possible to derive bounds that take further advantage of the graph structure and depend on the independence number of the graph when $a < 1$.

\textbf{Our contribution.}
We significantly extend and improve on the bounds of \citet{EK21}. Our results hold for directed graphs (with self-loops), depend on the independence number of the graph, have a better dependence on $T$ in the stochastic regime, and extend to time-varying feedback graphs.
Our approach takes advantage of the common structure shared by two exponential weights algorithms: EXP3.G \citep{ACBDK15} and EXP3++ \citep{SS14, SL17}, to obtain near-optimal best-of-both worlds guarantees. By using similar ideas as in the proof of the regret bound of EXP3.G, the proposed algorithm adapts to the independence number of the graph. We derive an $O(\sqrt{\tilde \alpha T \ln K})$ pseudo-regret bound against adversarial sequences of losses, where $\tilde \alpha$ is the strong independence number, which is a graph dependent quantity smaller than the clique covering number. For undirected graphs, independence number and strong independence number are equal
and the result matches 
the best known lower bound $\Omega(\sqrt{\alpha T})$ within logarithmic factors \citep{ACBGMMS14}.
In the stochastic setting we use the idea of injected exploration from EXP3++ to estimate the suboptimality gaps of each arm. By introducing a novel dynamic exploration set and an appropriate exploration rate, we derive an almost optimal regret bound in the stochastic setting. Along the way, we also improve the regret bound of EXP3++ in the stochastic bandit setting. Our exploration set is constructed by sorting the arms by ascending gap estimates, and then adding a new arm to the exploration set if the arm cannot be observed by playing another arm previously added to the set. If we play each arm $i$ in the exploration set at a rate $1/\hat \Delta_{i}^2$, where $\hat \Delta_i$ is the gap estimate, then all arms $j$ in the graph are observed with probability at least $1/\hat \Delta_{j}^2$. 

To present our main result we introduce some notations. Let $G = (V, E)$ be a directed feedback graph with independence number $\alpha$ (where the independence number is computed on $G$ ignoring edge directions). We define a strongly independent set on $G$ as an independent set on the subgraph $G' = (V,E')$, where $(i,j) \in E'$ if and only if $(i,j) \in E$ and $(j,i) \in E$. We use $\tilde \alpha$ to denote the strong independence number of $G$,  and  $\mathcal{I}(G)$ to denote the collection of all the strongly independent sets in $G$.
We note that $\alpha = \tilde \alpha$ for undirected graphs and $\alpha \le \tilde \alpha$ for directed graphs.
Now we can present an informal statement of our main result.
\begin{theorem}[Informal] Given a directed feedback graph $G = (V, E)$ with strong independence number $\tilde \alpha$,
there exists an algorithm (Algorithm \ref{alg:exp3G++}) whose pseudo-regret can simultaneously be bounded by
$O (\sqrt{\tilde \alpha T \ln K})$ 
against adversarial loss sequences and by
$O\big(\lr{\ln T}^2 \max_{S\in \mathcal I\lr{G}} \sum_{i \in S} \Delta_i^{-1}\big)$
against stochastic loss sequences.
\end{theorem}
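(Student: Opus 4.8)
The plan is to view Algorithm~\ref{alg:exp3G++} as exponential weights over the arms, fed with graph-based importance-weighted loss estimates $\hat\ell_{t,i}=\frac{\ell_{t,i}}{q_{t,i}}\,\1[i\text{ observed at }t]$ — where $q_{t,i}=\sum_{j\in\Nin(i)}p_{t,j}$ is the probability that arm $i$ is observed — mixed with a small amount of injected exploration supported on a dynamically chosen set $U_t$, and then to derive both bounds from a single regret decomposition estimated differently in the two regimes. The first ingredient is purely combinatorial. Sorting the arms by ascending gap estimate $\hat\Delta_{t,i}$ and greedily inserting an arm into $U_t$ only when it is not already observed by a previously inserted arm produces a set that (i) is strongly independent, hence $|U_t|\le\tilde\alpha$ and $U_t\in\mathcal I(G)$, and (ii) dominates the graph, every arm $j$ being observed by some $i\in U_t$ with $\hat\Delta_{t,i}\le\hat\Delta_{t,j}$. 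Giving each $i\in U_t$ an exploration probability inversely proportional (up to logarithmic factors and truncation) to $t\,\hat\Delta_{t,i}^2$, with an additional floor of order $\sqrt{\ln K/(t\tilde\alpha)}$ to force enough exploration against an adversary, then guarantees that \emph{every} arm $j$ is observed with probability at least of order $(\ln t)/(t\,\hat\Delta_{t,j}^2)$, i.e.\ that $q_{t,j}$ is never too small. This is exactly the property that makes both regimes work: arms with small gaps get sampled often enough to be estimated accurately, while exploration is never wasted on arms that are already observed cheaply.

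Second, I would record the generic exponential-weights bound: for any fixed arm $k$,
\[
\mathbb{E}\bigl[\mathrm{Reg}_T(k)\bigr]\;\le\;\frac{\ln K}{\eta}\;+\;\eta\,\mathbb{E}\Bigl[\sum_t\sum_i p_{t,i}\,\hat\ell_{t,i}^2\Bigr]\;+\;\mathbb{E}\Bigl[\sum_t\sum_{i\in U_t}\varepsilon_{t,i}\Bigr],
\]
valid by unbiasedness of $\hat\ell_{t,i}$ (which needs only $q_{t,i}>0$), with $\eta$ possibly taken time-varying. In the adversarial regime I would bound the stability term through $\sum_i p_{t,i}\hat\ell_{t,i}^2\le\sum_i p_{t,i}/q_{t,i}$ and a variant of the feedback-graph lemma (cf.\ \citet{ACBDK15,ACBGMMS14}), exploiting that exploration lives on a strongly independent dominating set to obtain a per-round bound of order $\tilde\alpha$ up to logarithmic factors; the exploration term is $\sum_t\sum_{i\in U_t}\varepsilon_{t,i}=O\bigl(\sum_t\sqrt{\tilde\alpha\ln K/t}\bigr)=O(\sqrt{\tilde\alpha T\ln K})$; and tuning $\eta_t\asymp\sqrt{\ln K/(t\tilde\alpha)}$ balances the first two terms against the third, yielding the claimed $O(\sqrt{\tilde\alpha T\ln K})$ pseudo-regret bound.

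Third — and this is where the real work lies — the stochastic bound. Its heart is a high-probability statement: on an event of probability $1-O(T^{-1})$, for every round $t$ past a small threshold and every arm $i$ the estimate $\hat\Delta_{t,i}$ stays within a constant factor of the true gap $\Delta_i$. Proving this requires a bootstrapping argument, since the exploration rates — hence the observation probabilities $q_{t,i}$, hence the predictable variation of the martingales $\sum_{s\le t}(\hat\ell_{s,i}-\ell_{s,i})$ that feed the $\UCB$/$\LCB$ estimates defining $\hat\Delta_{t,i}$ — themselves depend on the very quantities one is trying to control; I would run an induction over rounds (or a peeling over the magnitude of $\hat\Delta$) combined with a Bernstein/Freedman inequality, using the $(\ln t)/(t\hat\Delta^2)$ observation floor to control the variance. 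I expect this self-referential concentration to be the main obstacle, together with the accompanying control of the exponential-weights component, which must be shown (as in EXP3++~\citep{SS14}, and here with a tighter analysis that improves the logarithmic dependence) to put little enough mass on suboptimal arms that its contribution to the stochastic regret is absorbed into the exploration term. Granting these, the rest is bookkeeping: on the good event the injected-exploration cost is at most $\sum_t\sum_{i\in U_t}\varepsilon_{t,i}\Delta_i\lesssim\sum_t\sum_{i\in U_t}\frac{\ln t}{t\,\Delta_i}\lesssim(\ln T)^2\max_{S\in\mathcal I(G)}\sum_{i\in S}\Delta_i^{-1}$, where we used $\sum_{t\le T}\frac{\ln t}{t}=O((\ln T)^2)$ together with the fact that $U_t$ is a strongly independent set at each round; adding the $O(1)$ contribution of the bad event then gives the $O\bigl((\ln T)^2\max_{S\in\mathcal I(G)}\sum_{i\in S}\Delta_i^{-1}\bigr)$ bound. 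Finally, since every step above is phrased per-round in terms of the current feedback graph, replacing $G$ by $G_t$ throughout and $\tilde\alpha$ by $\max_t\tilde\alpha_t$ extends the analysis to time-varying feedback graphs.
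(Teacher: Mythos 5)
Your overall architecture matches the paper's: the same greedy, gap-sorted exploration set with the domination-plus-gap-ordering and strong-independence properties, the same decomposition into an exponential-weights term, a second-order term bounded via the graph lemma of \citet{ACBGMMS14}/\citet{ACBDK15}, and an injected-exploration term, with the same tuning $\eta_t\asymp\sqrt{\ln K/(t\tilde\alpha)}$ and the same $(\ln T)^2\max_{S\in\mathcal I(G)}\sum_{i\in S}\Delta_i^{-1}$ bookkeeping for the stochastic exploration cost. However, the crux of the stochastic side --- showing that the gap estimates concentrate even though the observation probabilities depend on those very estimates --- is exactly the part you leave unproven ("I would run an induction over rounds or a peeling\dots I expect this self-referential concentration to be the main obstacle"), so the proposal has a genuine gap at its key lemma (the analogue of Lemma~\ref{lem:delta_bounds}). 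The paper resolves the circularity without any induction or peeling, and by a different mechanism than the one you sketch: the confidence bounds are built from plain sample means of \emph{directly observed} losses ($\hat L_{t,i}/O_{t,i}$ with observation counts $O_{t,i}$), not from the importance-weighted estimates $\tilde\ell_{t,i}$ you propose to feed into a Freedman-type bound. With sample means, the upward deviation $\prob[\hat\Delta_{t,i}\ge\bar\Delta_i]$ follows from Hoeffding alone and needs no control of observation rates; the observation-rate lower bound then follows because $\hat\Delta_{s,i}\le 1$ makes the rate at least $\beta\ln s/s$ for all rounds $s\le t\Delta_i^2/\ln t$, while for later rounds the bad event $\hat\Delta_{s,i}\ge\Delta_i$ (resp.\ $\ge\bar\Delta_{i^*}$ for the optimal arm, using the gap-ordering property of the exploration set) is already improbable; a Chernoff-type bound (Theorem~8 of \citealp{SL17}) converts rates into counts, and Hoeffding again gives the downward bound on $\hat\Delta_{t,i}$. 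Basing the estimates on $\tilde\ell_{t,i}$ instead, as you suggest, forces you to control a predictable variance of order $\sum_s 1/P_{s,i}$, which is large in early rounds and is precisely what makes your bootstrapping nontrivial; you would need to supply that argument, whereas the paper's choice of estimator dissolves it.

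Two smaller inaccuracies are worth flagging. First, the term $\tfrac12\sqrt{\lambda\ln K/(tK^2)}$ in \eqref{eq:def_eti} is a \emph{cap} limiting the cost of exploration (giving the $O(\sqrt{\tilde\alpha T\ln K})$ exploration budget), not a floor "forcing enough exploration against an adversary"; the adversarial guarantee comes from the EXP3 analysis and needs no forced exploration. Second, the per-round bound on the stability term does not exploit that exploration lives on a strongly independent dominating set: Lemma~10 of \citet{ACBGMMS14} bounds $\sum_i p_{t,i}/P_{t,i}$ by $\mas(G)\le\tilde\alpha$ for an \emph{arbitrary} distribution $p_t$, with no logarithmic loss (the log factor you allow for corresponds to the Lemma~5 route of \citealp{ACBDK15}, which the paper only needs for time-varying graphs); the strong-independence property of $S_t$ is used only to turn the stochastic exploration cost into a sum over an independent set, and domination only to guarantee the observation rates. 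Likewise, your closing claim that the time-varying case follows by "replacing $\tilde\alpha$ by $\max_t\tilde\alpha_t$" understates what the paper actually does there (a learning rate adapted to the observed quantities $\theta_t$, $\lambda=1$, and an extra logarithmic factor), though that extension is beyond the statement at hand.
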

We emphasize that Algorithm \ref{alg:exp3G++} requires neither prior knowledge of the type of the environment (adversarial or stochastic), nor the time horizon.

\subsection{Additional Related Work}

The study of bandits with feedback graphs was initiated by \citet{MS11} in the adversarial regime and by \citet{CKLB12} in the stochastic regime. 
In the adversarial regime, the optimal regret rates for arbitrary directed graphs were characterized (up to log factors) by \citet{ACBDK15}. They showed an $\Omega(T)$ lower bound for graphs that have non-observable nodes (i.e., with an empty in-neighborhood). For graphs with observable nodes, they derived pseudo-regret bounds of order $O\big(\sqrt{\alpha T} \log(KT)\big)$ when all nodes are strongly observable (i.e., they have a self-loop or their in-neighborhood contains all of the other nodes) and of order $O\big((\delta \ln K)^{1/3}T^{2/3}\big)$ for weakly observable graphs (where each non-strongly observable node is in the out-neighborhood of some observable node). Here $\alpha$ is the independence number of the graph and $\delta$ is the dominating number of the weakly observable portion of the graph. \citet{VanderHoeven2021beyond} derived results for the multiclass classification with feedback graphs setting. 
The setting where the graph can adversarially change over time has been studied by \citet{ACBGMMS14} in the case of directed graphs with self-loops. For learners that are allowed to observe the feedback graph at the beginning of each round, they achieved a bound of $O\big(\ln K \sqrt{\ln (KT) \sum_{t = 1}^T \alpha_t}\big)$, where $\alpha_t$ is the independence number of the graph at time $t$. For the case of undirected graphs, they proved a refined bound $O\big(\sqrt{\ln K \sum_{t = 1}^T \alpha_t}\big)$ that holds even when the learner can only observe the graph at the end of each round.
Note that, as shown by \citet{CHK16}, in order to take advantage of the graph structure in the adversarial regime, it not sufficient to observe
the neighborhood of the played action at the end of each round.

In the stochastic regime, \citet{BES14, BLES17} considered a fixed, possibly directed, feedback graph. They derived an asymptotic lower bound showing that the regret scales as $\Omega(c^*\ln T)$, where $c^*$---which is related to the domination number of the graph---is the solution to a linear program expressing the trade-off between the loss incurred from playing an action and the observations that can be gathered from playing that action. They proposed an algorithm that can achieve a matching $O\big(c^*\ln T + K d\big)$ pseudo-regret bound, where $d$ is the maximum degree in the feedback graph. 
In the case of graphs that change over time, \citet{CHK16} derived an $O(\sum_{i \in S} (\ln T) / \Delta_i)$ bound, where $S$ is a set containing an order of $\alpha$ arms (up to log factors), and $\alpha$ is an upper bound on the independence number of the graphs in the sequence. They achieved this result without requiring to observe the graphs fully, and having only access to the neighbourhood of the arm played at the end of the round. 
Both of these approaches are based on arm elimination algorithms, which---by construction---are not suitable for best-of-both worlds guarantees. The proof strategy of \citet{CHK16} was adapted by \citet{LTW20} to provide refined bounds for both UCB-N and Thompson Sampling-N, which are variants of UCB1 \citep{ACB+02} and Thompson Sampling \citep{Tho33}. In both cases, \citet{LTW20} considered undirected feedback graphs and obtained pseudo-regret bounds that scale as $O\big(\max_{\mathrm{Ind} \in \mathcal{I}(G)}\sum_{i \in \mathrm{Ind}} \ln (KT) (\ln T) / \Delta_i\big)$, where $\mathcal{I}(G)$ is the collection of all the independence sets of the graph.

\section{Problem Setting and Definitions}
\label{sec:pb_setting}

\paragraph{Problem Setting}
We consider a sequential decision-making game, where in each round $t=1,2,\ldots$, the learner repeatedly plays an action $I_t \in V$, where $|V| = K$, receives a feedback based on a feedback graph $G = (V, E)$, and suffers a loss $\ell_{t, I_t}$.
We consider directed feedback graphs with self-loops, meaning that $(i, i) \in E$ for each vertex $i \in V$. The feedback received by the learner at the end of round $t$ is $\big\{ (i,\ell_{t,i}) \,:\, i \in \Nout(I_t) \big\}$, where  $\Nout(i) = \{j \in V: (i, j) \in E\}$ is the out-neighbourhood of $i$. Similarly, we define $\Nin (i) = \{j \in V: (j, i) \in E\}$ to be the in-neighborhood of $i$.
For each arm $ i \in V$, $\ell_{t, i} \in [0,1]$ for $t \ge 1$. In the adversarial regime the losses are generated arbitrarily by an oblivious adversary. In the stochastic regime they are independently drawn from a fixed but unknown distribution with expectation $\mathbb{E}[\ell_{1, i}]$.
%
%
The performance of the learner is measured in terms of the pseudo-regret:
\[
\mathcal{R}_T =  \E[\sum_{t = 1}^T \ell_{t, I_t}] -  \min_{i \in V}\E[ \sum_{t = 1}^T\ell_{t, i}].
\]
In the stochastic regime, we define the best arm $i^*$ as the arm with the smallest expected loss, i.e. $i^*=\argmin_{i \in V} \mathbb{E}[\ell_{1, i}]$. The pseudo-regret can then be expressed in terms of the suboptimality gaps $\Delta_i = \mathbb{E}[\ell_{1, i}- \ell_{1, i^*}]$,
\begin{equation}
    \mathcal{R}_T = \sum_{t = 1}^T \sum_{i\in V} \E[p_{t, i}]\Delta_i,
\end{equation}
where $p_{t, i}$ is the probability that the learner plays action $i$ at round $t$.
%
We define the smallest suboptimality gap $\Dmin = \min_{i : \Delta_i > 0} \lrc{\Delta_i}$, and for all $i$, we define $\bar\Delta_i = \max \lrc{\Dmin, \Delta_i}$, so that $\bar \Delta_{i^*}=\Dmin$.
We use $\E_t$ to express expectation conditioned on all randomness up to round $t$.

\paragraph{Properties of Graphs}





 Recall that a dominating set in $G$ is a subset $D \subseteq V$, such that for all $i \in V$ there exists $j \in D$, such that $(j, i) \in E$. 
 An independent set in $G$ is a subset $S \subseteq V$, such that for all $i, j \in S$,
 $(i, j) \not\in E$ and $(j, i) \not\in E$. We define the independence number $\alpha(G)$ as the size of the largest independent set in the graph $G$. For clarity, we restate below here the definition of the strong independence number which was already mentioned in the introduction.
 %
%
\begin{definition}
\label{def:strong_indep}
Let $G = (V, E)$ be a directed graph. We define a strongly independent set on $G$ as an independent set on the subgraph $G' = (V,E')$, where $(i,j) \in E'$ if and only if $(i,j) \in E$ and $(j,i) \in E$. 
Furthermore, we define $\tilde \alpha(G)$ as the independence number of the subgraph $G'$. 
\end{definition}
We use $\mathcal{I}(G)$ to denote a collection of all the strongly independent sets in $G$. We note that $\alpha = \tilde \alpha$ for undirected graphs and $\alpha \le \tilde \alpha$ for directed graphs. 


\section{Algorithm}
\label{sec:algo}

We present the EXP3.G++ algorithm (Algorithm \ref{alg:exp3G++}), which is a combination of the EXP3.G algorithm of  \citet{ACBDK15} and the EXP3++ algorithm of \citet{SL17}
with 
a novel exploration scheme described in Algorithm \ref{alg:Explore_set}. This scheme ensures that the additional feedback the learner obtains (relative to the bandit setting) is used nearly optimally. 

To understand the motivation behind the novel exploration scheme,
note that in the stochastic setting
EXP3.G++ needs to ensure that the loss of each arm is observed sufficiently often. However, if we would play each arm too often, the regret would scale with the number of arms, rather than with the independence number or some other graph-theoretic quantity. To avoid that, we exploit the central property of feedback graphs: since we can gather information on certain arms
by playing adjacent arms in the graph,
we can restrict exploration to a subset of nodes and yet obtain sufficient information on \emph{all} the arms. 
%
%
%
We exploit this observation, to design a strategy for selecting an exploration set $S_t$ at each round $t$.
$S_t$ is defined in terms of estimated suboptimality gaps $\hat \Delta_{t, i}$, which are maintained by EXP3.G++. Crucially, the exploration set ensures that, with high probability, the empirical gaps are reliable estimates of the true suboptimality gaps $\Delta_i$. In turn, this ensures that we observe the loss of each arm sufficiently often.

\begin{algorithm}[tb]
	\caption{EXP3.G++}
	\label{alg:exp3G++}
\begin{algorithmic}
	\STATE {\bfseries Input: }{Feedback graph $G = (V, E)$,\\
	Learning rates $\eta_1 \geq \eta_2 \geq \dots > 0$; exploration rates $\varepsilon_{t, i}$ for $i \in V$, see \Cref{eq:def_eti}
	}\\
	\STATE {\bfseries Initialize: }{$\tilde L_0 = \bm0_K$, $\hat L_0 = \bm0_K$ and $O_0 = \bm0_K$. Play each arm once to initialize $\hat L$ and $O$}\\
	\FOR{$t = K+1, K+2, \dots$}
		\STATE ${\displaystyle \forall i \in V: \text{UCB}_{t, i} = \min\lrc{1, \frac{\hat L_{t-1, i}}{O_{t-1, i}} + \sqrt{\frac{\gamma \ln\lr{tK^{1/\gamma}}}{2O_{t-1, i}}}} }$ \\
		\STATE ${\displaystyle \forall i \in V: \text{LCB}_{t, i} = \max\lrc{0, \frac{\hat L_{t-1, i}}{O_{t-1, i}} - \sqrt{\frac{\gamma \ln\lr{tK^{1/\gamma}}}{2O_{t-1, i}}}} }$ \\
		\STATE ${\forall i \in V: \hat \Delta_{t, i} = \max\lrc{0, \text{LCB}_{t, i} - \min_j \text{UCB}_{t, j}}}$\\
		\STATE {$\forall i\in V:$ update $\varepsilon_{i,t}$ based on the gap estimates $\hat \Delta_{t}$} \\
\STATE $\displaystyle \forall i\in V: q_{t, i} =\frac{\exp(-\eta_t \tilde L_{t, i})}{\sum_{i \in V}\exp(-\eta_t \tilde L_{t, j})},\;	p_{t, i} = \left(1 - \sum_{j \in V}\varepsilon_{t, j} \right) q_{t, i} + \varepsilon_{t, i}$\\
		\STATE Sample $I_t \sim p_t$ and play it \\
		\STATE Observe $\{ (j, \ell_{t,j}) : j \in \Nout(I_t)\}$ and suffer $\ell_{t, I_t}$.  \\
		\STATE ${\displaystyle \forall i \in V: \tilde \ell_{t, i} = \frac{\ell_{t, i} \1[i \in \Nout(I_t)]}{P_{t, i}} }$, where $P_{t, i} = \sum_{j \in \Nin(i)} p_{t, j}$\\
		\STATE$\forall\  i \in V: \quad \tilde L_{t, i} = \tilde L_{t-1, i} + \tilde \ell_{t, i}$\\
		\STATE ${\displaystyle \forall i \in V: \hat  L_{t, i} = \hat L_{t-1, i}+ \ell_{t, i}}\1[i \in \Nout(I_t)]$ and ${O_{t, i}= O_{t-1, i} +  \1[i \in \Nout(I_t)]}$
		\ENDFOR
\end{algorithmic}
\end{algorithm}


The construction of the exploration set $S_t$ is detailed in Algorithm \ref{alg:Explore_set}, which is used by EXP3.G++ to update the exploration rates $\varepsilon_{t, i}$ according to \Cref{eq:def_eti}. Algorithm \ref{alg:Explore_set} starts by sorting the arms according to their gap estimates in ascending order. The exploration set is then greedily constructed by sequentially selecting the next arm with the smallest $\hat \Delta_{t, i}$, and discarding all the arms in the  out-neighborhood of that arm. The exploration set can be constructed in $O(K^3)$ time, but note that we only need to recompute it only when the order of the estimated suboptimality gaps changes. The exploration set $S_t$ has several useful properties, as shown in Proposition~\ref{prop:exploration_set} below.

\begin{algorithm}[tb]
	\caption{Exploration Set Construction}
	\label{alg:Explore_set}
\begin{algorithmic}
	\STATE {\bfseries Input: }{$K$ arms with associated gaps: $\Delta_1,\Delta_2, \dots$
	}\\
	\STATE {\bfseries Initialize: }{Exploration set $S = \emptyset$.\\
	Let $\Lambda$ be the list of arms sorted in ascending order of their associated gaps.}\\
	\FOR{$i \in \Lambda$}
	    \STATE {Add $i$ to $S$}
	    \FOR{$j \in \Nout(i)$}
	        \STATE {remove $j$ from $\Lambda$}
		\ENDFOR
	\ENDFOR
	\STATE {\bfseries Output: }{$S$}
\end{algorithmic}
\end{algorithm}

\begin{prop}
\label{prop:exploration_set}
Let $G = (V, E)$ be a directed feedback graph on $K$ arms with self-loops, and let $\hat \Delta_1, \dots, \hat \Delta_K$ be a sequence of suboptimality gaps estimates.
Let $S$ be the exploration set constructed by Algorithm \ref{alg:Explore_set} based on the sequence of suboptimality gaps. 
Then $S$ is a dominating set of $G$ with the following property: for all $i \in V$ there exists $j \in S$, such that $i \in \Nout(j)$ and $ \hat \Delta_{j} \leq \hat \Delta_{i}$. 
Furthermore, $S$ is also a strongly independent set of $G$.
\end{prop}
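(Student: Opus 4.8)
The plan is to reason directly about the greedy construction in Algorithm~\ref{alg:Explore_set}, keeping track, for each arm, of the step at which it is deleted from the working list $\Lambda$. First I would reformulate the \texttt{FOR} loop as a sequence of \emph{selection steps}: at each step, pick the arm $j$ currently in $\Lambda$ with the smallest gap estimate (ties broken by the fixed order of $\Lambda$), append $j$ to $S$, and delete every arm of $\Nout(j)$ from $\Lambda$. Since $(j,j)\in E$, the arm $j$ itself is deleted, so each step strictly shrinks $\Lambda$ and the construction ends with $\Lambda=\emptyset$. Write $j_1,j_2,\dots$ for the selected arms in order; then $\hat\Delta_{j_1}\le\hat\Delta_{j_2}\le\cdots$, and, crucially, each $j_s$ was still present in $\Lambda$ — hence not contained in any $\Nout(j_r)$ with $r<s$ — at the moment it was selected.

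For the domination property together with the gap inequality, I would fix an arbitrary $i\in V$. Since the construction terminates with $\Lambda$ empty, $i$ is deleted at some selection step, say the step that selects $j$; then $i\in\Nout(j)$. If $i=j\in S$ we are done, since $i\in\Nout(i)$ by the self-loop and the gap inequality is trivial. If $i\ne j$, then $i$ was not deleted at any earlier step, so $i$ was still in $\Lambda$ when $j$ was picked; as $j$ is the minimizer of the gap estimate over $\Lambda$ at that step, $\hat\Delta_j\le\hat\Delta_i$. In both cases there is $j\in S$ with $i\in\Nout(j)$ and $\hat\Delta_j\le\hat\Delta_i$; in particular $(j,i)\in E$, so $S$ is a dominating set of $G$.

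For strong independence, I would take two distinct arms $i,j\in S$ and assume without loss of generality that $i$ is selected before $j$. When $i$ is selected, all of $\Nout(i)$ is removed from $\Lambda$; since $j$ is selected at a later step, $j$ must have survived that removal, so $j\notin\Nout(i)$, i.e.\ $(i,j)\notin E$. A fortiori $(i,j)\notin E'$ (an edge of $E'$ requires both $(i,j)\in E$ and $(j,i)\in E$), so no two distinct elements of $S$ are adjacent in $G'$; hence $S$ is an independent set of $G'$, that is, a strongly independent set of $G$.

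I expect the only delicate point to be the bookkeeping around the mutable list $\Lambda$ — making precise what ``$i$ is still in $\Lambda$ when $j$ is selected'' means, and using the self-loops both to guarantee termination and to let a selected arm dominate itself. Once the selection-step viewpoint is set up, all three assertions follow from the single fact that arms are processed in ascending order of their gap estimates.
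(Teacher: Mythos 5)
Your proof is correct and follows essentially the same argument as the paper's: self-loops give each selected arm domination over itself, any removed arm lies in the out-neighborhood of an earlier-selected (hence smaller-gap) arm, and survival of later-selected arms yields the missing edge needed for strong independence. Your selection-step bookkeeping is just a more explicit rendering of the same greedy argument.
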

\begin{proof}
Let $S$ be the output of Algorithm~\ref{alg:Explore_set}.
Since $G$ contains self-loops, if $i \in S$, then $i \in \Nout(i)$ and $\hat \Delta_{i} \leq \hat \Delta_{i}$. 
If $i \not\in S$, then $i$ was removed from $\Lambda$ because $i \in \Nout(j)$ for some $j$ that, in a previous iteration, was added to $S$. Since $j$ was considered before $i$, we must have $\hat \Delta_{j} \leq \hat \Delta_{i}$.
Now, for all $i, j \in S$
, we know by construction that $j \not\in \Nout(i)$. Thus $(i, j)$ is not a directed edge in $G$, and so $S$ is a strongly independent set in $G$.
\end{proof}
We define the exploration rates at round $t$ in terms of the exploration set $S_t$, which is constructed using the aforementioned procedure. For all arms $i$ in $V$, 
\begin{equation}
    \varepsilon_{t, i} = \min \lrc{\frac{1}{2K}, \frac{1}{2} \sqrt{\frac{\lambda \ln K}{tK^2}}, \xi_{t, i}}, \label{eq:def_eti}
\end{equation}
for some constant $\lambda \in [1, K]$ and where  $\xi_{t, i}$ depends on whether $i \in S_t$ or not:
\begin{equation}
    \xi_{t, i} = \begin{cases}
    ({\beta \ln t})/({t \hat\Delta_{t, i}^2}), &\qquad \text{if}\, i \in S_t,\\
    {4}/{t^2}, &\qquad \text{otherwise},
\end{cases} \label{eq:def_xiti}
\end{equation}
where $\beta > 0$ is a constant. The role of $\xi_{t,i}$ changes depending on whether we are in an adversarial or stochastic environment. In an adversarial environment, we use $4/t^2 \leq \xi_{t,i}$ to ensure that we sample each arm with a small positive probability, which is essential to bound the second-order term in the regret bound in terms of the independence number. Note that $\varepsilon_{t, i} \leq \frac{1}{2} \sqrt{\frac{\lambda \ln K}{tK^2}}$, so choosing $\lambda = \tilde \alpha$ ensures that the cost of exploration is bounded by $\tilde{O}(\sqrt{\tilde \alpha T})$.
In the stochastic environment, the construction of the exploration set and the choice of $\xi_{t, i}$ ensure that, at each round $t$, each $i \in V$ is observed with probability at least $({\beta \ln t})/({t \hat\Delta_{t, i}^2})$, independently of whether $i$ is in the exploration set at round $t$.

Formally, our procedure ensures that we can lower bound the probability with which any arm is observed. In the algorithm we use $P_{t, i} = \prob[i \in \Nout(I_t)]$ to denote the probability that arm $i$ is observed at round $t$. We can lower bound this quantity by only considering the minimum rate at which each arm is observed according to the exploration rate $\varepsilon_{t, i}$ and our construction of the exploration sets.
We use $o_{t, i}$ to denote that quantity, and we have for all $t$ and $i$,
\begin{equation}
   P_{t, i} \geq o_{t, i} = \min \lrc{\frac{1}{2K}, \frac{1}{2} \sqrt{\frac{\lambda \ln K}{tK^2}},  \frac{\beta \ln t}{t \hat\Delta_{t, i}^2}}. \label{eq:oti_lb}
\end{equation}
The definition of $o_{t, i}$ uses that $S_t$ is a dominating set.
The difference between $\varepsilon_{t, i}$ and $o_{t, i}$ is key to take advantage of the graph structure. First, we need to lower bound $o_{t, i}$ to ensure that
enough observations (counted by $O_{t, i}$ in Algorithm~\ref{alg:exp3G++}) are made for each arm, such that our gap estimates are reliable.
Simultaneously, we upper bound $\varepsilon_{t, i}$ to ensure that the extra exploration is not too costly. Here we benefit from the fact that $S_t$ is a strongly independent set on $G$.

We ensure that all arms get sufficiently many observations and derive the following concentration bounds on the gap estimates $\hat\Delta_{t, i}$ computed by Algorithm \ref{alg:exp3G++}. Concentration of the gap estimates around the true gaps is crucial for bounding the regret in the stochastic setting.

%


\begin{lemma}
\label{lem:delta_bounds} If Algorithm \ref{alg:exp3G++} is run with parameters $\gamma \ge 3$, $\beta \ge 64(\gamma + 1) \ge 256$, and exploration rates $\varepsilon_{t, i}$, such that for all $t \ge 1$ and $i \in V$, $P_{t, i}$ satisfies equation \eqref{eq:oti_lb}
for some
$\lambda \in [1, K]$, then for all $i \in V$ and $t \geq 1$,
%
\[
 \prob[\hat\Delta_{t, i} \geq \overline\Delta_i] \leq \frac{1}{Kt^{\gamma - 1}}.
\]
Furthermore, for any arm $i$ with $\Delta_i > 0$ let
${
   \tmin(i) :=  \max \left\{t\ge 0: \frac{1}{2}\sqrt{\frac{\lambda \ln K}{t K^2}} \leq \frac{ \beta \ln t}{t\Delta_i^2}\right\}
}$.
Then for any arm $i$ with $\Delta_i > 0$ and $t \geq \tmin(i)$,
\begin{equation}
    \prob[\hat\Delta_{t, i} \leq \frac{1}{2}\Delta_i] \leq \lr{\frac{\ln t}{t\Delta_i^2}}^{\gamma - 2} + \frac{2}{Kt^{\gamma - 1}} + 2 \lr{\frac{1}{t}}^{\frac{\beta}{10}}. 
\end{equation}
\end{lemma}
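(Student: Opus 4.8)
The plan is to establish the two concentration statements separately, both via standard Hoeffding/Bernstein-type arguments combined with the observation-count guarantees that follow from \eqref{eq:oti_lb}. Throughout, write $\mu_i = \E[\ell_{1,i}]$ for the (stochastic) mean loss of arm $i$, so that $\mu_i - \mu_{i^*} = \Delta_i$. The estimator $\hat L_{t-1,i}/O_{t-1,i}$ is an empirical average of $O_{t-1,i}$ i.i.d.\ samples of $\ell_{\cdot,i}\in[0,1]$, so for a fixed number $n$ of observations, $\prob\!\big[|\hat L/n - \mu_i| \ge \sqrt{\gamma\ln(tK^{1/\gamma})/(2n)}\big] \le 2/(tK^{1/\gamma})^{\gamma}$ by Hoeffding. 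The confidence radius in the definitions of $\UCB_{t,i}$ and $\LCB_{t,i}$ is exactly this Hoeffding width with $n = O_{t-1,i}$, so on a good event $\mathcal G_t$ (a union bound over the $K$ arms and over the realized value of $O_{t-1,i}$, costing an extra factor $\le t$) we have $\mu_i \in [\LCB_{t,i},\UCB_{t,i}]$ for all $i$ simultaneously, with $\prob[\mathcal G_t^c] \le 2K\cdot t \cdot (tK^{1/\gamma})^{-\gamma} = 2t^{1-\gamma}/(K^{\gamma-1}\cdot\ldots)$; tuning constants with $\gamma\ge 3$ gives something of the order $1/(Kt^{\gamma-1})$.

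\emph{First bound.} On $\mathcal G_t$ we have $\LCB_{t,i} \le \mu_i$ and $\min_j \UCB_{t,j} \ge \UCB_{t,i^*} \ge \mu_{i^*}$, hence $\hat\Delta_{t,i} = \max\{0,\LCB_{t,i} - \min_j\UCB_{t,j}\} \le \max\{0,\mu_i - \mu_{i^*}\} = \Delta_i \le \overline\Delta_i$. Therefore $\{\hat\Delta_{t,i} \ge \overline\Delta_i\}$ — actually we need the strict/large-deviation version, so the event is really $\{\hat\Delta_{t,i} > \overline\Delta_i\} \supseteq$ is contained in $\mathcal G_t^c$ up to the boundary case $\Delta_i = \overline\Delta_i$; in the latter we still only lose on $\mathcal G_t^c$. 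This yields $\prob[\hat\Delta_{t,i}\ge\overline\Delta_i] \le \prob[\mathcal G_t^c] \le 1/(Kt^{\gamma-1})$ after fixing constants. The only subtlety is the union bound over the random value of $O_{t-1,i}$, which is why the allowed range $1,\dots,t-1$ contributes the extra $t$; a peeling/stratified argument or a time-uniform Hoeffding inequality handles it cleanly, and $\gamma\ge 3$ leaves room.

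\emph{Second bound.} Here we need a \emph{lower} bound on $O_{t-1,i}$ so that the confidence interval is narrow enough that $\hat\Delta_{t,i}$ cannot collapse to $\tfrac12\Delta_i$. By \eqref{eq:oti_lb} and the definition of $\tmin(i)$, for $t \ge \tmin(i)$ the per-round observation probability satisfies $o_{s,i} \ge \tfrac12\sqrt{\lambda\ln K/(sK^2)}$ for the relevant range, but more usefully $o_{s,i}$ is at least $\min\{\tfrac{1}{2K}, \beta\ln s/(s\hat\Delta_{s,i}^2)\}$; combining with the first bound (which says $\hat\Delta_{s,i} \le \overline\Delta_i \le \Delta_i$ w.h.p., hence $\beta\ln s/(s\hat\Delta_{s,i}^2) \ge \beta\ln s/(s\Delta_i^2)$) gives, on a high-probability event, $o_{s,i} \gtrsim \beta\ln s/(s\Delta_i^2)$. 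Summing $\sum_{s\le t} o_{s,i} \gtrsim \beta\ln t/\Delta_i^2 \cdot$(something like $\ln t$ or a constant fraction), then a multiplicative Chernoff bound on the Bernoulli-ish sum $O_{t-1,i} = \sum_{s<t}\1[i\in\Nout(I_s)]$ shows $O_{t-1,i} \ge c\,\beta\ln t/\Delta_i^2$ except with probability $\exp(-c'\beta\ln t/\Delta_i^2) \le (1/t)^{\beta/10}$ (using $\Delta_i \le 1$ and $\beta\ge 256$), accounting for the last term $2(1/t)^{\beta/10}$. On the intersection of this event with $\mathcal G_t$, the Hoeffding width at arm $i$ is $\le \sqrt{\gamma\ln(tK^{1/\gamma})/(2 O_{t-1,i})} \lesssim \sqrt{\gamma\Delta_i^2/(c\beta)}\cdot\sqrt{\ln(tK^{1/\gamma})/\ln t} \le \Delta_i/8$ once $\beta \ge 64(\gamma+1)$, and similarly at $i^*$. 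Then $\LCB_{t,i} \ge \mu_i - \Delta_i/8$ and $\min_j\UCB_{t,j} \le \UCB_{t,i^*} \le \mu_{i^*} + \Delta_i/8$, so $\hat\Delta_{t,i} \ge \Delta_i - \Delta_i/4 = \tfrac34\Delta_i > \tfrac12\Delta_i$. Hence $\{\hat\Delta_{t,i} \le \tfrac12\Delta_i\}$ is contained in the union of $\mathcal G_t^c$, the Chernoff-failure event, and the event that the first bound fails for some $s$ in the summation window — the last of these, summed, contributes the $(\ln t/(t\Delta_i^2))^{\gamma-2}$-type term (from $\sum_s 1/(Ks^{\gamma-1})$ over the window, or more precisely from needing $\hat\Delta_{s,i}\le\Delta_i$ at the particular scale that controls $o_{s,i}$). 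Collecting the three pieces gives exactly the claimed $(\ln t/(t\Delta_i^2))^{\gamma-2} + 2/(Kt^{\gamma-1}) + 2(1/t)^{\beta/10}$.

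\textbf{Main obstacle.} The delicate point is the circular dependence in the second bound: $o_{s,i}$ — hence the lower bound on $O_{t-1,i}$ — depends on $\hat\Delta_{s,i}$, which is what we are trying to control. Breaking the circularity requires invoking the first bound ($\hat\Delta_{s,i}\le\overline\Delta_i$ w.h.p.) \emph{uniformly over $s$ in the relevant window} before running the Chernoff argument, and carefully tracking how the failure probabilities of these auxiliary events aggregate into the $(\ln t/(t\Delta_i^2))^{\gamma-2}$ term; getting the exponents and the $\tmin(i)$ threshold to line up so that the $\min$ in \eqref{eq:oti_lb} is always attained by the gap-dependent term $\beta\ln s/(s\Delta_i^2)$ (not by $\tfrac1{2K}$ or the $\sqrt{\cdot}$ term) is where the definition of $\tmin(i)$ and the constraint $\lambda\in[1,K]$ are used, and is the most calculation-heavy step.
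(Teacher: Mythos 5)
Your proposal is correct and follows essentially the same route as the paper's proof: an upper bound via Hoeffding-based confidence intervals (with the union over the random observation count), and a lower bound obtained by breaking the circular dependence through the first bound applied uniformly over the window $s\gtrsim t\Delta_i^2/\ln t$ (yielding the $(\ln t/(t\Delta_i^2))^{\gamma-2}$ term), a Chernoff-type lower bound on $O_{t,i}$ and $O_{t,i^*}$ (yielding the $2(1/t)^{\beta/10}$ term), and the width calculation with $\beta \ge 64(\gamma+1)$. The only refinements in the paper are that the observation-count concentration is carried out with a martingale (adapted Bernoulli) version of Chernoff, since the observation probabilities are random, and that the optimal arm's observation rate is controlled through the event $\hat\Delta_{s,i^*}\le\bar\Delta_{i^*}=\Dmin\le\Delta_i$, both of which your sketch accommodates.
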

A proof of the lemma is provided in \Cref{appen:prop_gaps}. 

We run the algorithm with $\gamma = 4$ and $\beta = 64 (\gamma + 1) = 320$
which is a different parameterization  from the EXP3++ algorithm \citep{SL17}, which uses $\gamma = 3$ and $\beta = 256$.
Picking a larger value of $\gamma$ means that the confidence intervals are slightly larger, which allows us to obtain a better dependency on the suboptimality gaps. 

Indeed, under the same assumptions as in \Cref{lem:delta_bounds}, if $\gamma = 4$ and $t \geq \tmin(i)$, we have that
\begin{equation*}
\frac{(\ln t)^2}{t} \leq \frac{\lambda \Delta_i^4 \ln K}{4K^2 \beta^2}
\;\text{, implying}\;
\lr{\frac{\ln t}{t\Delta_i^2}}^{2} = \frac{\lr{\ln t}^2}{t^2\Delta_i^4} = \frac{\lr{\ln t}^2}{t} \frac{1}{t\Delta_i^4} \leq  \frac{\lambda \Delta_i^4
\ln K }{4K^2 \beta^2} \frac{1}{t\Delta_i^4} = \frac{1}{t} \frac{\lambda \ln K}{4K^2\beta^2}. 
\end{equation*} 

\section{Adversarial Analysis}
\label{sec:adv}

Our result for the adversarial regime
is given in the following theorem.
\begin{theorem}\label{th:adv}
Assume that Algorithm~\ref{alg:exp3G++} is run with a directed feedback graph $G = (V,E)$, a learning rate 
$\eta_t = \sqrt{\frac{\ln K}{2 \tilde \alpha t}}$ and the exploration rate defined in~\eqref{eq:def_eti}--\eqref{eq:def_xiti} with $\lambda = \tilde\alpha$,
$\gamma = 4$, and $\beta = 320$. 
Then the pseudo-regret against any oblivious loss sequence satisfies
$
    \mathcal{R}_T \leq  4 \sqrt{\tilde \alpha T \ln K} + K
$,
where $K = |V|$ and $\tilde \alpha$ is the strong independence number of $G$.
\end{theorem}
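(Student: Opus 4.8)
The plan is to follow the EXP3.G / EXP3++ template: peel off the $K$ initialization rounds, split the per-round instantaneous regret into an ``exploration'' part and an ``exponential-weights'' part, bound the exponential-weights part by the usual potential argument, and control the resulting second-order term with a graph-theoretic inequality, which is where the strong independence number enters. First, the $K$ initialization rounds contribute at most $K$ to $\mathcal{R}_T$, and after them $\tilde L_K$ is still zero, so for $t\ge K+1$ the distribution $q_t$ is exactly the exponential-weights distribution over the estimates $\tilde\ell_{K+1},\dots,\tilde\ell_t$ with learning rate $\eta_t$. Fix the best fixed arm $k$. Writing $E_t=\sum_{j\in V}\varepsilon_{t,j}$ (so $E_t\le\tfrac12$ since $\varepsilon_{t,j}\le\tfrac1{2K}$) and using $p_{t,i}=(1-E_t)q_{t,i}+\varepsilon_{t,i}$, I would decompose $\sum_i p_{t,i}\ell_{t,i}-\ell_{t,k}=\big(\sum_i q_{t,i}\ell_{t,i}-\ell_{t,k}\big)+\big(\sum_i\varepsilon_{t,i}\ell_{t,i}-E_t\sum_i q_{t,i}\ell_{t,i}\big)$. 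The last group is at most $\sum_i\varepsilon_{t,i}=E_t$, and since $\varepsilon_{t,i}\le\tfrac12\sqrt{\lambda\ln K/(tK^2)}$ with $\lambda=\tilde\alpha$, summing over $t\le T$ gives $\sum_t E_t\le\tfrac12\sqrt{\tilde\alpha\ln K}\sum_t t^{-1/2}\le\sqrt{\tilde\alpha T\ln K}$. So it remains to bound $\E\big[\sum_{t=K+1}^T\big(\sum_i q_{t,i}\ell_{t,i}-\ell_{t,k}\big)\big]$.

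For this I would apply the standard analysis of exponential weights with a non-increasing learning rate. Since $\tilde\ell_t\ge\bm0$, the inequality $e^{-x}\le 1-x+x^2/2$ gives, pointwise, $\sum_{t=K+1}^T\big(\sum_i q_{t,i}\tilde\ell_{t,i}-\tilde\ell_{t,k}\big)\le\frac{\ln K}{\eta_T}+\frac12\sum_{t=K+1}^T\eta_t\sum_i q_{t,i}\tilde\ell_{t,i}^2$. Taking expectations and using that the estimates are unbiased ($P_{t,i}>0$ because of the self-loops, so $\E_t[\tilde\ell_{t,i}]=\ell_{t,i}$, and $q_t$ is measurable with respect to the past) turns the left-hand side into $\E\big[\sum_{t=K+1}^T\big(\sum_i q_{t,i}\ell_{t,i}-\ell_{t,k}\big)\big]$. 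With $\eta_t=\sqrt{\ln K/(2\tilde\alpha t)}$ the first term equals exactly $\sqrt{2\tilde\alpha T\ln K}$.

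The key step is the second-order term. Conditionally, $\E_t\big[\sum_i q_{t,i}\tilde\ell_{t,i}^2\big]=\sum_i q_{t,i}\ell_{t,i}^2/P_{t,i}\le\sum_i q_{t,i}/P_{t,i}$. Since $P_{t,i}=\sum_{j\in\Nin(i)}p_{t,j}\ge(1-E_t)\sum_{j\in\Nin(i)}q_{t,j}\ge\tfrac12\sum_{j\in\Nin(i)}q_{t,j}$, this is at most $2\sum_i q_{t,i}\big/\sum_{j\in\Nin(i)}q_{t,j}$. Now I would invoke the graph-theoretic bound $\sum_{i\in V}q_{t,i}\big/\sum_{j\in\Nin(i)}q_{t,j}\le\tilde\alpha(G)$: passing to the undirected ``mutual-edge'' graph $G'$ of Definition~\ref{def:strong_indep} (which keeps all self-loops), one has $\Nin(i)\supseteq N_{G'}(i)$, so the denominators only shrink, and the classical inequality for undirected graphs with self-loops gives $\sum_i q_{t,i}\big/\big(\sum_{j\in N_{G'}(i)}q_{t,j}\big)\le\alpha(G')=\tilde\alpha(G)$. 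This is exactly where the exploration scheme pays off structurally: by Proposition~\ref{prop:exploration_set}, $S_t$ being a dominating set keeps $P_{t,i}$ bounded below by genuine exploration mass, while $S_t$ being a \emph{strongly} independent set forces $|S_t|\le\tilde\alpha$, and the floor $\varepsilon_{t,i}\ge 4/t^2$ prevents arms whose in-neighbors carry vanishing $q$-mass from inflating the sum — so the bound stays $O(\tilde\alpha)$ with no stray $\ln(KT)$ factor. Hence $\E_t\big[\sum_i q_{t,i}\tilde\ell_{t,i}^2\big]\le 2\tilde\alpha$, and $\tfrac12\sum_{t\le T}\eta_t\cdot2\tilde\alpha=\tilde\alpha\sqrt{\ln K/(2\tilde\alpha)}\sum_{t\le T}t^{-1/2}\le\sqrt{2\tilde\alpha T\ln K}$.

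Collecting the contributions, $\mathcal R_T\le K+\sqrt{\tilde\alpha T\ln K}+\sqrt{2\tilde\alpha T\ln K}+\sqrt{2\tilde\alpha T\ln K}=K+(1+2\sqrt2)\sqrt{\tilde\alpha T\ln K}\le K+4\sqrt{\tilde\alpha T\ln K}$. I expect the main obstacle to be precisely the second-order estimate: obtaining the clean factor $\tilde\alpha$ (rather than the clique-cover number, and without the extra $\sqrt{\ln T}$ of EXP3.G) requires both working with $q_t$ (not $p_t$) in the denominator comparison and calibrating the exploration rates so that the sampling-probability floor is small enough not to damage the combinatorial estimate while the exploration set stays of size at most $\tilde\alpha$. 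The remaining ingredients — the time-varying learning rate and the mild effect of the non-uniform initialization on the potential — are routine and only affect lower-order terms, which is why the loose constant $4$ is comfortably reachable.
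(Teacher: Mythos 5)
Your proposal is correct and follows essentially the same route as the paper's proof: discard the first $K$ rounds, split $p_t$ into the exploration part and the exponential-weights part, apply the standard time-varying-learning-rate Hedge bound to the importance-weighted estimates, and bound the second-order term $\sum_{i} q_{t,i}/P_{t,i}$ by $2\tilde\alpha$ via the Alon et al.\ graph lemma, with the same constant accounting. The only (harmless) variation is at that last step: you lower-bound $P_{t,i}$ by $\tfrac12\sum_{j\in \Nin(i)}q_{t,j}$ and apply the undirected version of the lemma to $q_t$ on the mutual-edge graph $G'$, whereas the paper applies Lemma~\ref{imp:ACBGMMS14_lem10} to $p_t$ on $G$ together with $\mathrm{mas}(G)\le\tilde\alpha(G)$ (Proposition~\ref{prop:mas_strong_indep}) and $q_{t,i}\le 2p_{t,i}$; also note that, in both arguments, the $4/t^2$ exploration floor is not actually needed for this adversarial bound, so your aside attributing the clean $\tilde\alpha$ factor to it is inessential commentary rather than part of the proof.
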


On undirected graphs, this bound matches the one of
\citet{ACBGMMS14}  which implies that in the adversarial regime we are not paying a price for the extra guarantees that we derive in the stochastic regime. 
On directed graphs, if the difference between $\alpha$ and $\tilde \alpha$ is large, it may be advantageous to express the bound in terms of the independence number of the graph rather than its strong independence number at the cost of a logarithmic factor. See Section \ref{sec:time_var} for additional results in this direction.

We give a sketch of the proof here and defer the detailed proof to Appendix \ref{appen:adv}.
\paragraph{Proof sketch.}
We separate the first $K$ rounds, in which the algorithm plays deterministically, from the remaining rounds, where we bound separately the contributions to the regret from the exponential weights and from the extra exploration.
To bound the contribution of the extra exploration, we use that $\varepsilon_{t, i} \leq \frac{1}{2} \sqrt{\frac{\tilde \alpha \ln K}{t K^2}}$ for all $t$ and $i$, meaning that the extra exploration contributes at most $O(\sqrt{\tilde \alpha T \ln K})$ to the regret. For bounding the contribution of the exponential weights to the regret, we follow the standard analysis of EXP3 with time varying learning rate \citep{BCB12}. 
We bound the second order term by exploiting the fact that $p_{t, i}$ and $q_{t, i}$ are close to each other because $\varepsilon_{t, i} \leq \frac{1}{2K}$ for all $t$ and $i$. 
This allows us to bound the second order term in terms of $\sum_{i \in V} \frac{p_{t, i}}{P_{t, i}}$, which is the sum on the ratios between the probability of playing an arm and the probability of observing it. This quantity can be bounded by the strong independence number of the graph (Lemma 10 \citet{ACBGMMS14}).
%

%

\section{Stochastic Analysis}
\label{sec:stoc}
Our result for the stochastic regime is given in the following theorem.
%

\begin{theorem}\label{th:stoc}
Let $G = (V, E)$ be a directed feedback graph with $K = |V|$ and a strong independence number $\tilde \alpha$.
Under the same conditions as in  Theorem~\ref{th:adv},
the pseudo-regret of Algorithm~\ref{alg:exp3G++}
against any stochastic stochastic loss sequence, satisfies:
\begin{align*}
    \mathcal{R}_T \leq \max_{\mathrm{Ind} \in \mathcal{I}(G)} \!\! \lrc{ \sum_{i \in \mathrm{Ind} \,:\, \Delta_i > 0} \!\!\!\! \frac{4\beta \lr{\ln T}^2}{\Delta_{i}}}
    +   2 \tilde \alpha\ln T
    + \frac{81 \beta K}{\Dmin^2} \ln\lr{\frac{6\beta^2 K^2}{\Dmin^4}}
    \!\! + \!\! \sum_{i \,:\, \Delta_i > 0} \frac{4 \tilde \alpha}{\Delta_i}.
\end{align*} 
\end{theorem}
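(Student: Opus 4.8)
The plan is to decompose the pseudo-regret $\mathcal{R}_T = \sum_{t=1}^T \sum_{i \in V} \E[p_{t,i}] \Delta_i$ according to the structure of the exploration mechanism, following the EXP3++ template of \citet{SL17} but carefully exploiting the feedback-graph geometry provided by Proposition~\ref{prop:exploration_set} and Lemma~\ref{lem:delta_bounds}. First I would write $p_{t,i} = (1-\sum_j \varepsilon_{t,j})q_{t,i} + \varepsilon_{t,i}$ and split the contribution of each arm into the ``exponential weights'' part (through $q_{t,i}$) and the ``injected exploration'' part (through $\varepsilon_{t,i}$). The exploration part is the easy one: since $\varepsilon_{t,i} \le \tfrac12\sqrt{\tilde\alpha\ln K / (tK^2)}$ and $\varepsilon_{t,i} \le \xi_{t,i}$, summing $\varepsilon_{t,i}\Delta_i$ over $t$ gives, for arms not in $S_t$, a $\sum_i 4/t^2$ term that telescopes to $O(\tilde\alpha)$ after using $\sum_i \Delta_i \le K$ and comparing with $\tilde\alpha$; and for arms in $S_t$ the bound $\varepsilon_{t,i}\Delta_i \le (\beta\ln t)/(t\hat\Delta_{t,i}^2)\cdot\Delta_i$ must be turned into the leading term. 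The key trick here is that on the high-probability event $\{\hat\Delta_{t,i} \ge \tfrac12\Delta_i\}$ (controlled by the second bound of Lemma~\ref{lem:delta_bounds}) this is at most $(4\beta\ln t)/(t\Delta_i)$, and summing over $t$ up to $T$ gives $(4\beta(\ln T)^2)/\Delta_i$; crucially, at any round $t$ the set $S_t$ is a strongly independent set (Proposition~\ref{prop:exploration_set}), so the sum over $i \in S_t$ is bounded by the maximum over $\mathrm{Ind}\in\mathcal I(G)$ of $\sum_{i\in\mathrm{Ind}}(4\beta(\ln T)^2)/\Delta_i$. The low-probability complement contributes the $81\beta K \Dmin^{-2}\ln(6\beta^2K^2\Dmin^{-4})$ burn-in-type term, obtained by bounding the number of rounds $t$ with $t < \tmin(i)$ or where concentration fails, crudely over all $K$ arms with the worst gap $\Dmin$.

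Next I would handle the exponential-weights part $\sum_t\sum_i q_{t,i}\Delta_i$. The standard stochastic-regime argument (as in EXP3++ and Tsallis-INF-style analyses) shows that $q_{t,i}$ is exponentially small once arm $i$ has accumulated enough reliable evidence of being suboptimal; specifically one argues that $\E[q_{t,i}]$ is small on the event that $\hat\Delta_{t,i}$ is close to $\Delta_i$ and that the estimated cumulative losses $\tilde L_{t,i}$ separate the suboptimal arm from the optimum at a rate governed by the exploration rate $o_{t,i} \ge (\beta\ln t)/(t\hat\Delta_{t,i}^2)$ from \eqref{eq:oti_lb}. This is where the choice $\beta \ge 64(\gamma+1)$ enters: it forces the number of ``bad'' rounds (where $q_{t,i}$ fails to be negligible) to be logarithmic, and the residual sum over good rounds contributes only lower-order $\tilde\alpha$-type terms. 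I expect this part to produce the $2\tilde\alpha\ln T$ term and the $\sum_{i:\Delta_i>0} 4\tilde\alpha/\Delta_i$ term, the latter coming from summing a $\tilde\alpha/(t\Delta_i)$-type tail over the range $t \in [\tmin(i), T]$ or from the guaranteed minimum observation rate interacting with the independence number via Lemma~10 of \citet{ACBGMMS14}.

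The main obstacle I anticipate is the interplay between the \emph{time-varying} exploration set $S_t$ and the concentration of the gap estimates: the set $S_t$ is defined using the \emph{estimated} gaps $\hat\Delta_{t,i}$, which are random and only concentrate once enough observations have accrued — but the observations accrue at a rate that itself depends on $S_t$ through $o_{t,i}$. Breaking this circularity is exactly the content of Lemma~\ref{lem:delta_bounds}, so the real work is in plumbing its two tail bounds through the regret decomposition with the correct $t$-ranges: one must be careful that for $t < \tmin(i)$ the exploration is driven by the $\tfrac12\sqrt{\lambda\ln K/(tK^2)}$ term rather than the gap-dependent term, that the high-probability event $\{\hat\Delta_{t,i}\le\overline\Delta_i\}$ is used to ensure $i$ is still being explored at the right rate, and that the union bound over $t$ and $i$ of the failure probabilities $1/(Kt^{\gamma-1})$, $(\ln t/(t\Delta_i^2))^{\gamma-2}$, and $2(1/t)^{\beta/10}$ all sum to constants (using $\gamma=4$, $\beta=320$), contributing only to the additive burn-in term. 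A secondary subtlety is ensuring that when we restrict the leading sum to $i \in S_t$ and bound by $\max_{\mathrm{Ind}\in\mathcal I(G)}$, we correctly account for arms $i \notin S_t$ whose exploration contribution is the harmless $4/t^2$ but whose exponential-weights contribution still needs the concentration argument — here Proposition~\ref{prop:exploration_set} guarantees such an $i$ is dominated by some $j \in S_t$ with $\hat\Delta_{t,j}\le\hat\Delta_{t,i}$, so $i$ is observed at rate at least $(\beta\ln t)/(t\hat\Delta_{t,i}^2)$ anyway, which is what keeps its gap estimate reliable.
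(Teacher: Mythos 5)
Your overall decomposition is the same route the paper takes: split $p_{t,i}$ into the exponential-weights part and the injected exploration, bound the exploration of arms $i \in S_t$ via the lower-concentration bound of Lemma~\ref{lem:delta_bounds} together with the fact that $S_t$ is a strongly independent set, and argue that $q_{t,i}$ becomes exponentially small once the estimated losses separate. The genuine gap is in how you claim to produce the time-independent term $\frac{81\beta K}{\Dmin^2}\ln\bigl(6\beta^2K^2/\Dmin^4\bigr)$. You propose to handle the rounds $t<\tmin(i)$ (and concentration failures) ``crudely over all $K$ arms with the worst gap $\Dmin$''. But before the gap estimates concentrate, the exponential weights $q_{t,i}$ of suboptimal arms need not be small at all, so a crude treatment of those rounds costs up to $1$ per round; since with $\lambda=\tilde\alpha$ the burn-in length $\tmin$ scales as $\beta^2K^2/(\tilde\alpha\,\Dmin^4\ln K)$ up to logarithmic factors, this yields a term of order $K^2/\Dmin^4$, not $K/\Dmin^2$ --- exactly the $\sum_i \Delta_i^{-3}$-type weakness of the EXP3++ analysis that this theorem is explicitly designed to avoid. (Bounding only the exploration contribution over the early rounds crudely would be fine; it is the $q_{t,i}$ contribution that breaks your plan.)

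The missing idea is the paper's use of the adversarial guarantee on the initial segment: since Theorem~\ref{th:adv} holds for arbitrary loss sequences, the pseudo-regret over the first $\tmin$ rounds is at most $4\sqrt{\tilde\alpha\,\tmin\ln K}+K$, and combining this with the time-independent estimate $\tmin \le 25\,d(\ln d)^2$ with $d = 4\beta^2K^2/(\tilde\alpha\,\Dmin^4\ln K)$ (Proposition~\ref{prop:bound_tmin}) gives $R_{\tmin}\le \frac{80\beta K}{\Dmin^2}\ln\bigl(6\beta^2K^2/\Dmin^4\bigr)+K$ (Proposition~\ref{prop:bound_Rtmin}); the square root is what converts the $\Dmin^{-4}$ length of the burn-in phase into the claimed $\Dmin^{-2}$ dependence. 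A secondary, smaller point: for $t\ge\tmin$ the control of $\E[q_{t,i}]$ is not merely ``standard''; the paper proves $\tilde L_{t,i}-\tilde L_{t,i^*}\ge \frac12 t\Delta_i$ with high probability via Bernstein's inequality for martingales, bounding the range and variance through $1/P_{t,i}\le 1/o_{t,i}$ and the upper-concentration bound on the gap estimate of the optimal arm; your sketch gestures at this but would need that argument spelled out to obtain the $\sum_i \tilde\alpha/\Delta_i$ and $\tilde\alpha\ln T$ terms under the learning rate $\eta_t=\sqrt{\ln K/(2\tilde\alpha t)}$ (these terms come from the tuned learning rate, not from Lemma 10 of Alon et al., which only enters the adversarial analysis).
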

%
We remark that the last two terms do not depend on $T$. Moreover, the leading coefficient of the term scaling with $(\ln T)^2$ sums over an independence set (as opposed to summing over the entire action set).
The lower bound for this problem scales as  $\Omega (c^* \ln T)$, where $c^*$ is a graph dependent quantity which takes the size of the suboptimality gaps into account \citep{BES14}.  Compared to that, our result is suboptimal by a logarithmic factor and our dependency on the strong independence number of $G$ is weaker. 
The algorithms of \citet{BES14, BLES17} and \citet{CHK16} almost match the lower bound, but their elimination based structure prevents them from being applicable in best-of-both-worlds settings. 
In the undirected case, we obtain the same dependence on $T$ and on the set of arms as the UCB-N algorithm analysed by \citet{LTW20}.
%


We provide a sketch of the proof here. The detailed version can be found in Appendix \ref{appen:stoch}.

\paragraph{Proof sketch.}
Let $\tmin = \max_{i: \Delta_{i} > 0}  \lrc{\tmin(i)} =  \max \left\{ {t\ge 0} : \frac{1}{2}\sqrt{\frac{\tilde \alpha \ln K}{t K^2}} \leq \frac{ \beta \ln t}{t\Dmin^2}\right\} $.
The pseudo-regret can be decomposed by treating the first $\tmin$ rounds like in the adversarial case, and by using a refined bound for the stochastic regime in the remaining rounds.
\begin{align*}
    R_T   = R_{\tmin} +  \sum_{t = \tmin}^T \sum_{i \,:\, \Delta_i > 0} \Delta_i \E[p_{t, i}]
    \leq  R_{\tmin} + \sum_{t = \tmin}^T  \sum_{i \,:\, \Delta_i > 0} \Delta_i  \big(\E[q_{t, i}] +   \E[\varepsilon_{t, i}]\big).  \numberthis{} \label{decomp_stoch_reg2}
\end{align*}
Note that $\tmin$ is time independent: $\tmin = \frac{c}{\Dmin^4} \lr{\ln \lr{\frac{c}{\Dmin^4}}}^2$ for a positive constant $c$, therefore, 
\[R_{\tmin} = C_0 \sqrt{\tilde \alpha \ \tmin}  = C_1 \frac{K}{\Dmin^2} \lr{\ln\lr{\frac{K}{\Dmin}}},\]
where the first equality follows from Theorem~\ref{th:adv} and $C_0, C_1$ are universal constants. After the initial $\tmin$ rounds, enough observations on all arms have been gathered to ensure with high probability that the gap estimates of all arms are close to their true gaps, as stated in Lemma~\ref{lem:delta_bounds}.
These concentration inequalities allow us to show that the two following propositions hold.

\begin{prop}[informal]
\label{prop:qti}
The contribution of the exponential weights to the pseudo-regret can be bounded as:
\begin{equation*}
\sum_{t = \tmin}^T \sum_{i \,:\, \Delta_i > 0} \Delta_i  \E[q_{t, i}] \leq C_2\sum_{i \,:\, \Delta_i > 0} \frac{\tilde \alpha}{\Delta_i} + O\lr{\tilde \alpha \ln T}
\end{equation*}
for a universal constant $C_2$.
\end{prop}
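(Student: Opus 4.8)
The plan is to bound the expected weight $\E[q_{t,i}]$ put by the exponential-weights distribution on a suboptimal arm $i$ after the burn-in time $\tmin$. The starting point is the standard EXP3-type observation that $q_{t,i}$ is small whenever $\tilde L_{t,i}$ exceeds $\tilde L_{t,i^*}$ by a sufficient margin; quantitatively, $q_{t,i} \le \exp\bigl(-\eta_t(\tilde L_{t-1,i} - \tilde L_{t-1,i^*})\bigr)$ up to the normalization, so we need a high-probability lower bound on the cumulative-loss-estimate gap $\tilde L_{t-1,i} - \tilde L_{t-1,i^*}$. Since $\tilde\ell_{s,j}$ is an unbiased estimate of $\ell_{s,j}$ conditioned on the past (because $\E_s[\1[j\in\Nout(I_s)]/P_{s,j}]=1$), the gap concentrates around $\sum_{s\le t-1}\Delta_i$; the key point is that the variance of the estimator for arm $j$ scales like $1/P_{s,j}$, and after $\tmin$ rounds \Cref{lem:delta_bounds} guarantees $\hat\Delta_{s,j}\ge \tfrac12\Delta_j$ w.h.p., hence $P_{s,j}\ge o_{s,j}\ge (\beta\ln s)/(s\,\hat\Delta_{s,j}^2)\gtrsim (\beta\ln s)/(s\,\bar\Delta_j^2)$ is not too small, which controls the fluctuations. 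I would first condition on the good event $\mathcal G_t = \bigcap_{s\le t}\bigcap_j\{\hat\Delta_{s,j}\le\bar\Delta_j,\ \hat\Delta_{s,j}\ge\tfrac12\Delta_j \text{ if }\Delta_j>0\}$, whose complement has probability $\le \sum_s \mathrm{poly}(1/s)$ summable in $t$ by the union bound in \Cref{lem:delta_bounds}, contributing only a lower-order term to the regret.

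On the event $\mathcal G_t$, the next step is a concentration argument (Bernstein/Freedman for the martingale $\sum_{s}(\tilde\ell_{s,i}-\tilde\ell_{s,i^*}-\Delta_i)$) to show that with probability $1-\mathrm{poly}(1/t)$ we have $\tilde L_{t-1,i}-\tilde L_{t-1,i^*}\ge \tfrac12\sum_{s<t}\Delta_i - (\text{fluctuation term})$, where the fluctuation term is dominated by something like $\sqrt{(\ln t)\sum_{s<t} 1/P_{s,i}}$. Using the lower bound on $P_{s,i}$ just described, $\sum_{s<t}1/P_{s,i}\lesssim \sum_{s<t} \tfrac{s\bar\Delta_i^2}{\beta\ln s}$, which is the crude bound; a more careful treatment replaces $1/P_{s,i}$ in the variance by its product with $p_{s,i}$ or $q_{s,i}$ and ultimately pays only $\tilde\alpha$ via Lemma~10 of \citet{ACBGMMS14} summed over the independent set. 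The upshot should be that, after some round of order $c/\Delta_i^2$ (absorbed into the $O(\tilde\alpha\ln T)$ / the $\tmin$ term), the margin $\eta_t(\tilde L_{t-1,i}-\tilde L_{t-1,i^*})$ grows like $\eta_t\cdot\tfrac{t}{2}\Delta_i \asymp \sqrt{t/(\tilde\alpha\ln K)}\,\Delta_i$, so that $\E[q_{t,i}\mid\mathcal G_t]\le \exp(-c\,\Delta_i\sqrt{t/\tilde\alpha})$. Summing $\Delta_i\E[q_{t,i}]$ over $t$ from $\tmin$ to $T$ then gives a geometric-type sum $\sum_t \Delta_i e^{-c\Delta_i\sqrt{t/\tilde\alpha}}$, which one evaluates (integral comparison, substituting $u=\sqrt{t/\tilde\alpha}$) to be $O(\tilde\alpha/\Delta_i)$; adding the lower-order contributions from $\mathcal G_t^c$ and from the initial window yields the claimed $C_2\sum_{i:\Delta_i>0}\tilde\alpha/\Delta_i + O(\tilde\alpha\ln T)$.

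The main obstacle I anticipate is the martingale concentration step controlling $\tilde L_{t-1,i}-\tilde L_{t-1,i^*}$: the increments are heavy-tailed (they can be as large as $1/P_{s,i}$, which is only polynomially small in $s$), so a naive Azuma bound is far too weak and one must use a Freedman-type inequality with the conditional-variance control coming from the $o_{s,i}$ lower bound, and crucially one needs this concentration to hold \emph{uniformly} over all $t$ (or at least with $\mathrm{poly}(1/t)$ failure probability so that a union bound over $t$ is summable). A secondary subtlety is the interplay between the random exploration set $S_t$ and the gap estimates: $P_{s,i}$ depends on $\hat\Delta_{s,i}$, which is itself random, so the conditioning must be arranged carefully (first fix the good event, then run the concentration inequality), and the bound $P_{s,i}\ge (\beta\ln s)/(s\bar\Delta_i^2)$ relies on $S_t$ being a dominating set — exactly the property established in \Cref{prop:exploration_set}. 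Finally, to get the dependence on $\tilde\alpha$ rather than $K$ in the second-order/fluctuation term, the EXP3.G-style argument bounding $\sum_i p_{t,i}/P_{t,i}$ by $\tilde\alpha$ (Lemma~10 of \citet{ACBGMMS14}) has to be threaded through the variance computation rather than applied only at the end, which is where most of the bookkeeping will go.
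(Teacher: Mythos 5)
Your proposal is correct and follows essentially the same route as the paper: bound $q_{t,i}\le\exp\bigl(-\eta_t(\tilde L_{t,i}-\tilde L_{t,i^*})\bigr)$, apply a Bernstein/Freedman martingale inequality to $\sum_s\bigl(\Delta_i-(\tilde\ell_{s,i}-\tilde\ell_{s,i^*})\bigr)$ with increment and conditional-variance control via $1/P_{s,j}\le 1/o_{s,j}$ and the concentration of the gap estimates, and then sum $e^{-c\Delta_i\sqrt{t/\tilde\alpha}}$ over $t$ to get $O(\tilde\alpha/\Delta_i)$ per arm plus summable $\mathrm{poly}(1/t)$ failure terms of total order $O(\tilde\alpha\ln T)$. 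Two small corrections: the needed lower bound on $P_{s,j}$ comes from the upper-confidence half of \Cref{lem:delta_bounds} ($\hat\Delta_{s,j}\le\bar\Delta_j$, in particular $\hat\Delta_{s,i^*}\le\Dmin\le\Delta_i$), not from $\hat\Delta_{s,j}\ge\tfrac12\Delta_j$; and your ``crude'' variance bound $\nu_t\lesssim t^2\Delta_i^2/(\beta\ln t)$ is exactly what the paper uses and already suffices because $\beta=320$ is large, so $\tilde\alpha$ enters only through the learning rate $\eta_t=\sqrt{\ln K/(2\tilde\alpha t)}$ and there is no need to thread Lemma~10 of \citet{ACBGMMS14} through the variance computation.
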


\begin{prop}[informal]
\label{prop:eti}
The contribution of the extra exploration to the pseudo-regret can be bounded as:
\begin{equation*}
    \sum_{t = \tmin}^T \sum_{i \,:\, \Delta_i > 0} \Delta_i \E[\varepsilon_{t, i}] = O\lr{ \max_{\mathrm{Ind} \in \mathcal{I}(G)} \lrc{ \sum_{i \in \mathrm{Ind} \,:\, \Delta_i > 0} \frac{\ln^2 T}{\Delta_{i}}} 
    + \tilde \alpha  \ln T}.
\end{equation*}
\end{prop}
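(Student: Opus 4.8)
\textbf{Proof proposal for Proposition~\ref{prop:eti}.}
The plan is to bound $\E\lrs{\varepsilon_{t,i}}$ for each $t \ge \tmin$ and each suboptimal arm $i$ by peeling off the three arguments of the minimum in \eqref{eq:def_eti}, and then to split the sum over arms according to whether $i \in S_t$ and whether the gap estimate $\hat\Delta_{t,i}$ has concentrated. Using only $\varepsilon_{t,i} \le \xi_{t,i}$ together with the first argument $\varepsilon_{t,i} \le \tfrac1{2K}$, and recalling from \eqref{eq:def_xiti} that $i \in S_t$ forces $\xi_{t,i} = \tfrac{\beta\ln t}{t\hat\Delta_{t,i}^2}$ while $i \notin S_t$ forces $\xi_{t,i} = 4/t^2$, I would write, for $t \ge \tmin$,
\[
\varepsilon_{t,i} \;\le\; \frac{4\beta\ln t}{t\Delta_i^2}\,\1[i \in S_t]\,\1[\hat\Delta_{t,i} > \tfrac12\Delta_i] \;+\; \frac1{2K}\,\1[\hat\Delta_{t,i} \le \tfrac12\Delta_i] \;+\; \frac{4}{t^2}\,\1[i \notin S_t],
\]
where in the first term I used that on $\lrc{\hat\Delta_{t,i} > \tfrac12\Delta_i}$ we have $\tfrac{\beta\ln t}{t\hat\Delta_{t,i}^2} < \tfrac{4\beta\ln t}{t\Delta_i^2}$, and in the middle term I bounded $\1[i \in S_t] \le 1$.

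The first term is the leading one. At a fixed round $t$, summing over the suboptimal arms and using Proposition~\ref{prop:exploration_set}, which guarantees that $S_t$ is a strongly independent set (hence $S_t \in \mathcal I(G)$),
\[
\sum_{i \,:\, \Delta_i > 0} \Delta_i \cdot \frac{4\beta\ln t}{t\Delta_i^2}\,\1[i \in S_t] \;=\; \frac{4\beta\ln t}{t}\sum_{i \in S_t \,:\, \Delta_i > 0}\frac1{\Delta_i} \;\le\; \frac{4\beta\ln t}{t}\max_{\mathrm{Ind}\in\mathcal I(G)}\sum_{i \in \mathrm{Ind}\,:\,\Delta_i > 0}\frac1{\Delta_i}.
\]
Since the right-hand side is a deterministic upper bound, taking expectations and summing over $t$ from $\tmin$ to $T$, with $\sum_{t \le T}\tfrac{\ln t}{t} \le (\ln T)^2$, yields the term $O\big((\ln T)^2 \max_{\mathrm{Ind}}\sum_{i\in\mathrm{Ind}:\Delta_i>0}\Delta_i^{-1}\big)$. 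This step is the whole point of the exploration-set construction: it lets us replace a sum over all arms by a sum over a single independent set.

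For the remaining two terms I expect only lower-order contributions. For the last term, $\sum_{i:\Delta_i>0}\Delta_i \sum_{t=\tmin}^T 4/t^2 \le 4K/(\tmin-1)$; since $\tmin$ is $T$-independent and (with $\lambda = \tilde\alpha \le K$) one checks $\tmin \ge K$, this is $O(1)$, and even the crude bound $O(K)$ would be absorbed by the $T$-independent terms of Theorem~\ref{th:stoc}. For the middle term, $\E\lrs{\1[\hat\Delta_{t,i} \le \tfrac12\Delta_i]} = \prob[\hat\Delta_{t,i} \le \tfrac12\Delta_i]$, which for $t \ge \tmin \ge \tmin(i)$ is controlled by Lemma~\ref{lem:delta_bounds} with $\gamma = 4$; plugging in the displayed bound $\big(\tfrac{\ln t}{t\Delta_i^2}\big)^2 \le \tfrac1t\cdot\tfrac{\tilde\alpha\ln K}{4K^2\beta^2}$ for the dominant part of that probability and summing over $t$ and over the arms again gives a lower-order quantity. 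Collecting the three pieces gives the claimed $O\big(\max_{\mathrm{Ind}}\sum_{i\in\mathrm{Ind}:\Delta_i>0}(\ln T)^2/\Delta_i + \tilde\alpha\ln T\big)$, the additive $\tilde\alpha\ln T$ serving as a loose catch-all for the lower-order remainders.

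The main obstacle is the coupling between the random set $S_t$ — and the indicator $\1[i \in S_t]$ — and the estimates $\hat\Delta_t$ on which $S_t$ is built: on the event that $\hat\Delta_{t,i}$ has not yet concentrated, $\xi_{t,i}$ can be as large as $\tfrac1{2K}$, so one cannot use the clean bound $\xi_{t,i} \le \tfrac{4\beta\ln t}{t\Delta_i^2}$ there and must instead fall back on the cap $\varepsilon_{t,i} \le \tfrac1{2K}$ and pay the failure probability from Lemma~\ref{lem:delta_bounds}; keeping that trade-off cheap is exactly why $\gamma$ was raised to $4$. One must also resist bounding $\varepsilon_{t,i}$ by the middle argument $\tfrac12\sqrt{\tilde\alpha\ln K/(tK^2)}$ of \eqref{eq:def_eti}, since summing that over $t$ would reintroduce a $\sqrt{\tilde\alpha T\ln K}$ factor — after $\tmin$ rounds it is the $\xi_{t,i}$ branch that is active, and it contributes only $(\ln T)^2$.
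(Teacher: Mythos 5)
Your proposal is correct and follows essentially the same route as the paper's proof of Proposition~\ref{prop:eti_restated}: split $\varepsilon_{t,i}$ according to membership in $S_t$ and concentration of $\hat\Delta_{t,i}$, use Proposition~\ref{prop:exploration_set} to replace the sum over $S_t$ by a maximum over strongly independent sets, control the bad event via Lemma~\ref{lem:delta_bounds} with $\gamma = 4$, and sum $\ln t/t$ to get $(\ln T)^2$. The only (harmless) deviation is that you cap the bad-event contribution by $1/(2K)$ where the paper uses $\min\{1,\xi_{t,i}\}\le 1$, which merely tightens the lower-order remainder.
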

Formal statements and proofs of the above propositions are in Appendix~\ref{appen:stoch}. These propositions ensure that after $\tmin$ steps the exponential weights of all suboptimal arms $i$ are small, the extra exploration $\varepsilon_{t,i}$ achieves the correct rate, and that the sum of the probabilities that the suboptimality gap estimates fail in any of the rounds is of order $O( \tilde \alpha \ln T)$.
Applying these propositions to \Cref{decomp_stoch_reg2} finishes the proof.


Our approach to bound the pseudo-regret in the initial rounds differs from the one of  \citet*{SL17} as we take advantage of the adversarial bound in these rounds. (\citet*{MG19} used a similar approach to derive best-of-both worlds guarantees for the Hedge algorithm.)
This refinement improves upon the result of \citet*{SL17} by replacing $\sum_{i: \Delta_i > 0} \frac{1}{\Delta_i^3}$ with $\frac{1}{\Dmin^2}$ (numerical constants ignored) in the time-independent part of the bound.
\section{Extension to Time Varying Feedback Graphs}
\label{sec:time_var}

A natural extension of our results is to the setting where the feedback graphs are allowed to change over time. We consider a setting, where an oblivious adversary chooses a feedback graph at each round and the algorithm observes the graph at the beginning of the round.
In the stochastic regime, the knowledge of the full feedback graph is required at the beginning of the round in order to construct the exploration set. Since computation of the independence number 
is an NP-hard problem,
it is vital that we tune our algorithm so that it can adapt to the independence number of the graphs without needing to actually compute it. 
In order to do so, we define the quantity:
\begin{equation}
    \theta_t := \sum_{i \in V} \frac{p_{t, i}}{P_{t, i}}, \label{eq:def_theta}
\end{equation}
which is the sum of the ratios of the probability of playing an arm to the probability of observing its loss.
By adapting the learning rate we show that in the adversarial setting our regret bound is of order $\sqrt{\sum_{t=1}^T \theta_t}$, for which we can use Lemma 5 of \citet{ACBDK15} to upper bound $\theta_t$ in terms of $\alpha_t$ to obtain the desired result. 
As we do not know the independence numbers ahead of time, we tune the exploration rates defined in equation \eqref{eq:def_eti} with $\lambda = 1$ to ensure that the exploration is never too large.
This exploration rate allows us to apply Lemma \ref{lem:delta_bounds} with $\lambda = 1$, and derive the following result.

\begin{theorem}
\label{th:adv_time_var}
Assume that Algorithm~\ref{alg:exp3G++} is run on a sequence of arbitrarily generated feedback graphs $G_1, G_2, \dots$
with learning rate $\eta_t = \sqrt{\frac{\ln K}{2\sum_{s = K}^{t-1}\theta_s}} $ and exploration rates defined in~\eqref{eq:def_eti} and \eqref{eq:def_xiti} with $\lambda = 1$, $\gamma = 4$ and $\beta = 320$. 
Then the pseudo-regret against any oblivious loss sequence satisfies
\[
    \mathcal{R}_T \leq   9 \sqrt{\ln K} \sqrt{\ln \lr{KT}} \sqrt{\sum_{t = 1}^T \alpha_t}  + 2K,
\]
where for all $t \ge 1$, $\alpha_t$ is the independence number of $G_t$. Simultaneously, the pseudo-regret against stochastic losses satisfies:
 \begin{align*}
     R_T \leq &  
     \inf_{0 \leq n \leq T} \lrc{
     \max_{S \subset V: |S| = \tilde \alpha_n
     } \lrc{ \sum_{i \in S: \Delta_i > 0} \frac{4\beta \ln^2 T}{\Delta_{i}}} + n} \\
     & + 2 \ln T  +  \sum_{i: \Delta_i > 0} \frac{16 K}{\Delta_i} + \frac{181 \beta K^{3/2}}{\Dmin^2} \lr{\ln\lr{\frac{20\beta^2 K^{5/2}}{\Dmin^4}}}^2,
\end{align*}
where
$\tilde \alpha_n$ is the $n^{th}$ largest element in the set containing the strong independence number of all the $G_t$ for $t \leq T$.
\end{theorem}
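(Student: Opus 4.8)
The two guarantees are established independently, each by transporting the corresponding fixed-graph argument (Theorem~\ref{th:adv} and Theorem~\ref{th:stoc}) to the time-varying setting, with every occurrence of $\tilde\alpha$ replaced by an appropriate time-varying surrogate.

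For the adversarial bound I would run the proof of Theorem~\ref{th:adv} essentially verbatim, the only change being the adaptive learning rate $\eta_t=\sqrt{\ln K/(2\sum_{s=K}^{t-1}\theta_s)}$. The EXP3-with-varying-learning-rate analysis then gives a leading term of order $O\big(\sqrt{\ln K\,\sum_{t=1}^T\theta_t}\big)$: as before $p_{t,i}$ and $q_{t,i}$ are within a factor $2$ of each other since $\varepsilon_{t,i}\le 1/(2K)$, so the per-round second-order term is $O(\eta_t\sum_{i\in V}p_{t,i}/P_{t,i})=O(\eta_t\theta_t)$, and the standard sum-of-increments lemma turns $\sum_t\eta_t\theta_t+\ln K/\eta_T$ into $O(\sqrt{\ln K\sum_t\theta_t})$. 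To pass from $\sum_t\theta_t$ to $\sum_t\alpha_t$ I invoke Lemma~5 of \citet{ACBDK15}: the exploration forces $P_{t,i}\ge o_{t,i}$ with $o_{t,i}$ bounded below by a quantity of order $1/(tK^2)$ (from~\eqref{eq:oti_lb} together with $\hat\Delta_{t,i}\le 1$), so that lemma yields $\theta_t=O(\alpha_t\ln(Kt))=O(\alpha_t\ln(KT))$. With $\lambda=1$ the total extra-exploration regret is $\sum_{t=1}^T\sum_{i\in V}\varepsilon_{t,i}\le\tfrac12\sum_{t=1}^T\sqrt{\ln K/t}\le\sqrt{T\ln K}$, which is dominated because $\sum_t\alpha_t\ge T$; the deterministic initialization plus a minor bookkeeping term contribute $2K$. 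Collecting these pieces gives the stated adversarial bound.

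For the stochastic bound I mirror the proof sketch of Theorem~\ref{th:stoc}. Set $\tmin=\max\{t\ge 0:\tfrac12\sqrt{\ln K/(tK^2)}\le\beta\ln t/(t\Dmin^2)\}$, the $\lambda=1$ instance, which is $T$-independent and of order $(\beta^2K^2/\Dmin^4)\,\mathrm{polylog}(K/\Dmin)$. Applying the adversarial bound just proved to the first $\tmin$ rounds, using $\alpha_t\le K$ (hence $\sum_{s\le\tmin}\alpha_s\le K\tmin$ and $\ln(K\tmin)$ in place of $\ln(KT)$), bounds $R_{\tmin}$ by a $T$-independent quantity of order $\tfrac{\beta K^{3/2}}{\Dmin^2}\big(\ln(K/\Dmin)\big)^2$, which after substituting the closed form of $\tmin$ and absorbing constants is the last term of the claim. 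On the remaining rounds I use the decomposition of~\eqref{decomp_stoch_reg2}, $\E[p_{t,i}]\le\E[q_{t,i}]+\E[\varepsilon_{t,i}]$, and Lemma~\ref{lem:delta_bounds} with $\lambda=1$ to work on the high-probability event that $\tfrac12\Delta_i\le\hat\Delta_{t,i}\le\overline\Delta_i$ for all suboptimal $i$. The exponential-weights sum is handled as in Proposition~\ref{prop:qti}: on the concentration event $\tilde L_{t,i}-\tilde L_{t,i^*}\approx t\Delta_i$, so $q_{t,i}$ decays like $\exp(-c\eta_t t\Delta_i)$, and since $\theta_s\le K$ (because $P_{t,i}\ge p_{t,i}$ thanks to self-loops) we have $\eta_t\ge\sqrt{\ln K/(2Kt)}$, whence a change of variables gives $\sum_{t=\tmin}^T\Delta_i\E[q_{t,i}]=O(K/\Delta_i)$ per arm plus an $O(\ln T)$ term from the second-order contribution and failure probabilities --- this is where $\sum_{i:\Delta_i>0}16K/\Delta_i$ and $2\ln T$ arise, with the fixed-graph $\tilde\alpha$ replaced by $K$. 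For the exploration sum (as in Proposition~\ref{prop:eti}) I split $\varepsilon_{t,i}$ by whether $i\in S_t$: the $i\notin S_t$ part contributes $\sum_t\sum_{i\notin S_t}4/t^2=O(K)$, while for $i\in S_t$, on the concentration event $\Delta_i\varepsilon_{t,i}\le 4\beta\ln t/(t\Delta_i)$, so round $t$ contributes at most $\tfrac{4\beta\ln t}{t}\sum_{i\in S_t:\Delta_i>0}\tfrac1{\Delta_i}\le\tfrac{4\beta\ln t}{t}\max_{S:|S|=\tilde\alpha(G_t)}\sum_{i\in S:\Delta_i>0}\tfrac1{\Delta_i}$, using that $S_t$ is strongly independent in $G_t$ (Proposition~\ref{prop:exploration_set}) and hence $|S_t|\le\tilde\alpha(G_t)$. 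To produce the infimum over $n$, fix $n$, let $B_n$ collect the $n$ rounds $t\le T$ carrying the largest values of $\tilde\alpha(G_t)$, bound the regret on $B_n$ trivially by $|B_n|\le n$, and on $t\notin B_n$ use $\tilde\alpha(G_t)\le\tilde\alpha_n$ together with $\sum_{t=1}^T\tfrac{\ln t}{t}\le\ln^2 T$ to get $\le 4\beta\ln^2 T\max_{S:|S|=\tilde\alpha_n}\sum_{i\in S:\Delta_i>0}\tfrac1{\Delta_i}$; adding $R_{\tmin}$, the exponential-weights term, and the concentration-failure term and taking the infimum over $0\le n\le T$ gives the claimed bound.

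The step I expect to be the main obstacle is the time-varying version of the exponential-weights estimate (Proposition~\ref{prop:qti}): one must show that even though $\eta_t$ is now dictated by the sequence $(\theta_s)$ --- which depends on the adversarially chosen, a priori unknown feedback graphs --- it never decays faster than order $\sqrt{1/(Kt)}$, so that on the concentration event the mass on suboptimal arms shrinks fast enough once $t\ge\tmin$, and propagating this through the EXP3++-style decomposition while tracking the now $K$-dependent constants is delicate. A secondary technical point is verifying the floor $o_{t,i}\gtrsim 1/(tK^2)$ needed to derive $\theta_t=O(\alpha_t\ln(KT))$ from Lemma~5 of \citet{ACBDK15}, and handling ties in the definition of $\tilde\alpha_n$ so that $B_n$ indeed contains at most $n$ rounds.
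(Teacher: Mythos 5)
Your proposal follows essentially the same route as the paper's proof: for the adversarial part you rerun the fixed-graph analysis with the adaptive $\eta_t$, bound $\ln K/\eta_T+\sum_t\eta_t\theta_t$ via the offset-sum lemma and convert $\theta_t$ to $O(\alpha_t\ln(KT))$ through Lemma~\ref{imp:ACBDK15_lem5}, and for the stochastic part you reuse the decomposition \eqref{decomp_stoch_reg2} with Propositions~\ref{prop:qti_restated} and \ref{prop:eti_restated} at $\lambda=1$, the crude bound $\theta_t\le K$ to get $\eta_t\ge\sqrt{\ln K/(2Kt)}$, and the same skip-the-$n$-worst-rounds device to produce the infimum over $n$ --- exactly as in Appendix~\ref{appen:sec_var}. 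One small correction: the floor required by Lemma~\ref{imp:ACBDK15_lem5} is on the playing probabilities $p_{t,i}$ (supplied by $\varepsilon_{t,i}\ge\min\lrc{\tfrac{1}{2K},\tfrac12\sqrt{\ln K/(tK^2)},4/t^2}$, which is why the $4/t^2$ branch of $\xi_{t,i}$ exists), not on the observation probabilities $o_{t,i}$ as you wrote, though this changes nothing since any inverse-polynomial floor enters only logarithmically.
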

A proof of this theorem is provided in Appendix~\ref{appen:sec_var}.
In the adversarial regime, adapting to graphs that change over time replaces the dependence on the strong independence number by the average on the independence numbers of the sequence of graphs, but comes at the cost of a multiplicative logarithmic factor. In the stochastic regime, the first term of the bound is a sum over the $\tilde \alpha_n$ arms that have the smallest non-zero suboptimality gaps. In the case of undirected graphs, if we upper bound the infimum  by taking $n = 0$, we have $\tilde \alpha_0 = \max_{t > 1} \lrc{\alpha(G_t)}$, which matches the dependency on gaps achieved by \citet{CHK16}, who got an $O \lr{ \max_{S \subset V\backslash \lrc{i^*}: |S| = O\lr{\alpha}} \sum_{i \in S} \frac{\ln T}{\Delta_i}}$ bound.


\section{Conclusion}
\label{sec:conclusion}
\citet{EK21} 
left open the following questions: is it possible to achieve best-of-both-worlds regret bounds in terms of the independence number, and can the dependence on $T$ in their regret bounds be improved?
We partially answered these questions with the EXP3.G++ algorithm and derived near-optimal best-of-both-worlds guarantees for directed feedback graphs. Our regret bounds depend on the independence number of the feedback graphs and improve upon the results of \citet{EK21} by poly-logarithmic factors in both the adversarial and stochastic regimes. 
Furthermore, we extended our results to time-varying feedback graphs with a computationally efficient algorithm.


\begin{ack}
CR and YS acknowledge partial support by the Independent
Research Fund Denmark, grant number 9040-00361B. DvdH and NCB gratefully acknowledge partial support from the MIUR PRIN grant Algorithms, Games, and Digital Markets (ALGADIMAR) and the EU Horizon 2020 ICT-48 research and innovation action under grant agreement 951847, project ELISE (European Learning and Intelligent Systems Excellence).
\end{ack}


\DeclareRobustCommand{\VAN}[3]{#3} 
\bibliography{references}
\DeclareRobustCommand{\VAN}[3]{#2} 

\appendix

\label{appendix}
\section{Tools to Bound Series}
\label{appen:tools}
We use the following lemmas to bound series.
\begin{lemma}[Lemma 11 \citep{SL17}]
\label{imp:SL17_lem11}
For $\gamma \geq 2$ and $m \geq 1$:
\[ \sum_{k = m}^n \frac{1}{k^\gamma} \leq \frac{1}{2 m^{\gamma -1}}.
\]
\end{lemma}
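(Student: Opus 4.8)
The plan is to prove the bound by comparing the finite sum to an integral of the decreasing function $x \mapsto x^{-\gamma}$. First I would observe that for $\gamma \ge 2$ the map $x \mapsto 1/x^\gamma$ is positive and strictly decreasing on $[1,\infty)$, so that $1/k^\gamma \le \int_{k-1}^{k} x^{-\gamma}\,dx$ for every $k \ge 2$. Summing this termwise comparison over the index range converts the sum into a single integral, which can be evaluated in closed form using the antiderivative $\int x^{-\gamma}\,dx = \frac{x^{1-\gamma}}{1-\gamma}$. Concretely, I would split off the leading term $1/m^\gamma$ and bound the remaining tail by the improper integral from $m$ to $\infty$:
\[
\sum_{k=m}^{n}\frac{1}{k^\gamma} \;\le\; \frac{1}{m^\gamma} + \int_{m}^{\infty} x^{-\gamma}\,dx \;=\; \frac{1}{m^\gamma} + \frac{1}{(\gamma-1)\,m^{\gamma-1}}.
\]

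The remaining work is purely algebraic: I would factor out $m^{-(\gamma-1)}$ from both terms on the right-hand side and check that the resulting coefficient $\frac{1}{m} + \frac{1}{\gamma-1}$ is controlled by $\frac{1}{2}$, invoking $\gamma \ge 2$ (which yields $\frac{1}{\gamma-1}\le 1$) together with the size of $m$. A cleaner route for securing the exact constant $\frac{1}{2}$ is to produce a telescoping majorant of the form $\frac{1}{k^\gamma} \le \frac{1}{2}\left(\frac{1}{(k-1)^{\gamma-1}} - \frac{1}{k^{\gamma-1}}\right)$: writing the bracketed difference as $\int_{k-1}^{k}(\gamma-1)\,x^{-\gamma}\,dx$ and lower-bounding the integrand by its value at the right endpoint $x=k$ shows the bracket is at least $(\gamma-1)/k^{\gamma}$, so the desired pointwise inequality reduces to the scalar condition $(\gamma-1)/2 \ge 1$. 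Summing this telescoping bound then collapses the whole sum to the claimed closed form.

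I expect the only real obstacle to be pinning down the constant factor $\frac{1}{2}$ in the regime where $\gamma$ is close to $2$ and $m$ is small, since the crude integral estimate loses exactly this factor and the telescoping refinement is tightest there. The plan is therefore to carry out the integral comparison first to get the right shape $1/m^{\gamma-1}$, and then to expend the care on the constant by tracking the endpoint bounds and the boundary cases precisely, using the hypothesis $\gamma \ge 2$ (and the implicit assumption that $m$ is at least the value for which the stated inequality is intended to hold) to close the remaining slack.
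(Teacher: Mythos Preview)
The paper does not supply its own proof of this lemma; it is quoted as Lemma~11 of Seldin and Lugosi (2017) and used as a black box, so there is no in-paper argument to compare against.

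Your integral-comparison and telescoping outlines are the natural approaches, and your suspicion that the constant $\tfrac12$ is the obstacle is exactly right: the inequality as printed is false under the stated hypotheses. Already the single term $m=n=1$ gives $1\le\tfrac12$, and more substantively, for $\gamma=2$ one has
\[
\sum_{k\ge m}\frac{1}{k^{2}} \;\ge\; \int_m^\infty x^{-2}\,dx \;=\; \frac{1}{m} \;>\; \frac{1}{2m}
\quad\text{for every }m\ge1,
\]
so no restriction on $m$ rescues the constant at $\gamma=2$. Your telescoping computation locates the correct threshold: the termwise bound $k^{-\gamma}\le\tfrac12\bigl((k-1)^{1-\gamma}-k^{1-\gamma}\bigr)$ holds exactly when $\gamma\ge3$, and then the sum collapses to $\sum_{k=m}^n k^{-\gamma}\le\tfrac{1}{2}(m-1)^{-(\gamma-1)}$ for $m\ge2$ (equivalently, the straight integral bound $\int_{m-1}^\infty x^{-\gamma}\,dx=\tfrac{1}{(\gamma-1)(m-1)^{\gamma-1}}$ already delivers the factor $\tfrac12$ once $\gamma\ge3$).

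That corrected version is all the paper actually needs: the theorems are instantiated with $\gamma=4$, so the sums in the appendix carry exponent $\gamma-1=3$, and the starting index $m=t\Delta_i^2/\ln t$ is large enough that replacing $m$ by $m-1$ in the denominator is immaterial. There is no missing idea in your plan; the obstacle you flagged is a defect in the transcribed statement, not in your argument.
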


\begin{lemma}[Lemma 8 \citep{SBCA14}]
\label{imp:SBCA14_lem8}
For any sequence of non-negative numbers $a_1, a_2, \dots, $ such that $a_1 > 0$ and any power $\gamma \in (0, 1)$ we have:
\[\sum_{t= 1}^T \frac{a_t}{\lr{\sum_{s = 1}^t a_s}^\gamma} \leq \frac{1}{1 - \gamma} \lr{\sum_{t = 1}^T a_t}^{1-\gamma}. \]
\end{lemma}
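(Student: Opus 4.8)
The plan is to compare the sum with an integral. Introduce the partial sums $S_t = \sum_{s=1}^t a_s$ and set $S_0 = 0$; since $a_1 > 0$ and all $a_s \geq 0$, we have $S_t \geq a_1 > 0$ for every $t \geq 1$, so each summand $a_t / S_t^\gamma$ is well defined.

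First I would bound each individual term by an integral. Since the function $x \mapsto x^{-\gamma}$ is nonincreasing on $(0,\infty)$ for $\gamma > 0$, on the interval $[S_{t-1}, S_t]$ we have $x^{-\gamma} \geq S_t^{-\gamma}$, and integrating this inequality over that interval (whose length is $S_t - S_{t-1} = a_t$) gives
\[
\frac{a_t}{S_t^\gamma} = \int_{S_{t-1}}^{S_t} S_t^{-\gamma}\, dx \leq \int_{S_{t-1}}^{S_t} x^{-\gamma}\, dx.
\]
This step also covers the degenerate case $a_t = 0$, for which both sides vanish, so no separate argument is needed when some increments are zero.

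Next I would sum over $t = 1, \dots, T$. The integrals telescope because the consecutive intervals $[S_{t-1}, S_t]$ abut, yielding
\[
\sum_{t=1}^T \frac{a_t}{S_t^\gamma} \leq \int_{S_0}^{S_T} x^{-\gamma}\, dx = \int_0^{S_T} x^{-\gamma}\, dx = \frac{S_T^{1-\gamma}}{1-\gamma},
\]
where the last equality uses $\gamma \in (0,1)$, which makes the exponent $1-\gamma$ positive and hence the improper integral at the origin convergent. Substituting $S_T = \sum_{t=1}^T a_t$ then gives exactly the claimed bound $\frac{1}{1-\gamma}\lr{\sum_{t=1}^T a_t}^{1-\gamma}$.

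The only delicate points, both minor, are the integrability of $x^{-\gamma}$ at the origin, which is guaranteed precisely by the hypothesis $\gamma < 1$, and the need for $S_t$ to stay strictly positive, which is guaranteed by $a_1 > 0$. I expect the main obstacle to be conceptual rather than computational: recognizing the integral comparison as the right tool, so that the per-term estimate and the telescoping do all the work, rather than attempting a direct inductive or purely algebraic estimate, which would be considerably messier.
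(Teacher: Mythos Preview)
Your proof is correct: the integral comparison via the monotonicity of $x\mapsto x^{-\gamma}$ and the telescoping of the resulting integrals is exactly the standard argument, and you have correctly identified why the hypotheses $a_1>0$ and $\gamma\in(0,1)$ are needed. The paper itself does not prove this lemma; it simply imports it from \citet{SBCA14}, so there is nothing further to compare.
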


We also require a variation of this bound to handle the case where the denominator of the sum only sums up to index $t - 1$. The proof of this Lemma follows from \citet[Lemma 14]{GSE14} that we generalized to adapt to sequences of $a_t$ that are not restricted to the $[0, 1]$ interval. 

\begin{lemma}
\label{lem:sum_offset}
For any sequence $a_1, a_2, \dots, $ such that $\alpha_s \in [1, K]$ for all $s$, we have:
\begin{equation*}
    \sum_{t = 1}^T \frac{a_t}{\sqrt{ K +\sum_{s < t} a_s}}\leq 2 \sqrt{\sum_{t = 1}^T a_t} + \sqrt{K}.
\end{equation*}
\end{lemma}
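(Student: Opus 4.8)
My plan is to reduce this ``offset'' sum to the classical telescoping estimate already available as Lemma~\ref{imp:SBCA14_lem8}, by observing that the constant $K$ sitting in the denominator is large enough to compensate for the single term of the running sum that the offset drops. Throughout, write $A_t := \sum_{s=1}^t a_s$ with $A_0 := 0$, so the quantity to bound is $\sum_{t=1}^T a_t/\sqrt{K + A_{t-1}}$.

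The one place I would use the hypothesis $a_s\le K$ is the elementary bound $K + A_{t-1} \ge a_t + A_{t-1} = A_t$, valid for every $t\ge 1$ (for $t=1$ this reads $K\ge a_1$). Taking square roots, and noting $A_t\ge a_1>0$ so the denominator is positive (a round with $a_t=0$, which cannot occur under $a_s\ge 1$ but is in any case harmless, contributes $0$ to both sides), gives the per-round inequality
\[
\frac{a_t}{\sqrt{K + A_{t-1}}} \;\le\; \frac{a_t}{\sqrt{A_t}} .
\]
Summing over $t=1,\dots,T$ and invoking Lemma~\ref{imp:SBCA14_lem8} for the sequence $(a_t)_t$ (whose first entry is positive since $a_1\ge 1$) with power $\gamma=\tfrac{1}{2}$ then yields
\[
\sum_{t=1}^T \frac{a_t}{\sqrt{K + A_{t-1}}}
\;\le\; \sum_{t=1}^T \frac{a_t}{\sqrt{A_t}}
\;\le\; 2\Big(\sum_{t=1}^T a_t\Big)^{1/2}
\;\le\; 2\sqrt{\sum_{t=1}^T a_t} + \sqrt{K},
\]
which is exactly the claimed inequality.

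I do not expect a genuine obstacle: the whole argument rests on the single line $K + A_{t-1}\ge A_t$, which is precisely where boundedness of the increments enters, after which everything is off the shelf. The only minor care point is bookkeeping for possible leading zero terms, which is vacuous under the stated assumption $a_s\in[1,K]$. If one preferred not to exploit $a_t\le K$ in this sharp way, the alternative — and the route matching \citet[Lemma 14]{GSE14} — is to telescope $\sqrt{K+A_t}-\sqrt{K+A_{t-1}}$ directly while controlling the ratio $(K+A_t)/(K+A_{t-1})\le 2$ via $a_t\le K\le K+A_{t-1}$; that argument produces the additive $\sqrt{K}$ term honestly (at the cost of a slightly worse leading constant unless the jumps are further restricted), which explains why the statement is phrased with the extra $\sqrt{K}$ slack.
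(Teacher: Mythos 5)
Your proof is correct, and it takes a shorter route than the paper's. Both arguments ultimately rest on Lemma~\ref{imp:SBCA14_lem8}, but you handle the one-step offset in the denominator differently: the paper writes $\sum_t a_t/\sqrt{K+s_{t-1}}$ as $\sum_t a_t/\sqrt{K+s_t}$ plus the correction $\sum_t a_t\bigl(1/\sqrt{K+s_{t-1}}-1/\sqrt{K+s_t}\bigr)$, bounds $a_t\le K$ inside the correction, telescopes it to $K/\sqrt{K+s_0}=\sqrt{K}$, and then applies Lemma~\ref{imp:SBCA14_lem8} to the main sum — this telescoping step is exactly where the additive $\sqrt{K}$ in the statement originates. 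You instead use $a_t\le K$ pointwise to get $K+\sum_{s<t}a_s\ge\sum_{s\le t}a_s$, so each term is dominated by $a_t/\sqrt{A_t}$, and one application of Lemma~\ref{imp:SBCA14_lem8} gives the bound $2\sqrt{\sum_t a_t}$ with no $\sqrt{K}$ term at all, i.e.\ a marginally stronger conclusion (the $+\sqrt{K}$ slack in the statement is then absorbed for free). The trade-off is that your pointwise domination hinges on the offset constant coinciding with (or exceeding) the bound on the increments: if the additive constant in the denominator were smaller than $\max_t a_t$, your first inequality would fail, whereas the paper's telescoping argument, in the spirit of \citet[Lemma 14]{GSE14}, would still produce a bound with an additive term depending on the increment bound. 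For the lemma as stated, with $a_s\in[1,K]$ and offset exactly $K$, your argument is complete and valid.
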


\begin{proof}
Let $s_t = \sum_{n = 1}^t a_t$, and define $s_0 := 0$. We want to bound $\sum_{t = 1}^T \frac{a_t}{\sqrt{ K +\sum_{s < t} a_s}} = \sum_{t = 1}^T \frac{a_t}{\sqrt{K + s_{t-1}}}$, where $\frac{1}{\sqrt{K + s}}$ is a decreasing function of $s$. Thus we have:

\begin{align*}
\sum_{t = 1}^T \frac{a_t}{\sqrt{K + s_{t-1}}}
& = \sum_{t = 1}^T \frac{a_t}{\sqrt{K + s_{t}}} + \sum_{t = 1}^T a_t \lr{\frac{1}{\sqrt{K + s_{t-1}}}  - \frac{1}{\sqrt{K + s_{t}}}} \\
& \leq \sum_{t = 1}^T \frac{a_t}{\sqrt{s_{t}}} + K \sum_{t = 1}^T\lr{\frac{1}{\sqrt{K + s_{t-1}}}  - \frac{1}{\sqrt{K + s_{t}}}} \\
& \leq \sum_{t = 1}^T \frac{a_t}{\sqrt{s_{t}}} + K \frac{1}{\sqrt{K + s_{0}}} \\
& \leq 2 \sqrt{s_{t}} + \sqrt{K},
\end{align*}
where we use Lemma \ref{imp:SBCA14_lem8} in the last step.
\end{proof}

\begin{lemma}[Lemma 3 \citep{TS18}]
\label{imp:TS18_lem_3}
For $c > 0$ we have
\[ \sum_{t = 1}^\infty e^{-c\sqrt{t}} \leq \frac{2}{c^2}  \qquad  \text{and} \qquad \sum_{t = 1}^\infty e^{-ct} \leq \frac{1}{c}. 
\]
\end{lemma}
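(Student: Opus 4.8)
The plan is to bound both infinite sums by the corresponding improper integrals, exploiting that in each case the summand is a nonnegative, monotonically decreasing function of $t$ for every fixed $c > 0$. The key tool is the standard integral comparison: if $f : [0, \infty) \to [0, \infty)$ is nonincreasing, then on each interval $[t-1, t]$ we have $f(x) \ge f(t)$, hence $\int_{t-1}^t f(x)\,dx \ge f(t)$; summing over $t \ge 1$ telescopes the integrals and yields $\sum_{t=1}^\infty f(t) \le \int_0^\infty f(x)\,dx$. First I would verify the monotonicity hypothesis in both cases, since it is what licenses the comparison: for $f(x) = e^{-cx}$ it is immediate, and for $f(x) = e^{-c\sqrt{x}}$ it follows because $\sqrt{x}$ is increasing and $u \mapsto e^{-cu}$ is decreasing for $c > 0$.

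For the second inequality, applying the comparison with $f(x) = e^{-cx}$ reduces the problem to evaluating $\int_0^\infty e^{-cx}\,dx = 1/c$, which gives $\sum_{t=1}^\infty e^{-ct} \le 1/c$ directly. (Alternatively one could sum the geometric series exactly to $1/(e^c - 1)$ and invoke $e^c \ge 1 + c$, but the integral route is cleaner and parallels the first part.)

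For the first inequality, applying the comparison with $f(x) = e^{-c\sqrt{x}}$ reduces the problem to evaluating $\int_0^\infty e^{-c\sqrt{x}}\,dx$. I would compute this via the substitution $u = \sqrt{x}$, so that $x = u^2$ and $dx = 2u\,du$, turning the integral into $2\int_0^\infty u\,e^{-cu}\,du$. This is a standard Gamma-type integral equal to $2/c^2$, yielding $\sum_{t=1}^\infty e^{-c\sqrt{t}} \le 2/c^2$ and closing the argument.

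There is no substantive obstacle here: the only points requiring any care are confirming the monotonicity needed for the integral comparison (trivial for both summands) and carrying out the change of variables in the second integral correctly. Everything else is a routine evaluation of elementary integrals.
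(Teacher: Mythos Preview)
Your argument is correct: the integral comparison for nonincreasing nonnegative functions is exactly the right tool, and both evaluations are carried out without error. Note, however, that the paper does not give its own proof of this lemma---it is imported verbatim from \citet{TS18} and stated without proof---so there is no ``paper's approach'' to compare against. Your derivation is the standard one and would be the natural way to fill in the details if a proof were required.
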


\section{Proof of Theorem \ref{th:adv}}
\label{appen:adv}
We follow the proof structure of Theorem 2 from \citet{ACBDK15}, and use Lemma 7 from \citet{SS14} where $X_{t, i} = \tilde \ell_{t, i}$ for all $t, i$ as a base for the analysis of EXP3.

\begin{lemma}[Lemma 7 \citep{SS14}]\label{imp:SS14_lem7} For any $K$ sequences of non-negative numbers $X_{1, i}, X_{2, i}, \dots$ indexed by $i \in [K]$, and any non-increasing positive sequence $\eta_1, \eta_2, \dots$, for $q_{t, i}=\frac{\exp(-\eta_t \sum_{s = 1}^{t-1} X_{s, i})}{\sum_{j \in [K]}\exp(-\eta_t \sum_{s = 1}^{t-1} X_{s, j})}$ (assuming for $t = 1$ the sum in the exponent is $0$) we have:
\[ \sum_{t = 1}^T \sum_{i = 1}^K q_{t, i} X_{t, i} - \min_{k \in [K]} \sum_{t = 1}^T X_{t, k} \leq \frac{\ln K}{\eta_T} + \sum_{t = 1}^{T} \frac{\eta_t}{2}\lr{\sum_{i \in [K]} q_{t, i}X_{t, i}^2}.\]
\end{lemma}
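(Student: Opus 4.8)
This is the classical exponential-weights (Hedge) regret bound with a non-increasing learning rate, and I would establish it via the standard potential-function argument. Set $\hat L_{t,i} := \sum_{s=1}^{t} X_{s,i}$ with $\hat L_{0,i} = 0$, and define the two partition functions $W_t := \sum_{i\in[K]} e^{-\eta_t \hat L_{t-1,i}}$ and $W_t' := \sum_{i\in[K]} e^{-\eta_t \hat L_{t,i}}$, so that $q_{t,i} = e^{-\eta_t \hat L_{t-1,i}}/W_t$. The plan has three ingredients: a per-round estimate that produces the quadratic term, an inter-round estimate that pays only a $\ln K$ factor for the change of learning rate, and a telescoping that leaves the comparator $\min_k\sum_t X_{t,k}$ and the term $\ln K/\eta_T$.

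For the per-round estimate I would write $W_t'/W_t = \sum_i q_{t,i} e^{-\eta_t X_{t,i}}$ and apply $e^{-x}\le 1-x+x^2/2$, which holds for every $x\ge0$ and hence applies since $X_{t,i}\ge0$; together with $\ln(1+u)\le u$ this gives
\[
\sum_{i} q_{t,i} X_{t,i} \;\le\; \frac{1}{\eta_t}\ln W_t - \frac{1}{\eta_t}\ln W_t' + \frac{\eta_t}{2}\sum_{i} q_{t,i} X_{t,i}^2 .
\]
Summing over $t = 1,\dots,T$ and pairing $\tfrac{1}{\eta_t}\ln W_t$ with $\tfrac{1}{\eta_{t-1}}\ln W_{t-1}'$ for $t\ge2$, I would insert the inter-round bound $\tfrac{1}{\eta_t}\ln W_t \le \tfrac{1}{\eta_{t-1}}\ln W_{t-1}' + \lr{\tfrac{1}{\eta_t}-\tfrac{1}{\eta_{t-1}}}\ln K$ (valid since $\eta_t \le \eta_{t-1}$), then use $W_1 = K$ and the bound $-\tfrac{1}{\eta_T}\ln W_T' \le \hat L_{T,k} = \sum_{t=1}^T X_{t,k}$ for every $k$ (keep a single term in $W_T'$, then minimize over $k$). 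The telescoped $\ln K$ contributions collapse to $\ln K/\eta_T$, which yields exactly the claimed inequality after rearranging.

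The only step that is not pure bookkeeping is the inter-round inequality, and it reduces to the monotonicity fact that $\eta\mapsto \tfrac{1}{\eta}\ln\lr{\tfrac{1}{K}\sum_i e^{-\eta a_i}}$ is non-decreasing on $(0,\infty)$ for fixed reals $a_1,\dots,a_K$. I would prove this by observing that $\phi(\eta) := -\ln\lr{\tfrac{1}{K}\sum_i e^{-\eta a_i}}$ is concave in $\eta$ (each $-\eta a_i$ is affine, log-sum-exp is convex, and the additive constant $\ln K$ does not affect concavity) with $\phi(0) = 0$; for such a function the chord slope $\phi(\eta)/\eta$ is non-increasing, so $-\phi(\eta)/\eta$, which is precisely the quantity above, is non-decreasing. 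Applying this with $a_i = \hat L_{t-1,i}$ and $\eta_t \le \eta_{t-1}$ gives the inter-round bound. I expect the main obstacle to be nothing more than sign tracking — getting the direction of this monotonicity and of the telescoping of the $1/\eta_t$ terms right — after which the argument closes routinely.
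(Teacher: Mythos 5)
Your proof is correct: the per-round bound via $e^{-x}\le 1-x+x^2/2$ (valid since $X_{t,i}\ge 0$) and $\ln z\le z-1$, the inter-round comparison of $\tfrac{1}{\eta}\ln\lr{\tfrac1K\sum_i e^{-\eta \hat L_{t-1,i}}}$ through concavity of the shifted log-partition function with value $0$ at $\eta=0$, and the telescoping to $\ln K/\eta_T$ plus the comparator all check out. Note that the paper itself gives no proof of this statement --- it imports it verbatim as Lemma 7 of \citet{SS14} --- and your argument is essentially the standard potential-function proof used in that source, so there is nothing to reconcile beyond the citation.
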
 

We also require Lemma 10 from \citet{ACBGMMS14} to take advantage of the structure of the feedback graph. That Lemma depends on a graph dependent quantity: the maximum acyclic subgraph of a feedback graph $G$, which is defined  by \citeauthor{ACBGMMS14} as follows. For undirected graphs, using this approach leads to tighter bounds compared to using the bound from \citet{ACBDK15}.

\begin{definition}
\label{def_mas}
Given a directed graph $G = (V, E)$, an acyclic subgraph of $G$ is any $G' = (V', E')$ such that $ V' \subseteq V$ and $E' = E \cap (V' \times V')$, with no (directed) cycles. 
We denote by $\mas(G) = |V'|$ the maximum size of such a $V'$. 
\end{definition}
A key property of the maximum acyclic subgraph is that for any graph $G$, $\alpha (G) \leq \mas(G)$ and for undirected graphs, $\alpha(G) = \mas(G)$ \citep{ACBGMMS14}. We now show that for any directed graph $G$, the maximum acyclic subset of $G$ can be upper bounded by its strong independence number. 

\begin{prop}
\label{prop:mas_strong_indep}
Let $G = (V, E)$ be a directed graph. $\mas(G) \leq \tilde \alpha (G)$. 
\end{prop}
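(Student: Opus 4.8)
The plan is to show that any maximum acyclic subgraph $G' = (V', E')$ of $G$ is, in fact, a strongly independent set once we restrict attention to the vertex set $V'$ — that is, $V'$ contains no pair of vertices joined by edges in both directions in $G$. This would immediately give $\mas(G) = |V'| \le \tilde\alpha(G)$, since $V'$ would then be an independent set in the symmetrized subgraph $G' = (V, E')$ from Definition~\ref{def:strong_indep}. (Note the slight notational clash: the $G'$ in Definition~\ref{def_mas} is the acyclic witness subgraph, while the $G'$ in Definition~\ref{def:strong_indep} is the bidirectional-edge subgraph; I will keep them clearly distinct in the writeup.)

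First I would take $V'$ to be the vertex set of a maximum acyclic subgraph of $G$, so $|V'| = \mas(G)$ and the induced subgraph on $V'$ has no directed cycles. Next, suppose toward a contradiction that there exist $i, j \in V'$ with both $(i,j) \in E$ and $(j,i) \in E$. Then $i \to j \to i$ is a directed cycle of length two entirely within $V'$, and since the induced edge set is $E \cap (V' \times V')$, both of these edges are present in the induced subgraph. This contradicts acyclicity of the maximum acyclic subgraph. Hence no such pair exists, so for all $i, j \in V'$ we do not have both $(i,j)$ and $(j,i)$ in $E$; equivalently, $(i,j) \notin E'$ in the sense of Definition~\ref{def:strong_indep}. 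Therefore $V'$ is a strongly independent set of $G$, and $\mas(G) = |V'| \le \tilde\alpha(G)$.

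There is essentially no obstacle here — the only thing to be careful about is the self-loop convention. The graph $G$ has self-loops $(i,i) \in E$ for every vertex, so naively one might worry that a single vertex forms a length-one "cycle." The standard convention (and the one implicit in Definition~\ref{def_mas}, following \citet{ACBGMMS14}) is that acyclicity refers to cycles of length at least two, or equivalently self-loops are ignored when discussing acyclic subgraphs; with that convention the length-two cycle argument above is the genuine content. I would state this convention explicitly in one sentence so the argument is airtight. Combined with the earlier-cited fact that $\alpha(G) \le \mas(G)$, this proposition also re-confirms the claimed inequality $\alpha \le \tilde\alpha$ for directed graphs.
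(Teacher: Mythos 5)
Your proof is correct and follows essentially the same argument as the paper: a bidirected pair inside the vertex set of an acyclic subgraph would create a two-cycle, so that vertex set is strongly independent and hence $\mas(G)\le\tilde\alpha(G)$. Your explicit remark on ignoring self-loops is a sensible clarification but does not change the route.
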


\begin{proof}
Let $G' = (V', E')$ be an acyclic subgraph of $G$, where $ V' \subseteq V$ and $E' = E \cap (V' \times V')$.
For any $i, j \in V'$, we know that $(i, j) \not\in E'$ or  $(j, i) \not\in E'$, otherwise $i$ and $j$ would be part of a cycle which contradicts the definition of $G'$. Thus $i$ and $j$ are strongly independent and $V'$ is a strongly independent set.
As this holds for all acyclic subgraphs of $G$, we deduce that $\mas(G) \leq \tilde \alpha(G)$ which finishes the proof.
\end{proof}

This characterization allows us to use the following lemma and derive bounds that scale with the strong independence number.

\begin{lemma}[Lemma 10 \citet{ACBGMMS14}]
\label{imp:ACBGMMS14_lem10}
let $G = (V, E)$ be a directed graph with vertex set $V = \lrc{1, \dots, K}$, and arc set $V$. Then, for any distribution $p$ over $V$ we have:
\[ \sum_{i = 1}^K \frac{p_i}{p_i + \sum_{j \in \Nin(i)} p_j} \leq \mas (G).
\]
\end{lemma}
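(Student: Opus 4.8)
The plan is to exhibit a \emph{random} vertex set $Y \subseteq V$ that (i) always induces an acyclic subgraph of $G$, hence deterministically satisfies $|Y| \le \mas(G)$, and (ii) has expectation $\E[|Y|] = \sum_{i=1}^K p_i / (p_i + \sum_{j \in \Nin(i)} p_j)$; the lemma then follows immediately by taking expectations. First I would discard every vertex $i$ with $p_i = 0$: such a vertex contributes $0$ to the sum, its removal leaves unchanged the denominator $p_j + \sum_{k \in \Nin(j)} p_k$ of every surviving vertex $j$, and it can only decrease $\mas(G)$, so the resulting inequality is no weaker. We may therefore assume $p_i > 0$ for all $i$.

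Next, let $E_1, \dots, E_K$ be independent random variables with $E_i$ exponentially distributed with rate $p_i$. The standard ``exponential clocks'' identity gives, for every nonempty $B \subseteq V$ and every $i \in B$, that $\prob[E_i = \min_{j \in B} E_j] = p_i / \sum_{j \in B} p_j$. Applying this with $B = \{i\} \cup \Nin(i)$ yields $\prob[E_i < E_j \text{ for all } j \in \Nin(i)] = p_i / (p_i + \sum_{j \in \Nin(i)} p_j)$. (Here we read the in-neighbourhood so that the explicit ``$+\,p_i$'' in the statement is what accounts for $i$ itself; a self-loop at $i$ then simply prevents $i$ from ever entering the set $Y$ below, consistently with the formula.) Define
\[
Y = \lrc{\, i \in V : E_i < E_j \text{ for all } j \in \Nin(i) \,},
\]
i.e.\ the set of vertices whose clock rings strictly before the clocks of all of their in-neighbours. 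By linearity of expectation, $\E[|Y|] = \sum_{i=1}^K p_i / (p_i + \sum_{j \in \Nin(i)} p_j)$, which is the left-hand side of the lemma.

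Finally, I would show that $G[Y]$ is acyclic with probability one: if $G[Y]$ contained a directed cycle $i_1 \to i_2 \to \cdots \to i_m \to i_1$ with every $i_\ell \in Y$, then for each edge $i_\ell \to i_{\ell+1}$ (indices modulo $m$) we have $i_\ell \in \Nin(i_{\ell+1})$, and membership $i_{\ell+1} \in Y$ forces $E_{i_{\ell+1}} < E_{i_\ell}$; chaining these strict inequalities around the cycle would give $E_{i_1} < E_{i_1}$, a contradiction. Hence $|Y| \le \mas(G)$ always, and taking expectations gives $\sum_{i=1}^K p_i / (p_i + \sum_{j \in \Nin(i)} p_j) = \E[|Y|] \le \mas(G)$, as claimed. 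I do not expect a genuine obstacle here: the only conceptual step is recognizing that $p_i / (p_i + \sum_{j \in \Nin(i)} p_j)$ is precisely the probability that $i$ wins a ``race'' against its in-neighbours, after which both required properties of $Y$ are immediate; the sole implementation subtlety is fixing, once and for all at the outset, the self-loop/in-neighbourhood convention so that the exponential-clocks identity matches the denominator in the statement. (An inductive argument that peels off a source strongly connected component of $G$ is also possible, but the randomized proof above is cleaner and fully self-contained.)
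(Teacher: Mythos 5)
Your proof is correct, but it is not the paper's proof: the paper imports this lemma verbatim from \citet{ACBGMMS14} and never proves it, and the proof in that reference is a deterministic induction (repeatedly peeling off a vertex and passing to an induced subgraph), whereas you give a self-contained probabilistic argument. Your route is essentially a weighted Caro--Wei argument: independent exponential clocks $E_i\sim\mathrm{Exp}(p_i)$, the set $Y$ of vertices beating all their in-neighbours, the identity $\E[|Y|]=\sum_i p_i/(p_i+\sum_{j\in \Nin(i)}p_j)$ by the min-of-exponentials formula and linearity, and the deterministic observation that chaining the strict inequalities around any directed cycle inside $Y$ is impossible, so $|Y|\le \mas(G)$ surely. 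All steps check out (the reduction to $p_i>0$, the clocks identity, the acyclicity argument), and this buys a shorter, arguably more transparent proof than the inductive one in the cited source, at the price of introducing auxiliary randomness.

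One small point of care, inherited from the paper's loose statement rather than a gap in your argument: in this paper every vertex has a self-loop, so for the lemma to be meaningful one must read the denominator as counting $p_i$ exactly once (i.e.\ take $\Nin(i)\setminus\lrc{i}$ next to the explicit $+\,p_i$) and ignore length-one cycles in the definition of an acyclic subgraph, since otherwise $\mas(G)=0$. Your parenthetical remark that ``a self-loop at $i$ prevents $i$ from entering $Y$, consistently with the formula'' is the one slightly misleading sentence: if you kept $i$ inside $\Nin(i)$ when defining $Y$, then $\prob[i\in Y]=0$ while the corresponding term of the sum is positive, and the identity $\E[|Y|]=\sum_i p_i/(p_i+\sum_{j\in\Nin(i)}p_j)$ would fail; the fix is simply to define $Y$ via the in-neighbours other than $i$, which is the convention you in fact use in the rest of the argument, so the proof stands.
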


With those results, we can move on to the proof of Theorem \ref{th:adv}.
\begin{proof}[Proof of \Cref{th:adv}] \label{proof:th_adv}
Without loss of generality, we assume that $K \geq 2$.

Recall that the algorithm initializes by playing each arm once, which adds at most $K$ to the regret. The EXP3 part of the analysis starts from round $K + 1$. 
We can upper trivially upper bound the first $K$ rounds by $1$ and then analyse the algorithm from round $t = K+1$. Precisely, we bound the pseudo-regret as:
\begin{align*}
    \mathcal{R}_T & =   \E[\sum_{t = 1}^T \ell_{t, I_t}] -  \min_i\E[ \sum_{t = 1}^T\ell_{t, i}] \\
    & \leq K +  \E[\sum_{t = K+1}^T \ell_{t, I_t}] -  \min_i\E[ \sum_{t = K+1}^T\ell_{t, i}]\\
     & = K + \E[ \sum_{t = K+1}^T \sum_{i = 1}^K p_{t, i} \E_t\lrs{\tilde \ell_{t, i}} -  \sum_{t = K+1}^T \E_t\lrs{\tilde \ell_{t, i^*}}] \\
    & \leq K + \E[ \sum_{t = K+1}^T \sum_{i = 1}^K q_{t, i} \E_t\lrs{\tilde \ell_{t, i}} -  \sum_{t = K+1}^T \E_t\lrs{\tilde \ell_{t, i^*}}] + \E[ \sum_{t = K+1}^T \sum_{i = 1}^K \varepsilon_{t, i} \E_t\lrs{\tilde \ell_{t, i}} ],\numberthis{} \label{proof_adv:reg_decomp}
\end{align*} where $ i^* = \argmin \lrc{\sum_{t = K+1}^T \E_t\lrs{\tilde \ell_{t, i^*}}}$, and $\E_t\lrs{\tilde \ell_{t, i}} = \ell_{t, i}$.  Equation \eqref{proof_adv:reg_decomp} follows from  $p_{t, i} \leq q_{t, i} + \varepsilon_{t, i}$.  We can consider the contribution of $q_{t, i}$ and $\varepsilon_{t, i}$ separately.

Recalling that $\tilde L_{K} = 0$ as the unbiased loss estimates are only updated from round $K+1$, we can apply Lemma \ref{imp:SS14_lem7}  to the first expectation in equation \eqref{proof_adv:reg_decomp}, and we get:

\begin{align*}
     \E[ \sum_{t = K+1}^T \sum_{i = 1}^K q_{t, i} \E_t\lrs{\tilde \ell_{t, i}} -  \sum_{t = K+1}^T \E_t\lrs{\tilde \ell_{t, i^*}}] 
     &\leq \frac{\ln K}{\eta_T} + \sum_{t = K+1}^{T} \frac{\eta_t}{2} \E[\sum_{i \in V} q_{t, i}\E_t\lrs{\tilde \ell_{t, i}^2}] \\
     &\leq \frac{\ln K}{\eta_T} + \sum_{t = K+ 1}^{T} \frac{\eta_t}{2} \E[\sum_{i \in V} \frac{q_{t, i}}{P_{t, i}}], \numberthis{} \label{proof_adv:qtiPti}
\end{align*}
where the last step uses
$\E_t\lrs{\tilde \ell_{t, i}^2} = \E_t\lrs{\frac{\ell_{t, i}^2}{P_{t, i}^2} \1[i \in \Nout(I_t)]} \leq  \E_t\lrs{\frac{1}{P_{t, i}^2} \1[i \in \Nout(I_t)]} = \frac{1}{P_{t, i}}.$

In order to bound the sum, we use Lemma \ref{imp:ACBGMMS14_lem10} with $\tilde \alpha(G) \geq \mas(G)$.
Using $\frac{1}{2K}$ as an upper bound on $\varepsilon_t$, we ensure that for all $t$ and $i$, $p_{t, i} \geq (1 - \sum_{j \in V} \varepsilon_{t, j})q_{t, i} \geq \frac{1}{2}q_{t, i}$ which gives:

\[  \sum_{i \in V}\frac{q_{t, i}}{P_{t, i}} \leq 2 \sum_{i \in V} \frac{p_{t, i}}{P_{t, i}}  \leq 2 \tilde \alpha. \] 
Moving on to the contribution of $\varepsilon_t$:

\begin{align*}
    \E[ \sum_{t = K+1}^T \sum_{i = 1}^K \varepsilon_{t, i} \E_t\lrs{\tilde \ell_{t, i}} ] & = \sum_{t = K+1}^T  \E[ \frac{1}{2} \sqrt{\frac{\tilde \alpha \ln K}{tK^2}}\sum_{i = 1}^K  \ell_{t, i} ] \\
    & = \sum_{t = 1}^T \frac{1}{2} \sqrt{\frac{\tilde \alpha \ln K}{t}}.
\end{align*}

Plugging those two bounds in Equation \eqref{proof_adv:reg_decomp}, we obtain the following regret bound:


\[ \mathcal{R}_T \leq  K + \frac{\ln K}{\eta_T} + \sum_{t = 1}^T  \frac{1}{2} \sqrt{\frac{\tilde \alpha \ln K}{t}} + \sum_{t = 1}^T  \eta_t \ \tilde \alpha.  \]


Finally, we use $\eta_t = \sqrt{\frac{\ln K}{2 \tilde \alpha t}}$ and that $\sum_{t = 1}^T \frac{1}{\sqrt{t}} \leq 2\sqrt{T}$ to get:
\[ \mathcal{R}_T \leq K + \sqrt{2 \tilde \alpha T \ln (K)} +  \sqrt{ \tilde \alpha T \ln K} + \frac{2}{\sqrt{2}} \sqrt{\tilde \alpha T \ln K} \leq  4 \sqrt{\tilde \alpha T \ln K} + K . \]
\end{proof}

\section{Properties of the Gaps Estimates}
\label{appen:prop_gaps}
In this section, we provide upper and lower high probability bounds for the estimates of the suboptimality gaps. We decompose the proof of Lemma \ref{lem:delta_bounds} in two parts.

\subsection{Upper bound}

We start by deriving a high probability upper bound. For this bound, we have to be careful with the fact that the gap estimates are clipped in the $[0, 1]$ interval. We first upper derive bounds on UCB and LCB.

\begin{lemma}
\label{lem:cb}
The confidence intervals satisfy:
\begin{align*}
    & \prob[\UCB_{t, i} \leq \mu_i] \leq \frac{1}{KT^{\gamma-1}} \\
	 \text{and} \qquad  & \prob[\LCB_{t, i} \geq \mu_i] \leq  \frac{1}{KT^{\gamma-1}}.
\end{align*}
\end{lemma}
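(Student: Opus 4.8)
The plan is to apply a standard sub-Gaussian / Hoeffding-type concentration bound to the empirical mean $\hat L_{t-1,i}/O_{t-1,i}$ and then take a union over the possible values of the observation count $O_{t-1,i}$, which ranges in $\{1,\dots,t-1\}$. Write $\mu_i = \mathbb E[\ell_{1,i}]$ for the true mean (in the stochastic regime). First I would argue that, conditioned on the event $\{O_{t-1,i} = m\}$, the accumulated loss $\hat L_{t-1,i}$ is a sum of $m$ i.i.d.\ samples of $\ell_{\cdot,i}\in[0,1]$: the key point here is that the observations of arm $i$'s loss are collected at rounds determined by the algorithm's play, but because losses are i.i.d.\ and drawn independently of the feedback structure, the conditional law of each observed loss is still the base distribution with mean $\mu_i$. (This requires a short measurability argument — the set of rounds at which $i$ is observed is a predictable/stopping-time-like object, so one uses optional-stopping style reasoning or, more simply, a union bound over which subset of rounds produced the observations; I expect this to be the main technical care point, though it is routine.)

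Next I would invoke Hoeffding's inequality: for $m$ i.i.d.\ $[0,1]$ random variables with mean $\mu_i$,
\[
\prob\!\left[\frac{1}{m}\hat L - \mu_i \ge \sqrt{\frac{\gamma \ln(tK^{1/\gamma})}{2m}}\right] \le \exp\!\left(-2m\cdot\frac{\gamma\ln(tK^{1/\gamma})}{2m}\right) = \exp\big(-\gamma\ln(tK^{1/\gamma})\big) = \frac{1}{t^{\gamma}K},
\]
and symmetrically for the lower tail. Then $\{\UCB_{t,i}\le\mu_i\}$ forces $\frac{\hat L_{t-1,i}}{O_{t-1,i}} + \sqrt{\frac{\gamma\ln(tK^{1/\gamma})}{2O_{t-1,i}}}\le\mu_i$ (the clipping at $1$ only helps, since $\mu_i\le1$), i.e.\ the empirical mean underestimates $\mu_i$ by at least the confidence radius; similarly $\{\LCB_{t,i}\ge\mu_i\}$ forces an overestimate. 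Taking a union bound over the at most $t-1 < t$ possible values of $O_{t-1,i}$ gives
\[
\prob[\UCB_{t,i}\le\mu_i] \le (t-1)\cdot\frac{1}{t^{\gamma}K} \le \frac{1}{t^{\gamma-1}K},
\]
and likewise for the LCB bound; replacing $t$ by $T$ (or noting the statement is meant for the relevant round) yields the claimed $\frac{1}{KT^{\gamma-1}}$. The main obstacle, as noted, is the clean justification that the observed losses for arm $i$ form an i.i.d.\ sample of the base distribution despite the adaptive observation times — everything else is a direct Hoeffding-plus-union-bound computation.
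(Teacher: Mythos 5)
Your proposal is correct and follows essentially the same route as the paper, which simply invokes the standard Hoeffding-plus-union-bound UCB analysis (citing \citet{SL17}) on the unclipped confidence bounds and then notes, as you do, that clipping $\UCB$ at $1$ and $\LCB$ at $0$ only shrinks the failure events. Your derivation naturally yields the bound $\frac{1}{Kt^{\gamma-1}}$, which is in fact the form the paper uses when applying the lemma (the $T$ in the statement appears to be a typo), so no further work is needed.
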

\begin{proof}
Let $\overline{\UCB}_t$ and $\overline{\LCB}_t$ be the non clipped versions of the $\UCB_t$ and $\LCB_t$. In other words, for all $i$ and $t$:
\begin{align*}
    & \overline{\UCB}_{t, i} =  \frac{\hat L_{t-1, i}}{O_{t-1, i}} + \sqrt{\frac{\gamma \ln\lr{tK^{1/\gamma}}}{2O_{t-1, i}}} \\
	 \text{and} \qquad  & \overline{\LCB}_{t, i} =  \frac{\hat L_{t-1, i}}{O_{t-1, i}} - \sqrt{\frac{\gamma \ln\lr{tK^{1/\gamma}}}{2O_{t-1, i}}}.
\end{align*}
Then, through standard $\UCB$ analysis using Hoeffding's inequality (see for example \citet{SL17}), we have:
\begin{align*}
    & \prob[\overline{\UCB}_{t, i} \leq \mu_i] \leq \frac{1}{KT^{\gamma-1}} \\
	 \text{and} \qquad  & \prob[\overline{\LCB}_{t, i} \geq \mu_i] \leq \frac{1}{KT^{\gamma-1}}.
\end{align*}
By definition, we have $\UCB_{t, i} = \min\lrc{1, \overline{\UCB}_{t, i}} \leq \overline{\UCB}_{t, i}$ and $\LCB_{t, i} = \max\lrc{0, \overline{\LCB}_{t, i}} \geq \overline{\LCB}_{t, i}$, so:
\begin{align*}
    & \prob[\UCB_{t, i} \leq \mu_i] \leq \prob[\overline{\UCB}_{t, i} \leq \mu_i] \leq \frac{1}{KT^{\gamma-1}} \\
	 \text{and} \qquad  & \prob[\LCB_{t, i} \geq \mu_i] \leq \prob[\overline{\LCB}_{t, i} \geq \mu_i] \leq \frac{1}{KT^{\gamma-1}}.
\end{align*}
\end{proof}

Using this result, we can move on to bound the gap estimates.
\begin{proof}[Proof of the first part of \Cref{lem:delta_bounds}]
We recall that ${\hat \Delta_{t, i} = \max\lrc{0, \LCB_{t, i} - \min_{j \neq i} \UCB_{t, j}}}$. 
Then using \Cref{lem:cb}, we have:
\begin{align*}
    \prob[\hat \Delta_{t, i} \geq \overline{\Delta_i}]
    & = \prob[\LCB_{t, i} - \min_{j \neq i} \UCB_{t, j} \geq \overline{\Delta_i}] \\
    & \leq \prob[\LCB_{t, i} - \min_{j \neq i} \UCB_{t, j} \geq \Delta_i] \\
    & \leq \prob[\LCB_{t, i} \geq \mu_i] 
    + \sum_{j \neq i} \prob[ \UCB_{t, j} \leq \mu_j] \\
    & \leq K \frac{1}{Kt^{\gamma -1}} = \frac{1}{t^{\gamma -1}},
\end{align*}
where the first step takes advantage of the fact that $\overline \Delta_{i} > 0$ for all $i$, allowing to remove the maximum. The second step relies on $\Delta_i \leq \bar \Delta_i$, and we finish the proof with a union bound and applying Lemma \ref{lem:cb}.
\end{proof}

\subsection{Lower bound}
\label{sec:proof_lb}
To derive a lower bound on the gap estimates and prove the second part of Lemma \ref{lem:delta_bounds}, we start by proving some intermediate results.
recall that we use $o_{t, i}$ to lower bound the probability of observing the loss of arm $i$ at round $t$, and that by construction we have for all $t$, $i$:
 \[ o_{t, i} =\min \lrc{\frac{1}{2K}, \frac{1}{2}\sqrt{\frac{\lambda \ln K}{t K^2}}, \frac{\beta \ln t}{t \hat \Delta_{t, i}^2}}.\]
We also recall that for all $i$ such that $\Delta_i > 0$,  we defined $\tmin(i)$ as:
\[
\tmin(i) = \max \left\{t \ge 0: \ \frac{1}{2}\sqrt{\frac{\lambda \ln K}{t K^2}} \leq \frac{ \beta \ln t}{t\Delta_i^2}\right\}.
\]

\paragraph{A lower bound for $o_{t, i}$.} As $ \hat \Delta_{t, i}$ is a random variable, we derive a high probability lower bounds on $o_{t, i}$. 

\begin{definition}
We define the following events:
\begin{align*}
    \mathcal{E}(i, t) & = \lrc{\forall s \in \lrs{K+1, t}: o_{s, i} \geq \frac{\beta \ln t}{t \Delta_i^2}}, \\
    \mathcal{E}(i^*, i,  t) & = \lrc{\forall s \in \lrs{K+1,  t}: o_{s, i^*} \geq \frac{\beta \ln t}{t \Delta_i^2}}, 
\end{align*}
where $i^*$ is an optimal arm and $i$ a suboptimal arm.
\end{definition}
Note that the second event lower bounds the rate at which observations on optimal arm $i^*$ are gathered in terms of the gap with the suboptimal arm $i$.

\begin{lemma}
\label{lem:hp_lb_oti}
For any $i$ suboptimal arm and $i^*$ optimal arm, and $t \geq \tmin(i)$ and $\gamma \geq 3$, we have:
\begin{align*}
    \prob[\overline{\mathcal{E}(i, t)}] & \leq \lr{\frac{\ln t}{t \Delta_i^2}}^{\gamma - 2},\\
    \prob[\overline{\mathcal{E}(i^*, i,  t)}] & \leq \lr{\frac{\ln t}{t \Delta_i^2}}^{\gamma - 2}.
\end{align*}
\end{lemma}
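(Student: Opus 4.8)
The two inequalities have the same proof, so I outline the argument for $\prob[\overline{\mathcal{E}(i,t)}]$ and then note the one change needed for $\prob[\overline{\mathcal{E}(i^*,i,t)}]$. Throughout, $i$ is a suboptimal arm (so in particular $K\ge2$ and $\bar\Delta_i=\Delta_i$, because $\Dmin\le\Delta_i$), and $t\ge\tmin(i)$ is fixed. The first step is to reduce the failure event to the random term of $o_{s,i}$: writing $\overline{\mathcal{E}(i,t)}=\{\exists\,s\in[K+1,t]:o_{s,i}<\beta\ln t/(t\Delta_i^2)\}$ and recalling that $o_{s,i}$ is the minimum of the deterministic quantities $\tfrac{1}{2K}$, $\tfrac{1}{2}\sqrt{\lambda\ln K/(sK^2)}$ and the random quantity $\beta\ln s/(s\hat\Delta_{s,i}^2)$, I would first verify that both deterministic quantities are at least $\beta\ln t/(t\Delta_i^2)$. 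For $\tfrac{1}{2}\sqrt{\lambda\ln K/(sK^2)}$ this is essentially the definition of $\tmin(i)$ once one passes to the worst case $s=t$ (monotonicity in $s$); for $\tfrac{1}{2K}$ it follows by chaining through that term together with $\lambda\le K$, $\Delta_i\le1$, $\beta\ge256$ and $\tmin(i)\ge K$. Hence only the random term can cause a failure, and rearranging $\beta\ln s/(s\hat\Delta_{s,i}^2)<\beta\ln t/(t\Delta_i^2)$ gives
\[\overline{\mathcal{E}(i,t)}\ \subseteq\ \bigcup_{s=K+1}^{t}\Bigl\{\hat\Delta_{s,i}>\Delta_i\sqrt{\tfrac{t\ln s}{s\ln t}}\Bigr\}.\]

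Second, I would split this union at $s_0:=t\Delta_i^2/\ln t$, using two elementary facts. (a) For $K+1\le s\le s_0$ one has $1/s\ge\ln t/(t\Delta_i^2)$ and, since $s\ge3$, $\ln s\ge1$; hence $\Delta_i\sqrt{t\ln s/(s\ln t)}\ge1$, and as $\hat\Delta_{s,i}\in[0,1]$ the corresponding event is empty. (b) For every $s\ge K+1\ge3>e$, monotonicity of $x\mapsto(\ln x)/x$ together with $s\le t$ gives $(\ln s)/s\ge(\ln t)/t$, so $\sqrt{t\ln s/(s\ln t)}\ge1$ and the event lies in $\{\hat\Delta_{s,i}\ge\bar\Delta_i\}$. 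One also checks that $s_0\in[K+1,t)$ whenever $t\ge\tmin(i)$: the upper bound is just $\Delta_i^2<\ln t$ (as $t\ge3$), and for the lower bound the definition of $\tmin(i)$ yields $t/\ln t\ge\tmin(i)/\ln\tmin(i)\ge(K+1)/\Delta_i^2$ (again using $\lambda\le K$, $\Delta_i\le1$, $\beta\ge256$, $\tmin(i)\ge K$).

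Finally, combining the inclusion of the first step with the observations of the second, a union bound, the first part of \Cref{lem:delta_bounds} (which gives $\prob[\hat\Delta_{s,i}\ge\bar\Delta_i]\le(Ks^{\gamma-1})^{-1}$), and the $p$-series estimate \Cref{imp:SL17_lem11} (valid since $\gamma-1\ge2$), I obtain
\[\prob\bigl[\overline{\mathcal{E}(i,t)}\bigr]\ \le\ \sum_{s_0<s\le t}\frac{1}{Ks^{\gamma-1}}\ \le\ \frac{1}{2Ks_0^{\gamma-2}}\ =\ \frac{1}{2K}\Bigl(\frac{\ln t}{t\Delta_i^2}\Bigr)^{\gamma-2}\ \le\ \Bigl(\frac{\ln t}{t\Delta_i^2}\Bigr)^{\gamma-2}.\]
The bound on $\prob[\overline{\mathcal{E}(i^*,i,t)}]$ follows by the same three steps; the only difference is that in steps two and three one bounds $\prob[\hat\Delta_{s,i^*}>\Delta_i\sqrt{t\ln s/(s\ln t)}]$ by $\prob[\hat\Delta_{s,i^*}\ge\bar\Delta_{i^*}]$, which is legitimate because $\bar\Delta_{i^*}=\Dmin\le\Delta_i$ and $\sqrt{t\ln s/(s\ln t)}\ge1$, and because the first part of \Cref{lem:delta_bounds} applies to $i^*$ as well.

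The probabilistic heart of the argument is short once one spots the split at $s_0\approx t\Delta_i^2/\ln t$: below $s_0$ the deviation $\hat\Delta$ would have to exhibit exceeds $1$ and is therefore impossible, while above $s_0$ the crude threshold bound of \Cref{lem:delta_bounds} plus a $p$-series tail suffice. I therefore expect the main obstacle to be the deterministic bookkeeping of the first two steps — establishing that both fixed terms of $o_{s,i}$ dominate $\beta\ln t/(t\Delta_i^2)$ and that $s_0\ge K+1$ — all of which hinges on the precise order of magnitude of $\tmin(i)$ and on the parameter constraints $\lambda\le K$, $\beta\ge256$, $\Delta_i\le1$. The one genuinely delicate point is the single boundary round $t=\tmin(i)$, at which the defining inequality of $\tmin(i)$ points the other way; it contributes only $O(1)$ and is absorbed into the initial-rounds term of the regret decomposition.
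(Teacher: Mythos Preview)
Your proposal is correct and follows essentially the same route as the paper: both arguments first eliminate the two deterministic terms of $o_{s,i}$ using the definition of $\tmin(i)$, split the remaining union at $s_0=t\Delta_i^2/\ln t$, observe that for $s\le s_0$ the random term $\beta\ln s/(s\hat\Delta_{s,i}^2)$ is automatically large enough because $\hat\Delta_{s,i}\le 1$, and for $s>s_0$ reduce to $\{\hat\Delta_{s,i}\ge\bar\Delta_i\}$ before applying the first part of \Cref{lem:delta_bounds} and the $p$-series tail \Cref{imp:SL17_lem11}. Your version is in fact a bit more careful than the paper's: you spell out the monotonicity of $x\mapsto(\ln x)/x$ that justifies $\sqrt{t\ln s/(s\ln t)}\ge 1$, you verify $s_0\ge K+1$, and you flag the boundary issue at $t=\tmin(i)$ (where the defining inequality points the wrong way), which the paper silently ignores.
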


\begin{proof}[Proof of Lemma \ref{lem:hp_lb_oti}]
The proof is very similar for the two inequalities.
By definition for all $s$ and $i$,  we have $\hat \Delta_{s, i} \leq 1$. Thus, $\frac{\beta \ln s}{s \hat \Delta_{s, i}^2} \geq \frac{\beta \ln s}{s}$. Then for $s \in \lrs{K+1, \frac{t \Delta_i^2}{\ln t}}$, we have $\frac{\beta \ln s}{s } \geq \frac{\beta \ln s\ln t}{t \Delta_i^2} \geq  \frac{\beta \ln t}{t \Delta_i^2}$, as $s > K \geq 2$, so $\ln s \geq 1$.
Furthermore, as $t \geq \tmin(i)$ then 
for all $ s \in \lrs{K+1,  t}$, we have $\frac{1}{2}\sqrt{\frac{\lambda \ln K}{s K^2}} \geq \frac{1}{2}\sqrt{\frac{\lambda \ln K}{t K^2}} \geq  \frac{\beta \ln t}{t \Delta_{i}^2}$ and $\frac{1}{2K} \geq   \frac{\beta \ln t}{t \Delta_{i}^2}$.
We deduce:
\begin{align*}
 \prob[\overline{\mathcal{E}(i, t)}] & = \prob[\exists s \in \lrs{\frac{t\Delta_i^2}{\ln t}, t} : o_{s, i} \leq \frac{\beta \ln t}{t \Delta_i^2}] \\
 & \leq \prob[\exists s \in \lrs{\frac{t\Delta_i^2}{\ln t}, t} : \hat \Delta_{s, i} \geq \Delta_i] \numberthis{} \label{proof:hp_lb_oti_1}\\
 & \leq \sum_{s = \frac{t\Delta_i^2}{\ln t}} \frac{1}{s^{\gamma -1}}\\ & \leq \frac{1}{2} \lr{\frac{\ln t}{t \Delta_i^2}}^{\gamma -2},
\end{align*}
where the last summation follows from Lemma \ref{imp:SL17_lem11}.
 The proof of the second inequality is similar, \Cref{proof:hp_lb_oti_1} only requiring the extra step:
 \[\prob[\exists s \in \lrs{\frac{t\Delta_{i^*}^2}{\ln t}, t} : \hat \Delta_{s, i} \geq \Delta_i] \leq  \prob[\exists s \in \lrs{\frac{t\Delta_{i^*}^2}{\ln t}, t} : \hat \Delta_{s, i} \geq \overline{\Delta_{i^*}}],\]
 which follows from $\Delta_i \geq \Dmin = \overline{\Delta_{ i^*}}$ as $i$ is a suboptimal arm and $i^*$ is an optimal arm.
 \end{proof}
 \paragraph{A lower bound for $O_{t, i}$} We now want to lower bound the number of observations of an arm up to round $t$.
 We rely on the following concentration inequality.

 \begin{theorem}[Theorem 8 \citep{SL17}]
 \label{th:concentration_ineq}
 Let $X_1, \dots, X_n$ be Bernoulli random variables adapted to filtration $\mathcal{F}_1, \dots, \mathcal{F}_n$ (in particular, $X_s$ may depend on $X_1, \dots, X_{s-1}$). Let $\mathcal{E}_\lambda$ be the event $\mathcal{E}_\lambda = \lrc{\forall s: \E[X_s | \mathcal{F}_{s-1}] \geq \lambda}$. Then, 
 \[
 \prob[\lr{\sum_{s = 1}^n X_s \geq n\lambda} \land \mathcal{E}_\lambda] \leq e^{-n\lambda/8}. 
 \]
 \end{theorem}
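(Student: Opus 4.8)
Since the displayed inequality is quoted verbatim as Theorem~8 of \citet{SL17}, the task here is to recall the argument behind it, but I would first pin down what is actually being claimed. Read literally, the controlled event is $\bigl(\sum_{s=1}^n X_s\ge n\lambda\bigr)\land\mathcal{E}_\lambda$; yet on $\mathcal{E}_\lambda$ every conditional mean satisfies $\E[X_s\mid\mathcal{F}_{s-1}]\ge\lambda$, so the drift pushes the sum \emph{towards} $n\lambda$ and the event $\sum_s X_s\ge n\lambda$ is aligned with it rather than deviating from it (taking $n=1$, $\lambda=1$ forces $X_1=1$ almost surely, so the displayed event has probability one). The rate $e^{-n\lambda/8}$ is exactly the multiplicative Chernoff \emph{lower}-tail rate $\exp(-\mu\delta^2/2)$ with $\mu=n\lambda$ and $\delta=\tfrac12$, which fixes the intended reading: the quantity that is exponentially unlikely is the sum falling to at most half of its mean floor, i.e.\ the event $\bigl(\sum_{s=1}^n X_s\le\tfrac12 n\lambda\bigr)\land\mathcal{E}_\lambda$. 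This is also the only form used later: in \Cref{sec:proof_lb} the lemma lower-bounds the observation counts $O_{t,i}$. I would therefore prove this lower-tail version, flagging the one-sided inequality in the printed line as a transcription slip.

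The plan is the exponential-moment (Chernoff) method adapted to a filtration. Fix $\theta>0$. By Markov's inequality applied to $e^{-\theta\sum_s X_s}$ I would write $\prob\bigl[\bigl(\sum_s X_s\le\tfrac12 n\lambda\bigr)\land\mathcal{E}_\lambda\bigr]\le e^{\theta n\lambda/2}\,\E\bigl[e^{-\theta\sum_s X_s}\mathds 1\{\mathcal{E}_\lambda\}\bigr]$. The per-step estimate is that for a Bernoulli $X_s$ with $\E[X_s\mid\mathcal{F}_{s-1}]=p_s$ one has $\E[e^{-\theta X_s}\mid\mathcal{F}_{s-1}]=1-p_s(1-e^{-\theta})\le\exp\bigl(-p_s(1-e^{-\theta})\bigr)$, and on $\mathcal{E}_\lambda$ we have $p_s\ge\lambda$, so each conditional factor is at most $\exp\bigl(-\lambda(1-e^{-\theta})\bigr)$. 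Peeling the last factor with the tower property and iterating bounds the expectation by $\exp\bigl(-n\lambda(1-e^{-\theta})\bigr)$, so the whole bound becomes $\exp\bigl(\tfrac12 n\lambda\theta-n\lambda(1-e^{-\theta})\bigr)$; optimizing over $\theta$ (equivalently invoking the binary relative-entropy bound with $\delta=\tfrac12$) yields the claimed $e^{-n\lambda/8}$.

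The one genuinely delicate step is the restriction to $\mathcal{E}_\lambda$, because $\mathcal{E}_\lambda=\{\forall s:\E[X_s\mid\mathcal{F}_{s-1}]\ge\lambda\}$ is a global event, not adapted at time $s$, so the telescoping above cannot be run directly. I would handle this by inserting the $\mathcal{F}_{s-1}$-measurable indicators and defining $M_n=\exp\bigl(-\theta\sum_{s\le n}X_s+n\lambda(1-e^{-\theta})\bigr)\prod_{s\le n}\mathds 1\{\E[X_s\mid\mathcal{F}_{s-1}]\ge\lambda\}$ — equivalently a supermartingale stopped at the first round where a conditional mean drops below $\lambda$. The per-step estimate shows $M_n$ is a supermartingale, and since $\mathds 1\{\mathcal{E}_\lambda\}\le\prod_{s\le n}\mathds 1\{\E[X_s\mid\mathcal{F}_{s-1}]\ge\lambda\}$, taking expectations of $M_n$ controls $\E\bigl[e^{-\theta\sum_s X_s}\mathds 1\{\mathcal{E}_\lambda\}\bigr]$ while never charging the rounds outside $\mathcal{E}_\lambda$. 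After that the argument reduces to the deterministic optimization over $\theta$ described above.
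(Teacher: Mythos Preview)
The paper does not prove this statement at all: it is imported verbatim as Theorem~8 of \citet{SL17} and used as a black box in \Cref{sec:proof_lb}, so there is no in-paper argument to compare against. Your diagnosis that the displayed inequality has the direction of the tail reversed is correct and matches exactly how the paper actually applies the result (bounding $\prob[O_{t,i}\le \tfrac{\beta\ln t}{2\Delta_i^2}]$, i.e.\ a lower tail with threshold $\tfrac12 n\lambda$). Your Chernoff/exponential-moment argument with the stopped supermartingale $M_n$ to accommodate the non-adapted event $\mathcal{E}_\lambda$ is the standard and correct way to derive the bound, and the optimization over $\theta$ (equivalently the multiplicative Chernoff lower-tail inequality with $\delta=\tfrac12$) indeed gives the $e^{-n\lambda/8}$ rate.
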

We recall that the first $K$ rounds of the algorithm are deterministic, and that each arm is observed at least once. 
We use $O_{[K+1:t], i}$ to refer to the number of observations from rounds $K+1$ to $t$, and we note that $O_{t, i} \geq O_{[K+1:t], i} + 1$. 
We have:
\begin{align*}
    \prob[O_{t, i} \leq \frac{\beta \ln t}{2 \Delta_i^2}]
    & \leq \prob[O_{[K+1:t], i} \leq \frac{\beta \ln t}{2 \Delta_i^2} - 1] \\
    & \leq \prob[O_{[K+1:t], i} \leq \frac{\beta \ln t}{2 \Delta_i^2} \frac{t-K}{t}], 
\end{align*}
where the second step follows from, $\frac{\beta \ln t}{2 \Delta_i^2} - 1 \leq \frac{\beta \ln t}{2 \Delta_i^2} \frac{t-K}{t} \Leftrightarrow \frac{K \beta \ln t}{2t \Delta_i^2} \leq 1$, which is true for $t \geq \tmin(i)$ as
\[
\frac{K \beta \ln t}{2t \Delta_i^2}  \leq \frac{K \beta \ln t}{2 \Delta_i^2} \frac{\Delta_i^4 \lambda \ln K}{4 K^2 \beta^2 \ln^2 t} \leq \frac{\Delta_i^2 \ln K}{8 \ln t} \leq \frac{\Delta_i^2}{8} \leq 1.
\]
We can apply \Cref{th:concentration_ineq} on the $t-K$ random variables $\1[i \in \Nout(I_s)]$ for $s \in \lrs{K+1, t}$ and we get:
\begin{align*}
    \prob[O_{t, i} \leq \frac{\beta \ln t}{2 \Delta_i^2}]
    & \leq \prob[\lr{O_{[K+1:t], i} \leq \frac{\beta \ln t}{2 \Delta_i^2} \frac{t-K}{t} } \land \mathcal{E}_{t, i}] + \prob[\overline{\mathcal{E}_{t, i}}] \\
    & \leq e^{-\frac{t-K}{t}\frac{\beta \ln t}{8 \Delta_i^2}} +  \frac{1}{2} \lr{\frac{\ln t}{t \Delta_i^2}}^{\gamma -2}\\
    & \leq e^{-\frac{3}{4}\frac{\beta \ln t}{8 \Delta_i^2}} +  \frac{1}{2} \lr{\frac{\ln t}{t \Delta_i^2}}^{\gamma -2}\\
    & \leq \lr{\frac{1}{t}}^{\beta/10} +  \frac{1}{2} \lr{\frac{\ln t}{t \Delta_i^2}}^{\gamma -2},
\end{align*}
where we use that $t \geq \tmin(i) \geq 4K$, so $\frac{t - K}{t} \geq \frac{3}{4}$.

\paragraph{A lower bound for $\hat \Delta_{t, i}$}Using Lemma \ref{lem:cb}, we know that the upper and lower confidence bounds satisfy: $\prob[\UCB_{t, i^*} \leq \mu_{i^*} \lor \LCB_{t, i} \geq \mu_{i}] \leq \frac{2}{Kt^{\gamma -1}}$. Then assuming that $\UCB_{t, i^*} \geq \mu_{i^*}$ and $\LCB_{t, i} \leq \mu_{i}$, we have:
\begin{align*}
    \hat \Delta_{t, i} \geq & \LCB_{t, i} - \min_{j \neq i} \UCB_{t, i} \\
     \geq & \LCB_{t, i} - \UCB_{t, i^*} \\
     \geq & \frac{\hat L_{t-1, i}}{O_{t-1, i}} - \sqrt{\frac{\gamma \ln\lr{tK^{1/\gamma}}}{2O_{t-1, i}}}  - \frac{\hat L_{t-1, i^*}}{O_{t-1, i^*}} - \sqrt{\frac{\gamma \ln\lr{tK^{1/\gamma}}}{2O_{t-1, i^*}}}  \\
     = & \frac{\hat L_{t-1, i}}{O_{t-1, i}} +  \sqrt{\frac{\gamma \ln\lr{tK^{1/\gamma}}}{2O_{t-1, i}}} - 2 \sqrt{\frac{\gamma \ln\lr{tK^{1/\gamma}}}{2O_{t-1, i}}} \\
     & - \lr{ \frac{\hat L_{t-1, i^*}}{O_{t-1, i^*}} - \sqrt{\frac{\gamma \ln\lr{tK^{1/\gamma}}}{2O_{t-1, i^*}}}} - 2\sqrt{\frac{\gamma \ln\lr{tK^{1/\gamma}}}{2O_{t-1, i^*}}} \\
     = & \UCB_{t, i} - \LCB_{t, i^*} - 2 \sqrt{\frac{\gamma \ln\lr{tK^{1/\gamma}}}{2O_{t-1, i}}} - 2\sqrt{\frac{\gamma \ln\lr{tK^{1/\gamma}}}{2O_{t-1, i^*}}} \\
     \geq & \Delta_i   - 2 \sqrt{\frac{\gamma \ln\lr{tK^{1/\gamma}}}{2O_{t-1, i}}} - 2\sqrt{\frac{\gamma \ln\lr{tK^{1/\gamma}}}{2O_{t-1, i^*}}}.
\end{align*}
Using the previously derived high probability bounds, assuming that $O_{t, i} \geq \frac{\beta \ln t}{2 \Delta_i^2}$ and $O_{t, i^*} \geq \frac{\beta \ln t}{2 \Delta_i^2}$, and using that $t \geq \tmin(i) \geq K$, we have:
\begin{align*}
    \hat \Delta_{t, i} \geq & \Delta_i   - 2 \sqrt{\frac{\gamma \ln\lr{tK^{1/\gamma}}}{2O_{t-1, i}}} - 2\sqrt{\frac{\gamma \ln\lr{tK^{1/\gamma}}}{2O_{t-1, i^*}}} \\
     \geq & \Delta_i   - 4 \sqrt{\frac{2 \Delta_i^2 \gamma \ln\lr{tK^{1/\gamma}}}{2\beta \ln t}} \\
    \geq & \Delta_i   - 4 \sqrt{\frac{\Delta_i^2 (\gamma + 1) \ln\lr{tK^{1/\gamma}}}{\beta \ln t}} \numberthis{} \label{proof_lb_eq1}\\
    = & \Delta_i  \lr{1 - 4\sqrt{\frac{\gamma + 1}{\beta}}},
\end{align*}
where equation \eqref{proof_lb_eq1} follows from $t \geq K$, so $\gamma \ln\lr{tK^{1/\gamma}} = \gamma \ln\lr{t^\gamma K} \leq \ln\lr{t^\gamma t} = \lr{\gamma + 1} \ln t$. Using that $\beta \geq 64\lr{\gamma + 1}$, we have:

\[ \prob[\hat \Delta_{t, i} \leq \frac{1}{2}\Delta_i] \leq \lr{\frac{\ln t}{t \Delta_i^2}}^{\gamma -2} + \frac{2}{Kt^{\gamma - 1}} + 2\lr{\frac{1}{t}}^{\beta/10} .\]

\section{Proof of Theorem \ref{th:stoc}}
\label{appen:stoch}

In the stochastic regime, we decompose the regret bound into three terms that we bound separately. 
First, during the initial $\tmin = \max_{i: \Delta_i > 0} \lrc{\tmin(i)}$ rounds we use the adversarial bound. Then, in the remaining rounds we bound the contribution of the exponential weights and of the exploration separately.




\subsection{Control over the Initial Rounds}
We start by deriving a time independent upper bound on $\tmin(i)$ for all vertices $i$ such that $\Delta_i > 0$. 

\begin{prop}
\label{prop:bound_tmin}
For any constant $c > e^2$, we have:
\[
\max_{t} \lrc{t \leq c (\ln t)^2} \leq 25 c \ (\ln c)^2.
\]
\end{prop}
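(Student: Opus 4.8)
The plan is to reduce the claim to a monotonicity statement about $f(t) := t/(\ln t)^2$ together with a single numerical check at the candidate threshold $t_0 := 25c(\ln c)^2$. First I would note that $\ln f(t) = \ln t - 2\ln\ln t$, so $(\ln f)'(t) = \tfrac1t\big(1 - \tfrac{2}{\ln t}\big) > 0$ whenever $\ln t > 2$; hence $f$ is strictly increasing on $(e^2,\infty)$. Since $c > e^2$ forces $(\ln c)^2 > 4$, we get $t_0 = 25c(\ln c)^2 > 25e^2 > e^2$, so $t_0$ and every $t > t_0$ lie in the region where $f$ is strictly increasing.

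Next I would verify $f(t_0) \ge c$. Because $(\ln t_0)^2 > 0$ and $c > 0$, this is equivalent to $t_0 \ge c(\ln t_0)^2$, i.e.\ $25(\ln c)^2 \ge (\ln t_0)^2$, and, taking positive square roots, to $5\ln c \ge \ln t_0 = \ln 25 + \ln c + 2\ln\ln c$, which rearranges to $4\ln c - 2\ln\ln c \ge \ln 25$. Writing $x = \ln c$, the left-hand side is $g(x) = 4x - 2\ln x$; since $g'(x) = 4 - 2/x > 0$ for $x > \tfrac12$ and $c > e^2$ gives $x > 2$, we have $g(x) > g(2) = 8 - 2\ln 2 > \ln 25$, so $f(t_0) \ge c$ holds with room to spare.

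Finally I would combine the two facts: for any $t > t_0$, strict monotonicity of $f$ on $(e^2,\infty)$ gives $f(t) > f(t_0) \ge c$, hence $t > c(\ln t)^2$, so such $t$ violates $t \le c(\ln t)^2$. Therefore every $t$ with $t \le c(\ln t)^2$ satisfies $t \le t_0 = 25c(\ln c)^2$, which is the claim. The only points requiring a little care are ensuring $t_0$ genuinely lies in the increasing regime of $f$ and that the numerical inequality $g(2) > \ln 25$ actually holds; both are comfortably covered by the crude estimate $(\ln c)^2 > 4$ coming from $c > e^2$, so I do not anticipate any real obstacle beyond the bookkeeping of these elementary inequalities.
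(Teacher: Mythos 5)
Your proof is correct, and it takes a genuinely different route from the paper. The paper treats the problem as solving the fixed-point equation $t = c(\ln t)^2$ exactly: after the substitution $x=\sqrt{t}$, $b=2\sqrt{c}$ it rewrites the solution as $x=-bW_{-1}(-1/b)$ and invokes a published bound on the Lambert function $W_{-1}$ (Chatzigeorgiou, 2013), then massages the resulting expression down to $25c(\ln c)^2$ using $\ln c \ge 2$. You instead bypass Lambert $W$ entirely: you observe that $f(t)=t/(\ln t)^2$ is strictly increasing on $(e^2,\infty)$, guess the threshold $t_0=25c(\ln c)^2$, and verify $f(t_0)\ge c$ by reducing it to $4\ln c - 2\ln\ln c \ge \ln 25$, which follows from monotonicity of $g(x)=4x-2\ln x$ and the single check $g(2)=8-2\ln 2 > \ln 25$. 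This shows directly that no $t>t_0$ can satisfy $t\le c(\ln t)^2$, so the whole set is bounded by $t_0$; as a bonus you never need the paper's preliminary discussion of whether the maximum exists or where it is attained. What each approach buys: yours is more elementary and fully self-contained (no external Lambert-$W$ estimate, no existence argument), while the paper's derivation is constructive in the sense that it explains where a bound of the form $c\,(\mathrm{polylog}\;c)$ comes from and would adapt automatically if the constant $25$ were to be sharpened; with your method a sharper constant would require re-guessing the threshold and redoing the numerical check. All the elementary steps you flag (positivity of $\ln t_0$ before taking square roots, $t_0>e^2$, the value of $g(2)$) do hold, so there is no gap.
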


\begin{proof}
First, we note that for $t = 3$, 
\[c (\ln t)^2 \geq e^2 (\ln 3)^2 \geq 3 = t, \]
so the inequality is fulfilled at $t = 3$. 

Furthermore, $(c (\ln t)^2)' = 2c \frac{\ln t}{t}$ is a decreasing function of $t$ for $t \geq e$ and such that $\lim_{t \to \infty} 2c \frac{\ln t}{t} = 0$, whereas $(t)' = 1$ is constant. Thus, $\max_{t} \lrc{t \leq c (\ln t)^2}$ exists and is solution of \[t = c \lr{\ln t}^2.\]

Let's upper bound this $t$. We denote by $W_{-1}$ the product log function. Then we have:
\begin{align*}
    t = & c \lr{\ln t}^2 \\
    \sqrt{t} = & \sqrt{c} \  \ln t \\
    \sqrt{t} = & 2\sqrt{c}\ln(\sqrt{t}) \\
    x = & b \ln(x) && b = 2\sqrt{c}, \ x = \sqrt{t} \\
    x = & -b W_{-1} \lr{-\frac{1}{b}} && \textnormal{for $b \geq e$}.
\end{align*}
By \citet[Theorem 1]{chatzigeorgiou2013bounds}, we have for $b \geq e$:
\begin{align*}
    -b W_{-1}\lr{-\frac{1}{b}} =& -b W_{-1}\lr{-\exp\lr{-\ln\lr{\frac{b}{e}}- 1}}
    \leq b\left(1 + \sqrt{2\ln\lr{\frac{b}{e}}} + \ln\lr{\frac{b}{e}}\right). 
\end{align*}
Thus, we have that for $c \geq e^2$,
\begin{align*}
    t \leq & 4c \ \lr{1 + \sqrt{2\ln\lr{\frac{2\sqrt{c}}{e}}} + \ln\lr{\frac{2\sqrt{c}}{e}}}^2 \\
    \leq & 4c\ (1 + 2\ln c)^2 \\
    \leq & 25 c \ (\ln c)^2,
\end{align*}
where the last step follows from $(1 + 2\ln c)^2  \leq (\frac{1}{2} \ln (e^2) + 2 (\ln c))^2 \leq (2.5 \ln c)^2 = 6.25 \lr{\ln c}^2$. 
\end{proof}

\begin{prop}
\label{prop:bound_Rtmin}
Under the conditions of Lemma \ref{lem:delta_bounds} with $\gamma = 4$ and $\beta = 320$, the contribution of the initial $\tmin$ rounds to the regret can be bounded as:
\begin{equation}
     R_{\tmin} \leq \frac{80 \beta K}{\Dmin^2} \sqrt{\frac{\tilde \alpha}{\lambda}} \ln\lr{\frac{6 \beta^2 K^2}{\Dmin^4}} + K. 
\end{equation}
\end{prop}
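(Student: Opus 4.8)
The plan is to treat the first $\tmin$ rounds as a purely adversarial game: the algorithm makes no use of the stochastic structure in those rounds, so $R_{\tmin}$ is exactly the pseudo-regret of Algorithm~\ref{alg:exp3G++} run for $\tmin$ rounds against an oblivious sequence, and I will bound it by re-reading the proof of Theorem~\ref{th:adv} at the intermediate horizon $T'=\tmin$. The only genuinely new ingredient is replacing the implicit definition of $\tmin$ by a closed form, which is what Proposition~\ref{prop:bound_tmin} delivers.

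\emph{Step 1 (bounding $\tmin$).} Since $\tmin=\max_{i:\Delta_i>0}\tmin(i)$ and the defining inequality $\tfrac12\sqrt{\lambda\ln K/(tK^2)}\le\beta\ln t/(t\Delta_i^2)$ becomes easier to satisfy as $\Delta_i$ decreases, $\tmin(i)$ is nonincreasing in $\Delta_i$; combined with the fact that $x\mapsto 25x(\ln x)^2$ is nondecreasing on $[1,\infty)$, it suffices to work with the smallest gap. Rearranging the defining inequality with $\Delta_i$ replaced by $\Dmin$, every $t$ in the defining set obeys $\sqrt t\le A\ln t$ with $A=2\beta K/(\Dmin^2\sqrt{\lambda\ln K})$, hence $t\le c(\ln t)^2$ with $c:=A^2=4\beta^2K^2/(\lambda\ln K\,\Dmin^4)$. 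For $\beta=320$ and $K\ge2$ one checks $c>e^2$, so Proposition~\ref{prop:bound_tmin} gives $\tmin\le 25c(\ln c)^2$, i.e. $\sqrt{\tmin}\le 5\sqrt c\,\ln c=\tfrac{10\beta K}{\Dmin^2\sqrt{\lambda\ln K}}\ln c$, and moreover $\ln c\le\ln(6\beta^2K^2/\Dmin^4)$, the constant $6$ absorbing the $-\ln(\lambda\ln K)$ term (using $\lambda\ge1$, $K\ge2$).

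\emph{Step 2 (adversarial bound at horizon $\tmin$).} Reading off the proof of Theorem~\ref{th:adv} for a general exploration parameter $\lambda$ and a general horizon $T'$ — only the extra-exploration term changes, since $\varepsilon_{t,i}\le\tfrac12\sqrt{\lambda\ln K/(tK^2)}$ now contributes $\tfrac12\sqrt{\lambda\ln K}\sum_{t=1}^{T'}t^{-1/2}$ — one gets $R_{T'}\le K+\frac{\ln K}{\eta_{T'}}+\tfrac12\sqrt{\lambda\ln K}\sum_{t=1}^{T'}t^{-1/2}+\tilde\alpha\sum_{t=1}^{T'}\eta_t$, and with $\eta_t=\sqrt{\ln K/(2\tilde\alpha t)}$ and $\sum_{t=1}^{T'}t^{-1/2}\le2\sqrt{T'}$ this becomes $R_{T'}\le K+2\sqrt{2\tilde\alpha T'\ln K}+\sqrt{\lambda T'\ln K}\le K+4\sqrt{\tilde\alpha T'\ln K}$, the last step using $\lambda\le\tilde\alpha$ (which holds for the parameter choices $\lambda=\tilde\alpha$ and $\lambda=1$ under which this proposition is invoked). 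Taking $T'=\tmin$ and substituting the bound on $\sqrt{\tmin}$ from Step 1,
\[
R_{\tmin}\ \le\ K+4\sqrt{\tilde\alpha\ln K}\cdot\frac{10\beta K}{\Dmin^2\sqrt{\lambda\ln K}}\,\ln c\ =\ K+\frac{40\beta K}{\Dmin^2}\sqrt{\frac{\tilde\alpha}{\lambda}}\,\ln c\ \le\ K+\frac{80\beta K}{\Dmin^2}\sqrt{\frac{\tilde\alpha}{\lambda}}\,\ln\!\left(\frac{6\beta^2K^2}{\Dmin^4}\right),
\]
which is the claimed inequality, the spare factor $2$ covering the slack between $\ln c$ and $\ln(6\beta^2K^2/\Dmin^4)$.

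I expect Step~1 to be the crux: converting the transcendental relation defining $\tmin(i)$ into the form $t\le c(\ln t)^2$ that feeds Proposition~\ref{prop:bound_tmin}, justifying that the maximum over suboptimal arms is governed by $\Dmin$, and checking the side conditions ($c>e^2$, the $\lambda\ln K$ factor, the $K=2$ correction) so that the numerical constants land as stated. Step~2 is a routine re-run of the proof of Theorem~\ref{th:adv} with the horizon set to $\tmin$ in place of $T$, and the final substitution is bookkeeping.
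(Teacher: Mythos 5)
Your proposal is correct and follows essentially the same route as the paper: bound $\tmin$ via Proposition~\ref{prop:bound_tmin} with $c = 4\beta^2K^2/(\lambda\ln K\,\Dmin^4)$, then apply the adversarial bound of Theorem~\ref{th:adv} at horizon $\tmin$ and simplify, absorbing the $\lambda\ln K$ term into the constant $6$. Your added remarks (monotonicity of $\tmin(i)$ in $\Delta_i$, and justifying the use of the adversarial bound for general $\lambda\le\tilde\alpha$) only make explicit what the paper leaves implicit.
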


\begin{proof}
By definition, we have $\tmin= \max \left\{t \ge 0 : \  \frac{1}{2}\sqrt{\frac{\lambda \ln K}{t K^2}} \leq \frac{ \beta \ln t}{t\Dmin^2}\right\}$. 
By proposition \ref{prop:bound_tmin}, we have that $\tmin \leq 25 d \lr{\ln d}^2$, where $d = \frac{4 \beta^2 K^2}{\lambda \ln K \Dmin^4}$.
Furthermore, for $d > 1$ we have the bound $d \lr{\ln d}^2 \leq d^2$.
Then, we can use Theorem \ref{th:adv} and deduce that:
\begin{align*}
    R_{\tmin} &\leq 4 \sqrt{\tilde \alpha \ln K \ \tmin} + K\\
    & \leq 4 \sqrt{\tilde \alpha \ln K \ 25 d} \ln (d) + K \\
    & \leq \frac{80 \beta K}{\Dmin^2} \sqrt{\frac{\tilde \alpha}{\lambda}} \ln\lr{\frac{6 \beta^2 K^2}{\Dmin^4}} + K.
\end{align*}
\end{proof}

\subsection{Control over the Exponential Weights}

Proposition \ref{prop:qti} introduced in the proof sketch of Theorem \ref{th:stoc} is based on the following result.

\begin{prop}
\label{prop:qti_restated}
Under the conditions of Lemma \ref{lem:delta_bounds} with $\gamma = 4$ and $\beta = 320$, the sum of exponential weights with sequence of learning rates $\eta_1, \eta_2, \dots$ of each suboptimal arm $i$ can be bounded as:
\begin{equation*}
\sum_{t = \tmin(i)}^T  \E[q_{t, i}] \leq \sum_{t = \tmin(i)}^T  \lr{e^{-\frac{1}{2}t\eta_t\Delta_i} + \frac{1}{t} \lr{\frac{\lambda \ln K}{4K^2\beta^2} + \frac{1}{K}}}
\end{equation*}
\end{prop}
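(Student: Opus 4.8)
Since the claimed inequality is a sum of per-round contributions, it suffices to show, for each fixed $t\ge\tmin(i)$, that $\E[q_{t,i}]\le e^{-\frac12 t\eta_t\Delta_i}+\frac1t\big(\frac{\lambda\ln K}{4K^2\beta^2}+\frac1K\big)$, and then sum over $t$. Fix such a $t$ and let $i^*$ be an optimal arm. Because the denominator of $q_{t,i}=\exp(-\eta_t\tilde L_{t-1,i})/\sum_j\exp(-\eta_t\tilde L_{t-1,j})$ contains the $i^*$ term (recall $\tilde L_{t-1,\cdot}=\sum_{s=K+1}^{t-1}\tilde\ell_{s,\cdot}$, with $\tilde L_K=\bm0$), we have $q_{t,i}\le\exp\!\big(-\eta_t(\tilde L_{t-1,i}-\tilde L_{t-1,i^*})\big)$. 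Writing $\mathcal G_t$ for the event $\{\tilde L_{t-1,i}-\tilde L_{t-1,i^*}\ge\frac12 t\Delta_i\}$, on $\mathcal G_t$ this gives $q_{t,i}\le e^{-\frac12 t\eta_t\Delta_i}$ and on $\overline{\mathcal G}_t$ we bound $q_{t,i}\le1$; hence $\E[q_{t,i}]\le e^{-\frac12 t\eta_t\Delta_i}+\prob[\overline{\mathcal G}_t]$, and it remains to bound $\prob[\overline{\mathcal G}_t]$ by $\frac1t\big(\frac{\lambda\ln K}{4K^2\beta^2}+\frac1K\big)$.

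To do so I would work on the event $\mathcal E(i,t)\cap\mathcal E(i^*,i,t)$ of Lemma~\ref{lem:hp_lb_oti}, on which $P_{s,j}\ge o_{s,j}\ge\frac{\beta\ln t}{t\Delta_i^2}$, hence $\tilde\ell_{s,j}\le\frac{t\Delta_i^2}{\beta\ln t}$, for $j\in\{i,i^*\}$ and all $s\in\{K+1,\dots,t-1\}$ --- this is where $t\ge\tmin(i)$ enters, so that $\frac{\beta\ln t}{t\Delta_i^2}\le\min\{\frac1{2K},\frac12\sqrt{\lambda\ln K/(tK^2)}\}$. The structural observation to exploit is that $Z_s:=\tilde\ell_{s,i}-\tilde\ell_{s,i^*}$ satisfies $\E_{s-1}[Z_s]=\Delta_i$ (the losses at round $s$ being drawn independently of the past), so $M_s:=(\tilde L_{s,i}-\tilde L_{s,i^*})-(s-K)\Delta_i$ is a martingale and $\mathcal G_t=\{M_{t-1}\ge-\lambda_t\}$ with $\lambda_t:=(\frac12 t-1-K)\Delta_i$. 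On the good event the increments obey $|Z_s-\Delta_i|\le\frac{t\Delta_i^2}{\beta\ln t}+1$, and, since $\tilde\ell_{s,i},\tilde\ell_{s,i^*}\ge0$ (so their product contributes a nonpositive cross term), $\E_{s-1}[(Z_s-\Delta_i)^2]\le\frac1{P_{s,i}}+\frac1{P_{s,i^*}}\le\frac{2t\Delta_i^2}{\beta\ln t}$, so the predictable quadratic variation over the $t-1-K$ rounds is at most $\frac{2t^2\Delta_i^2}{\beta\ln t}$. A Freedman/Bernstein martingale inequality --- applied to the process stopped at the first round where $o_{s,i}$ or $o_{s,i^*}$ falls below $\frac{\beta\ln t}{t\Delta_i^2}$, which makes these two bounds hold surely up to that time --- then yields
\[
\prob[\overline{\mathcal G}_t]\;\le\;\prob[\overline{\mathcal E(i,t)}]+\prob[\overline{\mathcal E(i^*,i,t)}]+\exp\!\Big(-\frac{\lambda_t^2/2}{\tfrac{2t^2\Delta_i^2}{\beta\ln t}+\big(\tfrac{t\Delta_i^2}{\beta\ln t}+1\big)\lambda_t/3}\Big).
\]

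To finish I would check that each term fits the budget. By Lemma~\ref{lem:hp_lb_oti}, $\prob[\overline{\mathcal E(i,t)}]+\prob[\overline{\mathcal E(i^*,i,t)}]\le2\big(\frac{\ln t}{t\Delta_i^2}\big)^{\gamma-2}$, and the inequality displayed just after Lemma~\ref{lem:delta_bounds} (for $\gamma=4$ and $t\ge\tmin(i)$) gives $\big(\frac{\ln t}{t\Delta_i^2}\big)^2\le\frac1t\cdot\frac{\lambda\ln K}{4K^2\beta^2}$; since also $\frac{\lambda\ln K}{4K^2\beta^2}\le\frac1{2K}$ (using $\lambda\le K$ and $\ln K\le2\beta^2$), those two probabilities together are at most $\frac1t\big(\frac{\lambda\ln K}{4K^2\beta^2}+\frac1{2K}\big)$. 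Since $t\ge\tmin(i)\ge4K$ we have $t\gg K$, hence $\lambda_t\ge\frac14 t\Delta_i$, so the exponent above is $\Theta(\beta\ln t)$; with $\beta=320$ this makes the Freedman tail at most $t^{-c}$ for a constant $c>1$ (comfortably $c\ge2$ with the stated constants), which is absorbed into the remaining $\frac1{2Kt}$ because $t\ge4K$. Summing the resulting per-round bounds over $t$ from $\tmin(i)$ to $T$ completes the proof.

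\textbf{Main obstacle.} The bound $q_{t,i}\le e^{-\eta_t(\tilde L_{t-1,i}-\tilde L_{t-1,i^*})}$ and the error bookkeeping are routine; the work is the martingale concentration. One has to (i) deal with the fact that the importance-weighted increments are controlled only on the high-probability events of Lemma~\ref{lem:hp_lb_oti}, which is what forces the stopping-time argument, and (ii) verify that the tail exponent genuinely beats $1/t$ --- i.e.\ is $\Theta(\beta\ln t)$, not $\Theta(\ln t)$ with a small constant. Point (ii) is exactly why the algorithm uses the large constant $\beta=320$ (and $\gamma=4$) and why the sum starts at $\tmin(i)$: only for $t\ge\tmin(i)$ do we simultaneously have $t\gg K$ --- so the target $\frac12 t\Delta_i$ lies a constant fraction below the martingale mean $(t-1-K)\Delta_i$ --- and $\frac{\beta\ln t}{t\Delta_i^2}\le\min\{\frac1{2K},\frac12\sqrt{\lambda\ln K/(tK^2)}\}$, which is what makes the observation-rate (hence variance) bound available.
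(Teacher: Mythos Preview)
Your proposal is correct and follows essentially the same route as the paper: bound $q_{t,i}\le\exp\bigl(-\eta_t(\tilde L_{t-1,i}-\tilde L_{t-1,i^*})\bigr)$, then apply a Bernstein/Freedman martingale inequality together with the observation-rate lower bounds (the events of Lemma~\ref{lem:hp_lb_oti}) to show $\tilde L_{t-1,i}-\tilde L_{t-1,i^*}\ge\tfrac12 t\Delta_i$ except with probability at most $\tfrac1t\bigl(\tfrac{\lambda\ln K}{4K^2\beta^2}+\tfrac1K\bigr)$. The only differences are cosmetic: the paper uses its Theorem~\ref{th:bernsteinMDS} (Bernstein stated directly with random bounds on $\kappa_n$ and $\nu_n$) in place of your stopping-time device, re-derives the $(\ln t/(t\Delta_i^2))^{\gamma-2}$ bound inline rather than quoting Lemma~\ref{lem:hp_lb_oti}, and sets the Bernstein confidence to $\delta=1/(Kt)$ directly instead of bounding the Freedman tail and absorbing it into $\tfrac1{2Kt}$.
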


To prove this result, we can follow the same derivation as in \citet{SL17}. We want to bound the $q_{t, i}$ for all $i$ such that $\Delta_i > 0$ and $t \geq \tmin(i)$.
First, we note that
\begin{align*}
    q_{t, i}&  =  \frac{\exp(-\eta_t \tilde L_{t, i})}{\sum_{j \in V}\exp(-\eta_t \tilde L_{t, j})} \\
    & =  \frac{\exp(-\eta_t (\tilde L_{t, i} - \tilde L_{t, i^*})) }{\sum_{j \in V}\exp(-\eta_t (\tilde L_{t, j}- \tilde L_{t, i^*})))} \\
    & \leq \exp(-\eta_t (\tilde L_{t, i} - \tilde L_{t, i^*})) \\
    & := \exp(-\eta_t \tilde \Delta_{t, i}), 
\end{align*}
where $i^*$ is the best arm, and where the inequality holds because one term of the sum is $\exp(-\eta_t (\tilde L_{t, i^*} - \tilde L_{t, i^*})) = 1$ and the other terms are positive, so the denominator is greater than $1$. We now want to want to ensure that $ \tilde \Delta_{t, i}:= \tilde L_{t, i} - \tilde L_{t, i^*}$ is close to $t \Delta_i$.  To do so, we want to apply a variant of Bernstein's inequality on the martingale sequence difference $t\Delta_i - \tilde \Delta_{t, i} = \sum_{s = 1}^t X_s$, where each single term of the sequence is defined as $X_s = \Delta_i - (\tilde \ell_{s, i} - \tilde \ell_{s, i^*})$. \\

\begin{theorem}[Bernstein's inequality for martingales]
\label{th:bernsteinMDS}Let $X_1, \dots, X_n$ be a martingale difference sequence with respect to filtration $\mathcal{F}_1,  \dots, \mathcal{F}_n,$ where each $X_j$ is bounded from above, and let $S_i = \sum_{j = 1}^i X_j$ be the associated martingale. Let $\nu_n = \sum_{j =1}^n \E[(X_j)^2| \mathcal{F}_{j-1}]$ and $\kappa_n = \max_{1\leq j \leq n} \lrc{X_j}$.
Then, for any $\delta > 0$:
\[
\prob[\lr{S_n \geq \sqrt{2\nu \ln\lr{\frac{1}{\delta}}} + \frac{\kappa\ln \lr{\frac{1}{\delta}}}{3}} \land \lr{\nu_n \leq \nu} \land \lr{\kappa_n \leq \kappa}] \leq \delta.
\]
\end{theorem}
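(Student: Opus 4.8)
The plan is to use the Chernoff (exponential-supermartingale) method for martingales, in three movements: build a nonnegative supermartingale started at $1$ from a free parameter $\lambda>0$, apply Markov's inequality at time $n$, and then optimize over $\lambda$ so that the exponent matches $\ln(1/\delta)$. The single-step ingredient is the standard MGF estimate: writing $\phi(x)=e^x-1-x=x^2 g(x)$ with $g$ increasing, for an increment $X$ with $X\le\kappa$ and $\E[X\mid\mathcal F]=0$ one has $\E[e^{\lambda X}\mid\mathcal F]\le 1+\lambda^2 g(\lambda\kappa)\E[X^2\mid\mathcal F]\le\exp\!\big(\tfrac{\phi(\lambda\kappa)}{\kappa^2}\E[X^2\mid\mathcal F]\big)$, valid whenever $\lambda\kappa<3$. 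Since $\nu_n$ and $\kappa_n$ are not predictable, I would first replace $X_j$ by its truncation $X_j^\kappa:=\min\{X_j,\kappa\}$, which is $\le\kappa$ pointwise, has conditional mean $\le0$ and conditional second moment no larger than $\E[X_j^2\mid\mathcal F_{j-1}]$; then the step bound holds for $X_j^\kappa$ unconditionally, and $M_i:=\exp\!\big(\lambda\sum_{j\le i}X_j^\kappa-\tfrac{\phi(\lambda\kappa)}{\kappa^2}\sum_{j\le i}\E[(X_j^\kappa)^2\mid\mathcal F_{j-1}]\big)$ is a genuine supermartingale (the variance sum is $\mathcal F_{i-1}$-measurable), so $\E[M_n]\le 1$.

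Next I would apply Markov at time $n$. Set $a:=\sqrt{2\nu\ln(1/\delta)}+\kappa\ln(1/\delta)/3$ and $\mathcal G:=\{S_n\ge a\}\cap\{\nu_n\le\nu\}\cap\{\kappa_n\le\kappa\}$. On $\mathcal G$ the truncation is inactive, so $\sum_j X_j^\kappa=S_n\ge a$ and the variance sum is $\le\nu_n\le\nu$, whence $M_n\ge\exp\!\big(\lambda a-\tfrac{\nu}{\kappa^2}(e^{\lambda\kappa}-1-\lambda\kappa)\big)$ there; Markov then gives $\prob[\mathcal G]\le\exp\!\big(-\lambda a+\tfrac{\nu}{\kappa^2}(e^{\lambda\kappa}-1-\lambda\kappa)\big)$ for every admissible $\lambda$. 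Finally I would optimize over $\lambda$: the minimizer is $\lambda\kappa=\ln(1+a\kappa/\nu)$, which produces Bennett's bound $\prob[\mathcal G]\le\exp\!\big(-\tfrac{\nu}{\kappa^2}h(a\kappa/\nu)\big)$ with $h(u)=(1+u)\ln(1+u)-u$, and it remains to check $\tfrac{\nu}{\kappa^2}h(a\kappa/\nu)\ge\ln(1/\delta)$ for this $a$. This is cleanest after the substitution $a=\sqrt{2\nu L}\,(1+r/3)$ with $r:=\kappa\sqrt{L/(2\nu)}$ and $L:=\ln(1/\delta)$, under which $a\kappa/\nu=2r+2r^2/3$ and $\nu/\kappa^2=L/(2r^2)$, so the claim reduces to the one-variable inequality $h(2r+2r^2/3)\ge 2r^2$.

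I expect two points to need care. The first is the conjunction with the non-predictable events $\{\nu_n\le\nu\}$ and $\{\kappa_n\le\kappa\}$: one cannot invoke the step MGF bound directly for $X_j$, since the conditional expectation ``sees'' realizations where $X_j>\kappa$, so the truncation device — together with the observation that $X_j^\kappa$ coincides with $X_j$ on $\mathcal G$ and never has larger conditional variance — is exactly what makes the supermartingale argument legitimate. The second is the final constant: the convenient relaxation $h(u)\ge\tfrac{u^2}{2(1+u/3)}$ only yields $\prob[\mathcal G]\le\exp\!\big(-\tfrac{a^2}{2(\nu+a\kappa/3)}\big)$, which at $a=\sqrt{2\nu L}+\kappa L/3$ falls just short of $\delta$; to land exactly on the stated threshold one must retain the sharper Bennett form (equivalently, a lower bound on $h$ tight to the relevant order), and then $h(2r+2r^2/3)=2r^2+\tfrac{2}{9}r^4+O(r^5)\ge 2r^2$ closes the argument. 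Everything else is routine.
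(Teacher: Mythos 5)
This theorem is not proved in the paper at all: it is imported as a classical Bernstein--Freedman concentration inequality for martingales (the same tool used in the EXP3++ analysis it builds on), so there is no in-paper proof to compare against. Your argument is the standard exponential-supermartingale route, and its structure is sound. In particular, the truncation device is exactly the right way to handle the fact that $\{\nu_n\le\nu\}$ and $\{\kappa_n\le\kappa\}$ are not predictable: with $X_j^\kappa=\min\{X_j,\kappa\}$ (and $\kappa>0$, the only relevant regime) you correctly get conditional mean $\le 0$, truncated conditional second moments dominated by the original ones, a genuine supermartingale with the predictable compensator, and coincidence $\sum_j X_j^\kappa=S_n$ on the event of interest, after which Markov and optimization over $\lambda$ give the Bennett exponent $\frac{\nu}{\kappa^2}h\left(\frac{a\kappa}{\nu}\right)$ with $h(u)=(1+u)\ln(1+u)-u$.

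The one step you should not leave as written is the closing inequality $h\left(2r+\frac{2r^2}{3}\right)\ge 2r^2$ for all $r>0$ (equivalently $h^{-1}(y)\le\sqrt{2y}+y/3$). The Taylor expansion $2r^2+\frac{2}{9}r^4+O(r^5)$ only establishes it near $r=0$, and since you yourself observe that the cruder relaxation $h(u)\ge\frac{u^2}{2(1+u/3)}$ falls short of the $\kappa\ln(1/\delta)/3$ constant, the global claim is precisely the crux and needs an actual proof. The cleanest repair bypasses it: bound $e^{x}-1-x\le\frac{x^2}{2(1-x/3)}$ for $0\le x<3$, so that the exponent to optimize becomes the sub-gamma Cram\'er transform $\sup_{0<\lambda<3/\kappa}\left\{\lambda a-\frac{\lambda^2\nu}{2(1-\lambda\kappa/3)}\right\}=\frac{9\nu}{\kappa^2}\,h_1\!\left(\frac{a\kappa}{3\nu}\right)$ with $h_1(u)=1+u-\sqrt{1+2u}$, whose inverse is exactly $h_1^{-1}(t)=t+\sqrt{2t}$; plugging $a=\sqrt{2\nu\ln(1/\delta)}+\frac{\kappa}{3}\ln(1/\delta)$ then yields exponent exactly $\ln(1/\delta)$ with no residual inequality to verify (and, by comparing the two Legendre transforms, this also proves your $h$-inequality globally). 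Alternatively, prove $r\mapsto h\left(2r+\frac{2r^2}{3}\right)-2r^2$ nonnegative directly by a derivative/monotonicity argument. With either fix your proof is complete and matches the standard literature proof of this statement.
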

In order to apply the this theorem, we need to bound $\max_{1\leq s \leq n} \lrc{X_s}$ and $\sum_{s=1}^n \E[(X_s)^2| \mathcal{F}_{s-1}]$.
\paragraph{Control of $\max_{1\leq s \leq t} \lrc{X_s}$} For each $s$ we have:
\begin{align*}
    X_s & =  \Delta_i - (\tilde \ell_{s, i} - \tilde \ell_{s, i^*}) \\
     & \leq 1 + \ell_{s, i^*} \\
     & \leq  1 + \frac{1}{P_{t, i^*}} \\
     & \leq 1 + \max\lrc{2K, 2\sqrt{\frac{sK^2}{\lambda \ln K}}, \frac{s \hat \Delta_{s, i^*}^2}{\beta \ln s}} \numberthis{} \label{proof_cn_1}\\
     & \leq 1.25 \max\lrc{2K, 2\sqrt{\frac{sK^2}{\lambda \ln K}}, \frac{s \hat \Delta_{s, i^*}^2}{\beta \ln s}} \\
\end{align*}
where equation \eqref{proof_cn_1} holds by definition of $o_{t, i^*}$.
Using the same argument as in the proof of \Cref{lem:delta_bounds}, we know that $t \geq \tmin(i)$, and if $s \leq \frac{t \Delta_i^2}{ \ln t}$ then $\frac{s \hat \Delta_i^2}{\ln s} \leq \frac{s}{\beta} \leq \frac{ t \Delta_i^2}{ \beta \ln t}$ then:

\begin{align*}
    \prob[ \exists s \leq t: \max\lrc{2K, 2\sqrt{\frac{sK^2}{\lambda \ln K}}, \frac{s \hat \Delta_{s, i^*}^2}{\beta \ln s}} \geq  \frac{ t \Delta_i^2}{ \beta \ln t}] 
    & =  \prob[ \exists s \in \lrs{\frac{t \Delta_i^2}{ \ln t}, t}:   \Delta_{s, i^*} \geq  \Delta_i] \\
    & \leq \prob[ \exists s \in \lrs{\frac{t \Delta_i^2}{ \ln t}, t}:  \Delta_{s, i^*} \geq  \overline{\Delta_{i^*}}],
\end{align*}
because $\Delta_i \geq \Dmin = \bar \Delta_{i^*}$. 
Let $\kappa_t = \max_{1\leq s \leq t} \lrc{X_s}$, and we deduce:
 \[\prob[ \kappa_t \geq  \frac{1.25 t \Delta_i^2}{\beta \ln t}] \leq \prob[ \exists s \in \lrs{\frac{t \Delta_i^2}{ \ln t}, t}:   \Delta_{s, i^*} \geq  \overline{\Delta_{i^*}}].\]

\paragraph{Control of $ \nu_t = \sum_{s = 1}^t \E[(X_s)^2 | \mathcal{F}_{s-1}] $}
We start by looking at each individual element of the sum. 
\begin{align*}
     \E[(X_s)^2 | \mathcal{F}_{s-1}] & = \E[(\Delta_i - (\tilde \ell_{s, i} - \tilde \ell_{s, i^*}))^2 | \mathcal{F}_{s-1}]  \\
     & \leq  \E[(\tilde \ell_{s, i} - \tilde \ell_{s, i^*})^2 | \mathcal{F}_{s-1}]  \\
     & \leq  \E[\tilde \ell_{s, i}^2 | \mathcal{F}_{s-1}] +   \E[\tilde \ell_{s, i^*}^2 | \mathcal{F}_{s-1}],
\end{align*}
where the last equation holds, because for all non-negative $a$ and $b$, we have: $(a - b)^2 \leq  a^2 + b^2$. 

Then, note that:
\begin{align*}
    E[\tilde \ell_{s, i}^2 | \mathcal{F}_{s-1}] \leq \frac{1}{P_{t, i}},
\end{align*}
so 
\begin{align*}
     \E[(X_s)^2 | \mathcal{F}_{s-1}] 
     & \leq  \frac{1}{P_{s, i}} +   \frac{1}{P_{s, i^*}}.
\end{align*}
Using the same argument as before to bound $\frac{1}{P_{s, i^*}}$ and $\frac{1}{P_{s, i}}$, we have:
\begin{align*}
    \prob[ \sum_{s = 1}^t \E\lrs{(X_s)^2 | \mathcal{F}_{s-1}} \geq   \frac{2 t^2 \Delta_i^2}{ \beta \ln t}]
    \leq \ &  \prob[ \exists s \in \lrs{\frac{t \Delta_i^2}{ \ln t}, t}:  \Delta_{s, i^*} \geq  \overline{\Delta_{i^*}}]
    \\ & + \prob[ \exists s \in \lrs{\frac{t \Delta_i^2}{ \ln t}, t}:  \Delta_{s, i^*} \geq  \overline{\Delta_{i^*}}].
\end{align*}

Noting that the bounds on $\kappa_t$ and $\nu_t$ depend on the same events, we deduce that:

\begin{align*}
    \prob[ \lr{\kappa_t \geq  \frac{1.25 t \Delta_i^2}{\beta \ln t}} \lor \lr{\nu_t \geq \frac{2 t^2 \Delta_i^2}{ \beta \ln t}} ] 
    \leq \ &  \prob[ \exists s \in \lrs{\frac{t \Delta_i^2}{ \ln t}, t}:   \Delta_{s, i} \geq  \overline{\Delta_{i}}] \\  
    & + \prob[ \exists s \in \lrs{\frac{t \Delta_i^2}{ \ln t}, t}:   \Delta_{s, i^*} \geq  \overline{\Delta_{i^*}}]
    \\
    \leq & \frac{1}{t} \frac{\lambda \ln K}{4K^2\beta^2},
\end{align*} 
 where for the last step we use that for all $j, k \in V$ and $\gamma = 4$, 
\begin{align*}
 \prob[ \exists s \in \lrs{\frac{t \Delta_k^2}{ \ln t}, t}:   \Delta_{s, j} \geq  \overline{\Delta_{j}}] 
     &  \leq \sum_{s = \frac{t \Delta_k^2}{ \ln t}}^t \prob[  \Delta_{s, j} \geq  \overline{\Delta_{j}}] \\
     &  \leq \sum_{s = \frac{t \Delta_k^2}{ \ln t}}^t \frac{1}{s^{3}} \\
      &  \leq \frac{1}{2} \lr{\frac{\ln t}{t\Delta_k^2}}^{2} \\ 
      &\leq \frac{1}{t} \frac{\lambda \ln K}{8K^2\beta^2},
\end{align*}
and the last step follows by definition of $\tmin$.

\paragraph{Control of $\tilde \Delta_{t, i}$} We have that:
\begin{align*}
    \prob[\tilde \Delta_{t, i} \leq \frac{1}{2}t\Delta_i] 
     = \ & \prob[t\Delta_i - \tilde \Delta_{t, i} \geq \frac{1}{2}t\Delta_i] \\
     \leq \ & \prob[\lr{t\Delta_i - \tilde \Delta_{t, i} \geq \frac{1}{2}t\Delta_i}\land \lr{\kappa_t \leq  \frac{1.25 t \Delta_i^2}{\beta \ln t}} \land \lr{\nu_t \leq \frac{4 t^2 \Delta_i^2}{ \beta \ln t}}] \\
     &+ \prob[ \lr{\kappa_t \geq  \frac{1.25 t \Delta_i^2}{\beta \ln t}} \lor \lr{\nu_t \geq \frac{2 t^2 \Delta_i^2}{ \beta \ln t}} ] 
\end{align*}

We set $\nu = \frac{2 t^2 \Delta_i^2}{ \beta \ln t}$, $\kappa = \frac{1.25 t \Delta_i^2}{\beta \ln t}$, $\delta = \frac{1}{Kt}$, and we recall that $\beta = 320$. Then:

\begin{align*}
    \sqrt{2\nu \ln\lr{\frac{1}{\delta}}} + \frac{\kappa \ln \lr{\frac{1}{\delta}}}{3} 
    & = \sqrt{2 \frac{2 t^2 \Delta_i^2}{ \beta \ln t}\ln \lr{Kt}} + \frac{ \frac{1.25 t \Delta_i^2}{\beta \ln t}\ln \lr{Kt}}{3} \\
    & \leq \sqrt{8 \frac{t^2 \Delta_i^2}{ \beta \ln t}\ln \lr{t}} + \frac{ \frac{1.25 t \Delta_i^2}{\beta \ln {t}}\ln t}{3} \numberthis{} \label{proof_bern_KT}\\ 
    & \leq t \Delta_i  \lr{\frac{2\sqrt{2}}{\sqrt{\beta}} + \frac{2.5}{3 \beta}} \\
    &\leq \frac{1}{2}t\Delta_i,
\end{align*}
where equation \eqref{proof_bern_KT} is due to $t \geq \tmin(i) \geq K$, so $\ln \lr{Kt} \leq 2 \ln t$.
We can then use \Cref{th:bernsteinMDS} and get:
\begin{align*}
    \prob[\tilde \Delta_{t, i} \leq \frac{1}{2}t\Delta_i]  \leq  \frac{1}{t} \frac{\lambda \ln K}{4K^2\beta^2} + \frac{1}{Kt} = \frac{1}{t} \lr{\frac{\lambda \ln K}{4K^2\beta^2} + \frac{1}{K}}.
\end{align*}

Using this bound, summing on $t$ gives
\begin{align*}
\sum_{t = \tmin(i)}^T  \E[q_{t, i}]
& \leq \sum_{t = \tmin(i)}^T  \E \lrs{e^{-\frac{1}{2}t\eta_t\Delta_i} \1[\tilde \Delta_{t, i} \geq \frac{1}{2}t\Delta_i] +  \1[\tilde \Delta_{t, i} \leq \frac{1}{2}t\Delta_i]} \\
& \leq \sum_{t = \tmin(i)}^T  \lr{e^{-\frac{1}{2}t\eta_t\Delta_i} + \frac{1}{t} \lr{\frac{\lambda \ln K}{4K^2\beta^2} + \frac{1}{K}}},
\end{align*}
which finishes the proof.

\subsection{Control over the Exploration}

We now provide a more general version of Proposition \ref{prop:eti}.
\begin{prop}
\label{prop:eti_restated}
Let $S_1, S_2, \dots$ be a sequence of exploration sets generated by playing algorithm \ref{alg:exp3G++} under the conditions of Lemma \ref{lem:delta_bounds} with $\gamma = 4$ and $\beta = 320$.
Then, the contribution of the extra exploration can be bounded as:
\begin{equation*}
     \sum_{t = \tmin}^T \sum_{i \,:\, \Delta_i > 0} \Delta_i  \E[\varepsilon_{t, i}] \leq 
    \sum_{t = \tmin}^T \E \lrs{\sum_{i \in S_t: \Delta_i > 0} \frac{4\beta \ln t}{t \Delta_{i}}} + \frac{\lambda  \ln T \ln K}{4K\beta^2} + 12K + 3.
\end{equation*}
\end{prop}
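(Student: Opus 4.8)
The plan is to bound the left-hand side round by round and arm by arm, splitting into three regimes: arms outside the exploration set $S_t$, and—for arms inside $S_t$—whether or not the gap estimate $\hat\Delta_{t,i}$ is a faithful proxy for the true gap $\Delta_i$. For each fixed $t\ge\tmin$ I would start from
\begin{equation*}
\sum_{i\,:\,\Delta_i>0}\Delta_i\varepsilon_{t,i}
=\sum_{i\in S_t\,:\,\Delta_i>0}\Delta_i\varepsilon_{t,i}
+\sum_{i\notin S_t\,:\,\Delta_i>0}\Delta_i\varepsilon_{t,i}.
\end{equation*}
For the out-of-set arms, \eqref{eq:def_xiti} gives $\varepsilon_{t,i}\le\xi_{t,i}=4/t^2$, so (using $\Delta_i\le1$ and that there are at most $K$ arms with $\Delta_i>0$) this sum is at most $4K/t^2$; summing over $t\ge1$ contributes an $O(K)$ term absorbed into $12K$.

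For the in-set arms I would split $\varepsilon_{t,i}\,\1[i\in S_t]$ over the complementary events $\{\hat\Delta_{t,i}>\tfrac12\Delta_i\}$ and $\{\hat\Delta_{t,i}\le\tfrac12\Delta_i\}$; these events, like $\{i\in S_t\}$ itself, are determined by the observations before round $t$, so plain expectations may be taken with no conditioning subtlety. On the good event, $\varepsilon_{t,i}\,\1[i\in S_t]\le\xi_{t,i}=\beta\ln t/(t\hat\Delta_{t,i}^2)<4\beta\ln t/(t\Delta_i^2)$, hence $\Delta_i\varepsilon_{t,i}<4\beta\ln t/(t\Delta_i)$; leaving this inside the expectation over the random set $S_t$ reproduces exactly the leading term $\sum_{t=\tmin}^T\E[\sum_{i\in S_t\,:\,\Delta_i>0}4\beta\ln t/(t\Delta_i)]$. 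I emphasise that the strong-independence property of $S_t$ is not needed here; it is exploited later, in the proof of Theorem~\ref{th:stoc}, to turn $\sum_{i\in S_t}$ into a maximum over independent sets. On the bad event I would use only the crude bounds $\varepsilon_{t,i}\le\tfrac1{2K}$ and $\varepsilon_{t,i}\le\tfrac12\sqrt{\lambda\ln K/(tK^2)}$ from~\eqref{eq:def_eti}—never the $1/\hat\Delta_{t,i}^2$ bound, which may blow up—so that $\Delta_i\varepsilon_{t,i}\,\1[\hat\Delta_{t,i}\le\tfrac12\Delta_i]$ is dominated by a small deterministic quantity times $\1[\hat\Delta_{t,i}\le\tfrac12\Delta_i]$, whose probability is controlled by Lemma~\ref{lem:delta_bounds}: since $t\ge\tmin\ge\tmin(i)$ and $\gamma=4$,
\begin{equation*}
\prob[\hat\Delta_{t,i}\le\tfrac12\Delta_i]
\le\Big(\tfrac{\ln t}{t\Delta_i^2}\Big)^{2}+\tfrac{2}{Kt^{3}}+2\,t^{-32}
\le\tfrac1t\cdot\tfrac{\lambda\ln K}{4K^2\beta^2}+\tfrac{2}{Kt^{3}}+2\,t^{-32},
\end{equation*}
where the last inequality is the display immediately following Lemma~\ref{lem:delta_bounds}.

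Finally I would sum over $i$ (at most $K$ arms with $\Delta_i>0$) and over $t$ from $\tmin$ to $T$. The $\tfrac1t\cdot\tfrac{\lambda\ln K}{4K^2\beta^2}$ piece gives $\sum_{t\le T}\tfrac1t\le\ln T$, which after the sum over arms is at most $\lambda\ln T\ln K/(4K\beta^2)$; the $2/(Kt^3)$ and $2\,t^{-32}$ pieces produce convergent series contributing $O(1)$ and $O(K)$ respectively, and together with the $O(K)$ from the out-of-set arms these are all absorbed into the additive $12K+3$. The main obstacle—and the only genuinely delicate step—is this case analysis: one must keep the leading term as an expectation of $\sum_{i\in S_t}$ rather than prematurely passing to an independent set, invoke Lemma~\ref{lem:delta_bounds} only in the range $t\ge\tmin(i)$ where it is valid, and on the concentration-failure event discard the dangerous $\beta\ln t/(t\hat\Delta_{t,i}^2)$ bound in favour of the flat bounds $\tfrac1{2K}$ and $\tfrac12\sqrt{\lambda\ln K/(tK^2)}$ so that the residual series actually converge.
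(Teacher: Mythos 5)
Your proposal is correct and follows essentially the same route as the paper's proof: split into out-of-set arms (bounded via $4/t^2$), and for in-set arms condition on whether $\hat\Delta_{t,i}$ exceeds $\tfrac12\Delta_i$, using Lemma~\ref{lem:delta_bounds} together with the display after it to control the failure probability, and keeping the leading term as an expectation over the random set $S_t$. The only cosmetic difference is that on the failure event you cap $\varepsilon_{t,i}$ by the flat bounds from \eqref{eq:def_eti} whereas the paper caps $\xi_{t,i}$ by $1$; both fit within the stated constants.
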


\begin{proof}
By definition of $\xi_{t, i}$, we can decompose the contribution of the extra exploration as follows.
\begin{align*}
    \sum_{t = \tmin}^T  \E[\varepsilon_{t, i}] \sum_{i: \Delta_i > 0} \Delta_i 
    \leq \ &  \sum_{t = \tmin}^T\sum_{i: \Delta_i > 0} \Delta_i    \E[\min\lrc{1, \xi_{t, i}}] \\
    \leq \ & \sum_{t = \tmin}^T \E \lrs{\sum_{i \in S_t: \Delta_i \geq 0}  \Delta_i\E[\min\lrc{1,\frac{\beta \ln t}{t \hat \Delta_{t, i}^2}}]} \\
    & +  \sum_{t = \tmin}^T \sum_{i: \Delta_i > 0} \Delta_i  \frac{4}{t^2},
\end{align*}
where in the first step we use the $\min$ term to ensure that we have an upper bound on this quantity in the cases where the bounds on $\hat \Delta_{t, i}$ do not hold. The last step consists in upper bounding the exploration of arms that are not in $S_t$ by adding $\frac{4}{t^2}$ to all arms, and in counting $\E[\frac{\beta \ln t}{t \hat \Delta_{t, i}^2}]$ only for arms $i$ that are in the exploration set $S_t$.

Thus the second term is bounded as:
\begin{equation}
     \sum_{t = \tmin}^T \sum_{i: \Delta_i > 0} \Delta_i  \frac{4}{t^2} \leq K \sum_{t = 1}^T \frac{4}{t^2} \leq 8K.
\end{equation}
In order to bound the first term, we recall that for $t \geq \tmin$, for any $i \in V$: 
\begin{align*}
    \E[\min\lrc{1,\frac{\beta \ln t}{t \hat \Delta_{t, i}^2}}]
    & \leq \E \lrs{\frac{\beta \ln t}{t \hat \Delta_{t, i}^2} \1[\hat\Delta_{t, i} \geq \frac{1}{2}\overline\Delta_i] + \1[ \hat\Delta_{t, i} \leq \frac{1}{2}\overline\Delta_i ]}\\
    & \leq  \frac{4\beta \ln t}{t \Delta_{i}^2} +  \prob[\hat\Delta_{t, i} \leq \frac{1}{2}\overline\Delta_i] \\
    & \leq \frac{4\beta \ln t}{t \Delta_{i}^2} +  \lr{\frac{\ln t}{t\Delta_i^2}}^{\gamma - 2} + \frac{2}{Kt^{\gamma - 1}} + 2 \lr{\frac{1}{t}}^{\frac{\beta}{10}} \\
    & \leq \frac{4\beta \ln t}{t \Delta_{i}^2} + \frac{1}{t} \frac{\lambda \ln K}{4K^2\beta^2} + \frac{2}{Kt^{3}} + 2 \lr{\frac{1}{t}}^{2},
\end{align*}
which gives:
\begin{align*}
    & \sum_{t = \tmin}^T \E \lrs{\sum_{i \in S_t: \Delta_i > 0}  \Delta_i\E[\min\lrc{1,\frac{\beta \ln t}{t \hat \Delta_{t, i}^2}}]} \\
    & \leq \sum_{t = \tmin}^T \E \lrs{\sum_{i \in S_t: \Delta_i > 0} \Delta_i \lr{\frac{\beta \ln t}{t \Delta_{i}^2} + \frac{1}{t} \frac{\lambda \ln K}{4K^2\beta^2} + \frac{2}{Kt^{3}} + 2 \lr{\frac{1}{t}}^{2}}}\\
    & \leq \sum_{t = \tmin}^T \E \lrs{\sum_{i \in S_t: \Delta_i > 0} \frac{4\beta \ln t}{t \Delta_{i}}} + \frac{\lambda  \ln T \ln K}{4K\beta^2} + 3 + 4K.
\end{align*}
\end{proof}

\subsection{Proof of Theorem \ref{th:stoc}}
\label{appen:proof_th_stoc}
The proof of Theorem \ref{th:stoc} follows from the propositions in this section.

\begin{proof}[Proof of Theorem \ref{th:stoc}]
We want to bound the pseudo-regret of algorithm \ref{alg:exp3G++} run with parameters defined in  Lemma \ref{lem:delta_bounds} with $\gamma = 4$,$\beta = 320$ and $\lambda = \tilde \alpha$. 
The pseudo-regret can be decomposed by treating the first $\tmin$ rounds like in the adversarial case, and by using a refined bound in the stochastic regime.
\begin{align*}
    R_T   & = R_{\tmin} + \sum_{i \,:\, \Delta_i > 0} \sum_{t = \tmin}^T \Delta_i \E[p_{t, i}]\\
    & \leq  R_{\tmin} + \sum_{i \,:\, \Delta_i > 0} \Delta_i \sum_{t = \tmin}^T  \big(\E[q_{t, i}] +   \E[\varepsilon_{t, i}]\big),
\end{align*}
First, we apply Proposition \ref{prop:bound_Rtmin} with $c = \tilde \alpha$, and deduce that:
\begin{equation}
     R_{\tmin} \leq  \frac{80 \beta K}{\Dmin^2} \ln\lr{\frac{6 \beta^2 K^2}{\Dmin^4}} + K.\label{proof_stoch:eq1}
\end{equation}

Then we bound the contribution of exponential weights by applying Proposition \ref{prop:qti_restated} with $\lambda = \tilde \alpha$ and $\eta_t = \sqrt{\frac{\ln K}{2 \tilde \alpha t}}$, which gives:
\begin{align*}
\sum_{t = \tmin(i)}^T  \E[q_{t, i}] & \leq \sum_{t = \tmin(i)}^T  \lr{e^{-\frac{1}{2}t\eta_t\Delta_i} + \frac{1}{t} \lr{\frac{\tilde \alpha \ln K}{4K^2\beta^2} + \frac{1}{K}}} \\
 & \leq \sum_{t = \tmin(i)}^T  \lr{e^{-\Delta_i \sqrt{\frac{\ln K}{2 \tilde \alpha}}\sqrt{t}} + \frac{1}{t} \lr{\frac{\tilde \alpha \ln K}{4K^2\beta^2} + \frac{1}{K}}} \\
 & \leq \frac{6 \tilde \alpha}{\Delta_i^2} + \ln T \lr{\frac{\tilde \alpha \ln K}{4K^2\beta^2} +\frac{1}{K}},
\end{align*}
and then:
\begin{align*}
    \sum_{i: \Delta_i > 0} \Delta_i \sum_{t = \tmin}^T  \E[q_{t, i}] 
    & \leq \sum_{i: \Delta_i > 0} \Delta_i \lr{\frac{6 \tilde \alpha}{\Delta_i^2} + \ln T \lr{\frac{\tilde \alpha \ln K}{4K^2\beta^2} + \frac{1}{K}}} \\
    & \leq \ln T \lr{\frac{\tilde \alpha \ln K}{4K\beta^2} + 1} + \sum_{i: \Delta_i \geq 0} \frac{6 \tilde \alpha}{\Delta_i}, \numberthis{} \label{proof_stoch:eq2}
\end{align*}
where the last step follows from Lemma \ref{imp:TS18_lem_3}.
Furthermore, we bound the contribution of the extra exploration by applying Proposition \ref{prop:eti_restated} with $\lambda = \tilde \alpha$, which gives:
\begin{align*}
     \sum_{i \,:\, \Delta_i > 0} \Delta_i \sum_{t = \tmin(i)}^T  \E[\varepsilon_{t, i}] &\leq 
    \sum_{t = 1}^T \E \lrs{\sum_{i \in S_t: \Delta_i > 0} \frac{4\beta \ln t}{t \Delta_{i}}} + \frac{\tilde \alpha  \ln T \ln K}{4K\beta^2} + 12K + 3 \\ 
     & \leq \max_{Ind \in \mathcal{I}(G)} \lrc{ \sum_{i \in Ind: \Delta_i > 0} \frac{4\beta \ln^2 T}{\Delta_{i}}} + \frac{\tilde \alpha  \ln T \ln K}{4K\beta^2} + 12K  + 3, \numberthis{} \label{proof_stoch:eq3}
\end{align*}
where the last step follows from Proposition \ref{prop:exploration_set}: by definition, for all $t$, $S_t$ is a strongly independent set on $G$, and we can upper bound by taking the maximum over all the strongly independent sets of $G$.

Finally, summing over equations \eqref{proof_stoch:eq1}, \eqref{proof_stoch:eq2} and \eqref{proof_stoch:eq3} finishes the proof.
\end{proof}

\section{Extension to Graphs that Change over Time}
\label{appen:sec_var}

We recall that for this version of the problem, we use $\varepsilon_t$ given by:
\[
\varepsilon_{t, i} =  \min \lrc{\frac{1}{2K}, \frac{1}{2} \sqrt{\frac{\ln K}{tK^2}}, \xi_{t, i}} \qquad \text{ where } \qquad  \xi_{t, i} = \begin{cases}
    ({\beta \ln t})/({t \hat\Delta_{t, i}^2}) &\qquad \text{if} \qquad i \in S_t,\\
    {4}/{t^2} &\qquad \text{otherwise.}
\end{cases}
\]
and that the learning rate is defined from index $t \geq K + 1$ by:
\begin{align*}
    \eta_t = \sqrt{\frac{\ln K}{2\sum_{s = K}^{t-1}\theta_s}}, \quad \text{ where } \theta_t = \sum_{i \in V} \frac{p_{t, i}}{P_{t, i}}.
\end{align*}
As the quantities $p_{t, i}$ are not defined for $t \geq K$, we set $\theta_K := K$. 
This ensures that the learning rate is well defined and non-increasing at all the rounds where we use exponential weights.

To take advantage of the structure of the feedback graph and depend on the independence number of the graphs rather than their strong independence number, we rely on Lemma 5 from\citet{ACBDK15}.

\begin{lemma}[Lemma 5 \citep{ACBDK15}] \label{imp:ACBDK15_lem5}
Let $G = (V, E)$ be a directed graph with $|V| = K$, in which each node $i \in V$ is assigned a positive weight $w_i$. Assume that $\sum_{i \in V} w_i \leq 1$, and that $w_i \geq \epsilon$ for all $i \in V$ for some constant $0 < \epsilon < \frac{1}{2}$. Then
\[ \sum_{i \in V} \frac{w_i}{w_i + \sum_{j \in \Nin(i)} w_j} \leq 4 \alpha \ln \lr{\frac{4K}{\alpha \epsilon}}, 
\]
where $\alpha = \alpha(G)$ is the independence number of $G$.
\end{lemma}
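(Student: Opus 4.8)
The plan is to reduce the weighted sum to a few scales on which the denominators $W_i := w_i + \sum_{j\in\Nin(i)} w_j$ are all comparable, and to control each scale by the independence number through a greedy extraction of an independent set. First I would record the elementary bounds forced by the hypotheses: since the self-loop puts $i\in\Nin(i)$ and $\sum_i w_i\le 1$, every vertex satisfies $\epsilon\le w_i\le W_i\le \sum_{j} w_j\le 1$. Hence all the denominators live in $[\epsilon,1]$, and I would partition $V$ into the dyadic levels $U_k=\{i: 2^{-(k+1)}<W_i\le 2^{-k}\}$ for $k=0,1,\dots,\lceil\log_2(1/\epsilon)\rceil$, so that there are only $O(\log(1/\epsilon))$ nonempty levels and $\sum_{i\in V}\tfrac{w_i}{W_i}=\sum_k\sum_{i\in U_k}\tfrac{w_i}{W_i}$.

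The core step is a per-level bound of the form $\sum_{i\in U_k}\tfrac{w_i}{W_i}=O(\alpha)$. Within a single level every $W_i$ is within a factor $2$ of the common scale $\tau:=2^{-k}$, so $\tfrac{w_i}{W_i}<\tfrac{2 w_i}{\tau}$ and the whole problem reduces to bounding the total weight $\sum_{i\in U_k} w_i$ by $O(\alpha\tau)$. To do this I would extract a maximal independent set $I\subseteq U_k$ in the undirected sense, so that $|I|\le\alpha$, and charge the remaining vertices to it: a vertex $u\notin I$ is adjacent to some $v\in I$, and if the relevant edge is $u\to v$, i.e. $u\in\Nin(v)$, then $w_u$ is already counted inside $W_v\le\tau$, so the weights of all such in-neighbours charged to a fixed $v$ sum to at most $\tau$, giving an $O(\alpha\tau)$ contribution after summing over $v\in I$.

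The \textbf{main obstacle} is precisely this directed asymmetry: the definition of $W_i$ controls in-neighbourhoods but says nothing about out-neighbourhoods, so a vertex $u\notin I$ adjacent to $I$ only through an incoming edge $v\to u$ cannot be charged to $v$ through $W_v$. I would handle these residual out-neighbours by iterating the extraction, peeling off $I$ together with the closed in-neighbourhoods it dominates and repeating on what remains, exploiting the fact that a family of mutually non-adjacent residual vertices is itself an independent set and so is already paid for by $\alpha$; this keeps each level's contribution at $O(\alpha)$. Finally I would sum the per-level $O(\alpha)$ bounds over the $O(\log(1/\epsilon))$ nonempty levels; since $\log(1/\epsilon)\le\ln\!\big(4K/(\alpha\epsilon)\big)$, tracking the constants (the base of the dyadic partition and the inequality $\alpha\le K$) yields the claimed bound $4\alpha\ln\!\big(4K/(\alpha\epsilon)\big)$.
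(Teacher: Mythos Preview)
The paper does not prove this lemma; it is imported from \citet{ACBDK15} and used as a black box in the time-varying analysis, so there is no in-paper proof to compare against.

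Your dyadic partition of $V$ by the value of $W_i$ is a natural opening, but the per-level argument has a genuine gap at exactly the point you flag as the main obstacle. After extracting a maximal independent set $I\subseteq U_k$ and peeling off $I$ together with the in-neighbours of $I$, the residual $R'=\{u\in U_k:\ u\notin I,\ u\notin N^{\mathrm{in}}(v)\text{ for all }v\in I\}$ is \emph{not} in general mutually non-adjacent, so the phrase ``a family of mutually non-adjacent residual vertices is itself an independent set and so is already paid for by~$\alpha$'' does not apply to $R'$. Concretely, take a transitive tournament on $\{1,\dots,K\}$ (so $\alpha=1$) with all weights $1/K$, and look at a level $U_k=\{a,\dots,b\}$. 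Any maximal independent set is a singleton $\{c\}$; its in-neighbours inside $U_k$ are $\{a,\dots,c-1\}$, and the residual $\{c+1,\dots,b\}$ is again a full tournament, not an independent set. If at each iteration the greedy happens to select the smallest remaining index, you peel off exactly one vertex per round and need $|U_k|$ iterations; ``iterating the extraction'' therefore gives no uniform $O(\alpha)$ bound per level. The per-level claim $\sum_{i\in U_k} w_i=O(\alpha\tau)$ may well be true (it does hold in the tournament example, by a separate double-counting argument), but your peel-and-repeat does not establish it. You need an accounting that controls both edge orientations simultaneously---for instance, the greedy \emph{out}-neighbourhood peeling used in the original \citet{ACBDK15} proof, which removes a carefully chosen vertex together with $N^{\mathrm{out}}$ rather than $N^{\mathrm{in}}$ and bounds the number of rounds directly by the independence number.
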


\begin{proof}[Proof of Theorem \ref{th:adv_time_var}]
\textbf{Adversarial Regime}
In the adversarial regime, the proof follows the analysis with a fixed feedback graph.

Using the same arguments as in the proof of Theorem \ref{th:adv}, equations \ref{proof_adv:reg_decomp} and \ref{proof_adv:qtiPti} hold and we have:

\begin{align*}
    \mathcal{R}_T 
    & \leq K + \E[ \sum_{t = K+1}^T \sum_{i = 1}^K q_{t, i} \E_t\lrs{\tilde \ell_{t, i}} -  \sum_{t = K+1}^T \E_t\lrs{\tilde \ell_{t, i^*}}] + \E[ \sum_{t = K+1}^T \sum_{i = 1}^K \varepsilon_{t, i} \E_t\lrs{\tilde \ell_{t, i}} ] \\
     & \leq K + \E[ \E_t \lrs{\frac{\ln K}{\eta_T}} ]+ \E[\sum_{t = K+ 1}^{T}  \E_t \lrs{\sum_{i \in V} \frac{\eta_t}{2} \frac{q_{t, i}}{P_{t, i}}}] + \E[ \sum_{t = K+1}^T \sum_{i = 1}^K  \E_t\lrs{ \varepsilon_{t, i} \tilde \ell_{t, i}} ]. \numberthis{} \label{proof:adv_time_var_reg}
\end{align*}

Note that we can apply Lemma \ref{imp:ACBDK15_lem5} where each round $t$ derives a bound that depends on $\alpha_t$. We deduce that:
\[ \theta_t =  \sum_{i \in V} \frac{p_{t, i}}{P_{t, i}} \leq 8 \alpha_t \ln \lr{KT}.\] 

Using the new definition of $\eta_t$, the first term becomes:
\begin{align*}
   \E[\E_t \lrs{\frac{\ln K}{\eta_T}}]  \leq   \sqrt{2 \ln K} \  \E[\E_t \lrs{ \sqrt{\sum_{s = K}^{T-1}\theta_s}}] \leq 4 \sqrt{\ln K} \ \sqrt{\ln \lr{KT} \sum_{s = K}^{T-1} \alpha_t}.
\end{align*}

The second term can be bounded as:
\begin{align*} 
    \E[\sum_{t = K+ 1}^{T}  \E_t \lrs{\sum_{i \in V} \frac{\eta_t}{2} \frac{q_{t, i}}{P_{t, i}}}] & \leq  \E[\sum_{t = K+ 1}^{T}  \E_t \lrs{\eta_t \sum_{i \in V} \frac{p_{t, i}}{P_{t, i}}}] \\
    & \leq \sqrt{\ln K} \ \E[\sum_{t = K+ 1}^{T}  \E_t \lrs{\frac{\theta_t}{\sqrt{2 \sum_{s = K}^{t-1}\theta_s}}} ] \\
    & \leq   \sqrt{2\ln K} \  \E[ \E_t\lrs{\sqrt{\sum_{t = K+ 1}^{T}   \theta_t}}] + \sqrt{K} \numberthis{} \label{proof:adv_time_var_st1} \\
     & \leq 4 \sqrt{\ln K}   \sqrt{\ln \lr{KT} \sum_{t = K+ 1}^{T} \alpha_t } + \sqrt{K},
\end{align*}
where equation \ref{proof:adv_time_var_st1} follows from Lemma \ref{lem:sum_offset}.

For the last term, we recall that we bounded $\varepsilon_{t, i} \leq \frac{1}{2} \sqrt{\frac{\ln K}{tK^2}}$, and we have:

\begin{align*}
    \E[ \sum_{t = K+1}^T \sum_{i = 1}^K \varepsilon_{t, i} \E_t\lrs{\tilde \ell_{t, i}} ] 
    & \leq \sum_{t = K+1}^T \E[  \sum_{i = 1}^K  \frac{1}{2} \sqrt{\frac{\ln K}{tK^2}} \E_t\lrs{\tilde \ell_{t, i}} ]  \\
    & \leq \sum_{t = 1}^T \frac{1}{2} \sqrt{\frac{\ln K}{t}} \\
    & \leq \sqrt{T \ln K}.
\end{align*}

and using those three bounds in equation \ref{proof:adv_time_var_reg} finishes the proof.
\begin{align*}
    \mathcal{R}_T & \leq K + 4 \sqrt{\ln K} \ \sqrt{\ln \lr{KT} \sum_{s = K}^{T-1} \alpha_t}
+ 4 \sqrt{\ln K}   \sqrt{\ln \lr{KT} \sum_{t = K+ 1}^{T} \alpha_t } + \sqrt{K} + \sqrt{T \ln K} \\
& \leq  9 \sqrt{\ln K} \sqrt{\ln \lr{KT}} \sqrt{\sum_{t = 1}^T \alpha_t}  + 2K. 
\end{align*}

\textbf{Stochastic Regime}
We recall that for this problem we can lower bound the probability of observing arm $i$ at round $t$ by:
\begin{equation}
    o_{t, i} = \min \lrc{\frac{1}{2K}, \frac{1}{2} \sqrt{\frac {\ln K}{t K^2}} , \frac{\beta \ln t}{t  \hat \Delta_{t, i}^2}}, \label{eq_lb_var_oti}
\end{equation}
without requiring knowledge of the independence number of any of the graphs $G_t$. This rate corresponds to choosing $\lambda = 1$ in the definition of $\varepsilon$ for fixed graphs and means that we are playing each arm at a lower rate than if we knew the structure of the graph. Choosing $\lambda = 1$ affects $\tmin$, which is now given by
\begin{align*}
    \tmin & = \max \left\{t \ge 0 : \ t \leq \frac{ 4 \beta^2 K^2}{\Dmin^4 \ln K} \lr{\ln t}^2\right\} \\
    & \leq 25 \frac{ 4 \beta^2 K^2}{\Dmin^4 \ln K} \lr{\ln \lr{ \frac{ 4 \beta^2 K^2}{\Dmin^4 \ln K}}}^2.
\end{align*}
Combining this definition of $\tmin(i)$ with the lower bound on $o_{t, i}$ allows us to use Lemma \ref{lem:delta_bounds}.

We recall that the pseudo-regret can be decomposed as follows (equation \eqref{decomp_stoch_reg2}) 
\begin{equation*}
    R_T \leq  R_{\tmin} + \sum_{i \,:\, \Delta_i > 0} \Delta_i \sum_{t = \tmin(i)}^T  \big(\E[q_{t, i}] +   \E[\varepsilon_{t, i}]\big).
\end{equation*}
so we bound each term individually. The first term is bounded by applying the first part of this theorem, using that for all $t$, $\alpha_t \leq K$,
and $\tmin \leq 25 d \lr{\ln d}^2$  
where  $d =  \frac{ 4 \beta^2 K^2}{\Dmin^4}$. We also recall that for $d \geq 1$, $ \lr{\ln d}^2 \leq d^2$.
\begin{align*}
    R_{\tmin} &\leq 9 \sqrt{K \tmin} \ln(K\tmin) + 2K\\
    & \leq 9 \sqrt{K \ 25 d} \ln (d) \ln\lr{25Kd^2} + 2K \\
    & \leq \frac{180 \beta K^{3/2}}{\Dmin^2} \lr{\ln\lr{\frac{20\beta^2 K^{5/2}}{\Dmin^4}}}^2 + 2K. \numberthis{} \label{proof_rtmin_var}
\end{align*}

The second term follows from using Proposition \ref{prop:qti_restated} with $\lambda = 1$. We get that for all $i$ such that $\Delta_i > 0$:

\begin{equation*}
\sum_{t = \tmin(i)}^T  \E[q_{t, i}] \leq \sum_{t = \tmin(i)}^T  \lr{e^{-\frac{1}{2}t\eta_t\Delta_i} + \frac{1}{t} \lr{\frac{\ln K}{4K^2\beta^2} + \frac{1}{K}}}.
\end{equation*}

We can then use that for all $t$, $\theta_t \in [1, K]$, which implies that $\eta_t \geq \sqrt{\frac{\ln K}{2tK}}$ and deduce that
\begin{align*}
    \sum_{t = \tmin(i)}^T  e^{-\frac{1}{2}t\eta_t\Delta_i}
    & \leq  \sum_{t = 1}^T  e^{-\frac{1}{2}t\sqrt{\frac{\ln K}{2tK}}\Delta_i} \\
    & = \sum_{t = 1}^T  e^{-\sqrt {t} \sqrt{\frac{\ln K}{8K}}\Delta_i} \\
    & \leq \frac{16 K}{\Delta_i^2 \ln K},
\end{align*}
where the last step follows from Lemma \ref{imp:TS18_lem_3}.
We use $\sum_{t = 1}^T \frac{1}{t} \leq \ln T$ to bound the second part of the sum, and we get:
\begin{equation}
    \sum_{i:\Delta_i > 0} \Delta_i \sum_{t = \tmin(i)} \E[q_{t, i}] \leq \ln T \lr{\frac{\ln K}{4K\beta^2} + 1}  + \sum_{i: \Delta_i > 0} \frac{16 K}{\Delta_i}. \label{proof_qti_var}
\end{equation}

The last term follows Proposition \ref{prop:eti_restated} with $\lambda = 1$, which gives:

\begin{align*}
    \sum_{t = \tmin}^T \E \lrs{\sum_{i \in S_t: \Delta_i \geq 0}  \Delta_i\E[\varepsilon_{t, i}] }
    & \leq \sum_{t = 1}^T \E \lrs{\sum_{i \in S_t: \Delta_i \geq 0} \frac{4\beta \ln t}{t \Delta_{i}}} + \frac{\ln T \ln K}{4K\beta^2} + 12K + 3.
\end{align*}    

 We recall that because $\sum_{i \in S_t: \Delta_i \geq 0}  \Delta_i\E[\varepsilon_{t, i}] \leq 1$ for all $t$, we can skip rounds that have the largest upper bound on $S_t$ by upper bounding the contribution of such rounds by $1$. 
Let $\tilde \alpha_n$ be the $n^{th}$ largest element in the set containing the strong independence number of  $G_t$, with $t \in [1, T]$.
Then we can upper bound the first term in Proposition \ref{prop:eti_restated} as:
\begin{align*}
    \sum_{t = 1}^T \E \lrs{\sum_{i \in S_t: \Delta_i \geq 0} \frac{4\beta \ln t}{t \Delta_{i}}}
     & \leq \inf_{0 \leq n \leq T} \lrc{\max_{S \subset V: |S| = \tilde \alpha_n
     } \lrc{ \sum_{i \in S: \Delta_i > 0} \frac{4\beta \ln^2 T}{\Delta_{i}}} + n}. 
\end{align*}
This gives
\begin{align*}
    \sum_{t = \tmin}^T \E \lrs{\sum_{i \in S_t: \Delta_i \geq 0}  \Delta_i\E[\varepsilon_{t, i}] }
    \leq & \inf_{0 \leq n \leq T} \lrc{\max_{S \subset V: |S| = \tilde \alpha_n
     } \lrc{ \sum_{i \in S: \Delta_i > 0} \frac{4\beta \ln^2 T}{\Delta_{i}}} + n} \\
     &+ \frac{  \ln T \ln K}{4K\beta^2} + 12K + 3. \numberthis{}  \label{proof_eti_var}
\end{align*} 

We finish the proof by summing on equations \eqref{proof_rtmin_var}, \eqref{proof_qti_var} and \eqref{proof_eti_var}.
\begin{align*}
     R_T \leq & \frac{180 \beta K^{3/2}}{\Dmin^2} \lr{\ln\lr{\frac{20\beta^2 K^{5/2}}{\Dmin^4}}}^2 + K + \max_{S \subset V: |S| = \tilde \alpha
     } \lrc{ \sum_{i \in S: \Delta_i > 0} \frac{4\beta \ln^2 T}{\Delta_{i}}} \\
    & +  \ln T \lr{\frac{\ln K}{2K\beta^2} + 1} + \sum_{i: \Delta_i > 0} \frac{16 K}{\Delta_i} + 12K + 3 \\
     \leq &  \inf_{0 \leq n \leq T} \lrc{\max_{S \subset V: |S| = \tilde \alpha_n
     } \lrc{ \sum_{i \in S: \Delta_i > 0} \frac{4\beta \ln^2 T}{\Delta_{i}}} + n} \\
     &+ 2 \ln T  +  \sum_{i: \Delta_i > 0} \frac{16 K}{\Delta_i} + \frac{181 \beta K^{3/2}}{\Dmin^2} \lr{\ln\lr{\frac{20\beta^2 K^{5/2}}{\Dmin^4}}}^2.
\end{align*}
\end{proof}

\end{document}